\def\endthebibliography{%
	\def\@noitemerr{\@latex@warning{Empty `thebibliography' environment}}%
	\endlist
}
\def\eqref#1{equation~\ref{#1}}
\def\1{\bm{1}}
\def\vs{{\bm{s}}}
\DeclareMathAlphabet{\mathsfit}{\encodingdefault}{\sfdefault}{m}{sl}
\SetMathAlphabet{\mathsfit}{bold}{\encodingdefault}{\sfdefault}{bx}{n}
\def\gD{{\mathcal{D}}}
\def\gN{{\mathcal{N}}}
\def\sE{{\mathbb{E}}}
\def\sR{{\mathbb{R}}}
\def\sZ{{\mathbb{Z}}}
\def\bW{{\bf W}}
\def\bz{{\bf z}}
\def\bx{{\bf x}}
\def\br{{\bf r}}
\def\bw{{\bf w}}
\def\eg{\emph{e.g.}} 
\def\ie{\emph{i.e.}} 
\def\etc{\emph{etc.}} 
\newtheorem{coll}{Corollary}
\newtheorem{thm}{Theorem}
\newtheorem{prop}{Proposition}
\newtheorem{lemma}{Lemma}
\def\vs{\emph{v.s.}}
\definecolor{citecolor}{HTML}{0071bc}
\newcommand{\tabincell}[2]{\begin{tabular}{@{}#1@{}}#2\end{tabular}}
\definecolor{mypink}{cmyk}{0, 0.7808, 0.4429, 0.1412}
\def\jing{\textcolor{black}}
\def\jingr{\textcolor{black}}
\def\jingrs{\textcolor{black}}
\def\revise{\textcolor{black}}
\def\liu{\textcolor{black}}
\def\chenp{\textcolor{black}}
\def\guo{\textcolor{black}}
\def\hk{\textcolor{black}}
\definecolor{seagreen}{RGB}{62,187,163}
\definecolor{ZhuangGreen}{HTML}{d4fcd8}
\def\zheng{\textcolor{black}}
\newcommand{\methodfullname}{Single-path Bit Sharing\xspace}
\newcommand{\methodshortname}{SBS\xspace}
\def\mytitle{Single-path Bit Sharing for Automatic Loss-aware Model Compression}
\begin{document}

\title{\mytitle}

\author{
	Jing Liu, Bohan Zhuang, Peng Chen, Chunhua Shen, Jianfei Cai, Mingkui Tan$^\dagger$
	\IEEEcompsocitemizethanks{
	    \IEEEcompsocthanksitem Jing Liu is with the School of Software Engineering, South China University of Technology and also with the Key Laboratory of Big Data and Intelligent Robot (South China University of Technology), Ministry of Education. E-mail: seliujing@mail.scut.edu.cn.
		\IEEEcompsocthanksitem Mingkui Tan is with the School of Software Engineering, South China University of Technology. Mingkui Tan is also with the Pazhou Laboratory, Guangzhou, China. E-mail: mingkuitan@scut.edu.cn.
		\IEEEcompsocthanksitem  Peng Chen is with the School of Computer Science, the University of Adelaide, Australia. E-mail: blueardour@gmail.com.
		\IEEEcompsocthanksitem Bohan Zhuang, and Jianfei Cai are with the Faculty of Information Technology, Monash University, Australia. E-mail: \{bohan.zhuang, jianfei.cai\}@monash.edu.
		\IEEEcompsocthanksitem Chunhua Shen is with the College of Computer Science and Technology, Zhejiang University, Hangzhou, China. E-mail: chhshen@gmail.com.
        \IEEEcompsocthanksitem $^\dagger$ Corresponding author.
	}
}

\IEEEtitleabstractindextext{%

\begin{abstract}
\justifying
Network pruning and quantization are proven to be effective ways for deep model compression. 
To obtain a highly compact model, most methods first perform network pruning and then conduct  
 quantization based on the pruned model. 
However, this strategy may ignore that the pruning and quantization would affect each other and thus performing them separately may lead to sub-optimal performance. 
To address this, performing pruning and quantization jointly is essential. 
Nevertheless, how to make a trade-off between pruning and quantization is non-trivial.
Moreover, existing compression methods often rely on some pre-defined compression configurations (\ie, pruning rates or bitwidths). 
Some attempts have been made to search for optimal configurations, which however may take unbearable optimization cost.
\jingr{To address these issues, we devise a simple yet effective method named \methodfullname (\methodshortname) for automatic loss-aware model compression. 
To this end, we consider the network pruning as a special case of quantization and provide a unified view for model pruning and quantization.
We then introduce a single-path model to encode all candidate compression configurations, 
where a high bitwidth value will be decomposed into the sum of a lowest bitwidth value and a series of re-assignment offsets. Relying on the single-path model, we introduce learnable binary gates to encode the choice of configurations and learn the binary gates and model parameters jointly.
More importantly, the configuration search problem can be transformed into a subset selection problem, which helps to significantly reduce the optimization difficulty and computation cost.
In this way, the compression configurations of each layer and the trade-off between pruning and quantization can be automatically determined.
}
Extensive experiments on CIFAR-100 and ImageNet show that \methodshortname significantly reduces computation cost while achieving promising performance. 
\jing{For example, our \methodshortname compressed MobileNetV2 achieves 22.6$\times$ Bit-Operation (BOP) reduction with only 0.1\% drop in the Top-1 accuracy.}
\end{abstract}

\begin{IEEEkeywords}
\jing{Network Quantization, Network Pruning, Bit Sharing, Loss-aware Model Compression.
}
\end{IEEEkeywords}}

\maketitle

\IEEEdisplaynontitleabstractindextext

\IEEEpeerreviewmaketitle

\section{Introduction}
Deep neural networks (DNNs)~\cite{lecun1989backpropagation} have achieved great success in many challenging computer vision tasks, including image classification~\cite{krizhevsky2012imagenet,he2016deep,gao2021res2net}, object detection~\cite{ren2016faster,lin2017focal,tian2019fcos}, image generation~\cite{goodfellow2014generative,guo2019auto,cao2020lcc}, and video analysis~\cite{simonyan2014two,wang2016temporal,zeng2019breaking}.
\jing{However, a deep model usually has a large number of parameters and consumes enormous computational resources, which presents great obstacles for many
applications, especially on resource-constraint devices, such as smartphones.}
To reduce the number of parameters and computational overhead, many methods~\cite{he2019filter,zhou2016dorefa,zhuang2018discrimination} have been proposed to perform model compression by removing the redundancy while maintaining the performance.

\begin{figure*}[t]
    \centering
    \subfigure[Multi-path search \jing{scheme~\cite{wu2018mixed}}.]{
        \includegraphics[width=0.42\linewidth]{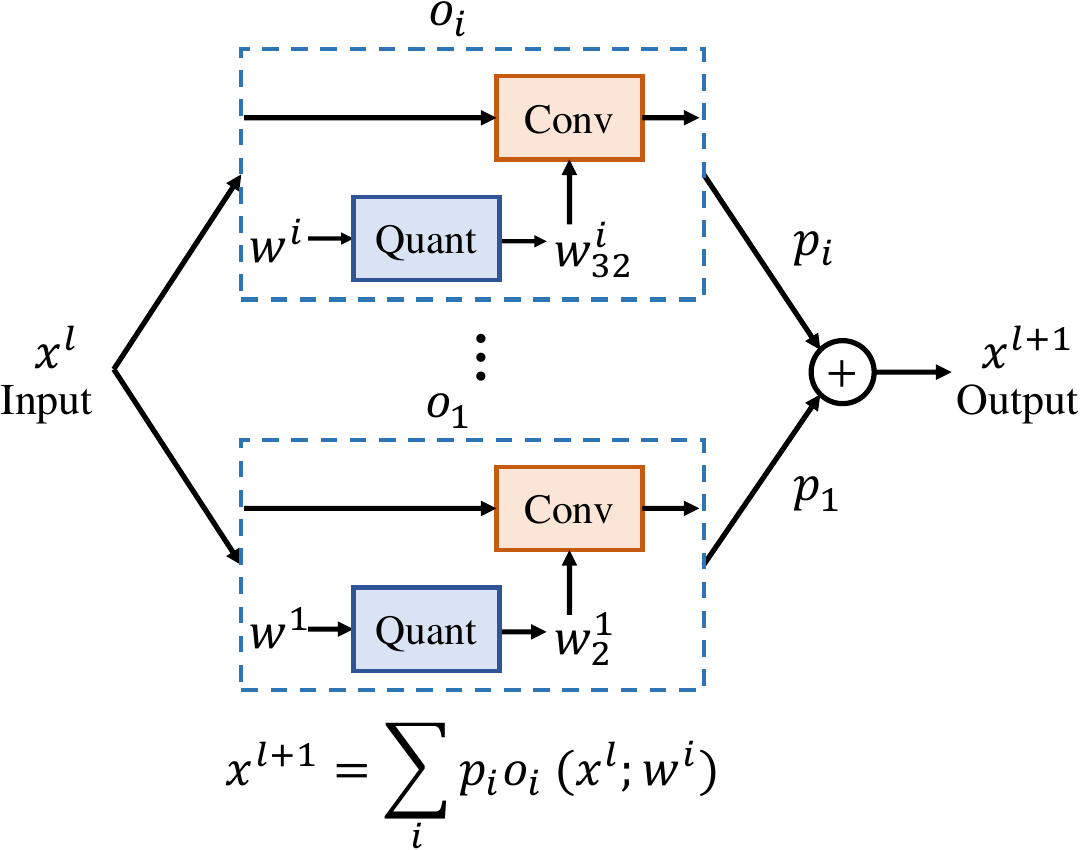}
        \label{fig:diagram_multi_path}
    }
    \subfigure[\jing{Single-path search \jing{scheme} (Ours).}]{
        \includegraphics[width=0.52\linewidth]{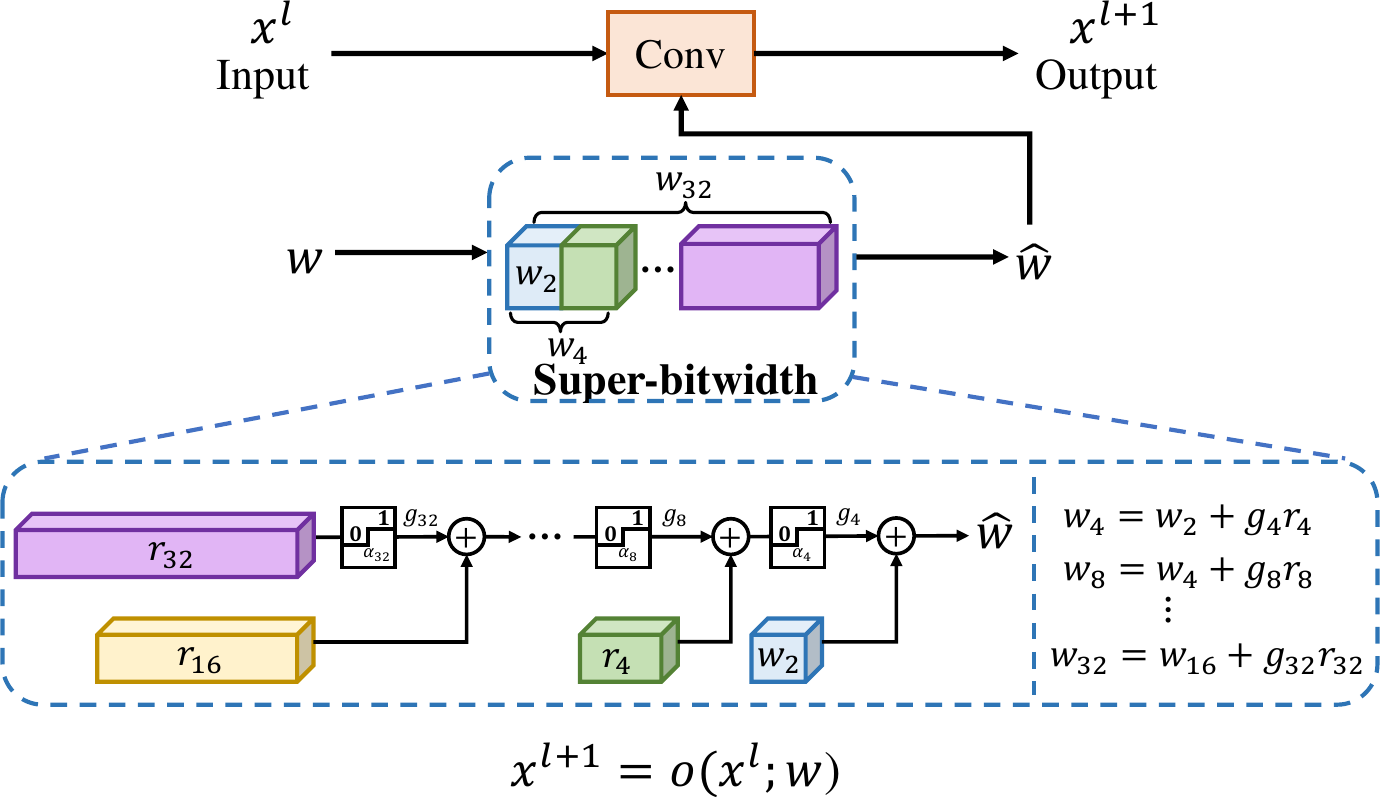}
        \label{fig:diagram_signle_path}
    }
    \caption{Multi-path \vs~single-path search scheme. 
    \jing{For clarity, we show the illustrations of weight quantization only.
    }
    (a) \liu{Multi-path search scheme~\cite{wu2018mixed}: \jingr{it} represents each candidate configuration as a separate path and formulates the mixed-precision quantization problem as a path selection problem, which gives rise to a huge number of trainable parameters and high computational overhead when the search space becomes large}. $o_i$ and $p_i$ are the convolutional operation and probability for the $i$-th configuration, respectively. $x^l$ indicates the input feature maps for layer $l$, $w^i$ is the weight for the $i$-th configuration and $w^i_k$ is the $k$-bit quantized version of $w^i$. 
    (b) Single-path search scheme (Ours): \jingr{\jingr{it} only maintains a single path model and represents each candidate configuration as a subset of a super-bitwidth. Therefore, the mixed-precision quantization problem can be formulated as a subset selection problem,} which significantly reduces the number of parameters, computation cost, and optimization difficulty. The super-bitwidth is the highest bitwidth (\ie, 32-bit) in the search space, $w$ is the weight for the convolutional operation, $w_k$ is the $k$-bit quantized version of $w$, $r_k$ is the re-assignment offset and $g_k$ is the corresponding binary gate to control the bitwidth decision.
    }
    \label{fig:mutl_path_vs_single_path}
\end{figure*}

\jing{In recent years, we have witnessed the remarkable progress of model compression methods.} Specifically, network pruning~\cite{he2017channel, he2019filter} removes the uninformative modules and network quantization~\cite{zhou2016dorefa, hubara2016binarized} 
maps the full-precision values to low-precision ones. To obtain a highly compact model, \jing{most methods~\cite{han2016deep,LouizosUW17,ye2018unified} first perform  pruning and then conduct quantization based on the pruned model.}
\jingr{However, performing pruning and quantization {separately may lead to suboptimal performance as it ignores that the two procedures often affect each other.}} 
\jingr{Therefore, it is essential and urgent to perform pruning and quantization jointly in practical applications, which, however, is nontrivial and may incur some new challenges.}

First, finding the optimal trade-off between pruning and quantization is non-trivial \jingrs{as they would affect each other.}
For example, if a model is under-pruned, we can apply aggressive network quantization to the pruned model to achieve a high compression ratio.
\jing{In contrast, if a model is over-pruned, the resulting model is more sensitive to the quantization noise. In this case, the quantization bitwidth of the pruned model has to be high to preserve the performance.}

Second, 
to achieve better performance, one may assign different configurations \jingr{(\ie, pruning rates and bitwidths)} according to the layer’s contribution to the accuracy and efﬁciency of a network. 
However, the search space of model compression grows exponentially with the increasing number of compression configurations. 
To handle this, 
existing differentiable methods~\cite{wu2018mixed,cai2020rethinking} 
explore the exponential search space using gradient-based optimization.
As shown in Fig.~\ref{fig:diagram_multi_path}, these methods first construct a multi-path \jing{network} \revise{where each path denotes a candidate compression configuration.}
Then, the optimal path is selected during training. However, 
when the search space becomes large, the multi-path scheme \jingr{will lead to} a huge amount of parameters and \jingr{incur high computation cost}. \jingr{As a result, the optimization of the compressed network could be very challenging due to the parameter explosion.}

In this paper, \revise{we propose to address the above challenges by devising a simple yet effective method named \methodfullname (\methodshortname) for automatic loss-aware model compression. }
\jingr{To this end, we provide a unified view for the pruning and quantization, where pruning is reformulated as a special case of quantization. We then introduce a novel single-path model to encode all bitwidths in the search space. As shown in Fig.~\ref{fig:diagram_signle_path}, 
\revise{we theoretically show that}
the quantized values of a high bitwidth can be shared with those of low bitwidths under some conditions. Therefore, we are able to decompose a quantized representation into the sum of the lowest bitwidth representation and a series of re-assignment offsets. In this case, the quantized values of different bitwidths are different subsets of those of super-bitwidth (\ie, the highest bitwidth). 
Compared with the multi-path scheme~\cite{wu2018mixed}, 
we only need to maintain a single path model, which requires much fewer model parameters. \revise{More importantly, we are able to transform 
the mixed-precision quantization problem 
into a subset selection problem, which significantly reduces the computation cost and alleviates the optimization difficulty.}
Relying on the single-path model, \revise{we further introduce learnable binary gates} to encode the choice of bitwidth and learn the binary gates and network parameters jointly. 
In this way, the configurations of each layer can be automatically determined and the trade-off between pruning and quantization can be optimized.
}

Our main contributions are summarized as follows:
\begin{itemize}
\item We devise a novel single-path scheme that \jing{encapsulates multiple configurations} in a unified single-path framework, \jingr{which requires fewer model parameters compared with multi-path scheme.}

\item \jingr{
We transform the model compression into a subset selection problem and explicitly share the quantized values among various bitwidths. As a result, \methodshortname enables the candidate configurations to learn jointly rather than separately and thus significantly reduces the computation cost and alleviates the optimization difficulty.
}

\item
We formulate the quantized representation as a gated combination of the lowest bitwidth representation and a series of re-assignment offsets.
\jingr{By training the binary gates and network parameters, the configuration of each layer and the trade-off between pruning and quantization can be automatically determined. }

\item
We evaluate our \methodshortname on CIFAR-100 and ImageNet over various network architectures. Extensive experiments show that the proposed method achieves the state-of-the-art performance. For example, on ImageNet, our \methodshortname compressed MobileNetV2 achieves 22.6$\times$ Bit-Operation (BOP) reduction with only 0.1\% performance decline in terms of the Top-1 accuracy.
\end{itemize}

\section{Related Work}
\label{sec:related_work}

\noindent\textbf{Network quantization.}
Network quantization represents the weights, activations and even gradients with low precision to yield compact DNNs. With low-precision integers~\cite{zhou2016dorefa} or power-of-two representations~\cite{Li2020Additive}, the heavy matrix multiplications can be replaced by efficient bitwise operations, leading to much faster test-time inference and lower \jingr{resource consumption. \revise{According to the quantization bitwidth, existing quantization methods can be roughly categorized into two categories, namely, fixed-point quantization~\cite{zhou2016dorefa,zhang2018lq,jung2019learning,Esser2020LEARNED,kim2021bert,liu2021posttraining,shen2020q,han2021improving} and binary quantization~\cite{hubara2016binarized,rastegari2016xnor,Zhuang_2019_CVPR,liu2018bi,qin2020forward,qin2021bibert,liu2021adam}.}}
To improve the quantization performance, extensive methods have been proposed to learn accurate quantizers~\cite{jung2019learning,zhang2018lq,choi2018pact,cai2017deep,Esser2020LEARNED,Bhalgat_2020_CVPR_Workshops}. Specifically, given a convolutional layer, let $x$  and $w$ be the output activations of the previous layer and the weight parameters of given layer, respectively. First, following~\cite{jung2019learning,choi2018pact,chen2020aqd}, one can normalize $x$  and $w$ into scale [0, 1] by $T_x(\cdot)$ and $T_w(\cdot)$, respectively:
\begin{align}
    z_x &= T_x(x) = \mathrm{clip}\left(\frac{x}{v_x}, 0, 1\right),  \\
    z_w &= T_w(w) = \frac{1}{2} \left( \mathrm{clip}\left(\frac{w}{v_w}, -1, 1\right) + 1 \right),
\end{align}
where $v_x$ and $v_w$ are trainable quantization intervals indicating the range of weights and activations to be quantized. \jing{Here, the function $\mathrm{clip}\left(v, v_\mathrm{low}, v_\mathrm{up}\right) = {\rm min}({\rm max}(v,v_\mathrm{low}),v_\mathrm{up})$ clips any number $v$ into the range $[v_\mathrm{low}, v_\mathrm{up}]$}. Then, one can apply the following function to quantize the normalized activations and parameters, namely  $z_x \in [0, 1]$ and $z_w  \in [0, 1]$, to discretized ones:
\begin{equation}
    \label{eq:discretizer}
    D(z, s) = s \cdot \mathrm{round}\left(\frac{z}{s}\right),
\end{equation}
where $s$ denotes the normalized step size, \jing{
$\mathrm{round}(x)=\lceil x-0.5\rceil$ returns the nearest integer of a given value $x$ and $\lceil \cdot \rceil$ is the ceiling function.} Typically, for $k$-bit quantization, the normalized step size $s$ can be computed by
\begin{equation}
    \label{eq:step_size}
    s = \frac{1}{2^k - 1}.
\end{equation}
Last, the quantized activations and weights can be obtained by $Q_x(x) = T_x^{-1}(D(z_x,s)) = v_x \cdot D(z_x,s)$ and $Q_w(w) = T_w^{-1}(D(z_w,s)) = v_w \cdot (2 \cdot D(z_w,s) -  1)$,
where $T_x^{-1}(\cdot)$ and $T_w^{-1}(\cdot)$ denote the inverse functions of $T_x(\cdot)$ and $T_w(\cdot)$, respectively. 
\jing{
In general, the function $D(\cdot, \cdot)$ is non-differentiable.  Following~\cite{zhou2016dorefa,hubara2016binarized}, one can use the straight through estimation (STE)~\cite{bengio2013estimating} to approximate the gradient of $D(\cdot, \cdot)$ by the identity mapping, namely, $\partial D (z, s)/ \partial z \approx 1$.} 

To reduce the optimization difficulty introduced by non-differentiable discretization, several methods have been proposed to approximate the gradients of $D(\cdot, \cdot)$~\cite{ding2019regularizing, louizos2019relaxed, Zhuang_2020_CVPR}.
Moreover, \revise{most previous works assign the same bitwidth for all layers~\cite{zhou2016dorefa,zhuang2018towards,Zhuang_2019_CVPR,jung2019learning,jin2019towards,Li2020Additive,Esser2020LEARNED,qin2022distribution,xu2020generative,chen2020aqd}.
\chenp{Though} attractive for simplicity, setting a uniform precision places no guarantee on optimizing network performance, since different layers have different redundancy and arithmetic intensity.}
Therefore, \jingr{several studies 
\chenp{proposed} 
mixed-precision quantization~\cite{wang2019haq,dong2019hawq,wu2018mixed,Uhlich2020Mixed,yao2021hawq,dong2019hawqv2,cai2020rethinking,chen2021towards,wang2021generalizable} that assigns} different bitwidths according to the redundancy of each layer. 
\jing{In this paper, based on the proposed single-path bit sharing model, we devise an approach that efficiently searches for appropriate bitwidths for different layers through gradient-based optimization. \revise{Apart from quantization, our \methodshortname also conducts pruning and automatically learns the trade-off between them, which often results in compact models with better performance.}}

\noindent\textbf{\jingr{Neural architecture search (NAS) and pruning.}}
\jingr{NAS} aims to automatically design efficient architectures.  \jingr{According to the search algorithm, existing methods either based on reinforcement learning~\cite{NASRL2017,pham2018efficient,yong2019nat}, evolutionary search~\cite{real2019regularized,nsganetv22020,yang2020cars} or gradient-based methods~\cite{liu2018darts,cai2018proxylessnas,wu2019fbnet}. In particular, gradient-based NAS has gained increased popularity, where the search space is relaxed to be continuous, allowing efficient architecture search using gradient descent. Depending on whether each operation can be added via a separate path or not, the search space can be categorized into multi-path design~\cite{liu2018darts,cai2018proxylessnas} and single-path formulation~\cite{stamoulis2019single,stamoulis2020single}.} 

\jing{While prevailing methods optimize the network topology, we focus on searching optimal pruning and quantization configurations for a given network.} Moreover, network pruning can be treated as \jingr{fine-grained NAS~\cite{Guo_2020_CVPR,Liu_2019_ICCV,dong2019network}} which removes redundant modules to reduce the model size and accelerate the run-time inference speed, \jing{giving rise to methods based on \jingr{weight pruning~\cite{han2016deep, guo2016dynamic,liu2018rethinking,han2015learning}, 
 filter pruning~\cite{he2017channel,zhuang2018discrimination,luo2019thinet,he2019filter},
or layer pruning~\cite{NIPS2016_41bfd20a,huang2018data,chen2019shallowing,Lin_2019_CVPR}, \etc~}Apart from filter pruning, we also perform network quantization to obtain more compact networks.}

\noindent\textbf{AutoML for model compression.}
\jing{
Recently, much effort has been devoted to automatically determining the \jingr{pruning rate~\cite{tung2018clip,dong2019network,liu2019metapruning} or the 
bitwidth~\cite{lou2019autoqb,cai2020rethinking,wu2018mixed} of each layer, either based on reinforcement learning~\cite{he2018amc,wang2019haq,lou2019autoqb}, evolutionary search~\cite{liu2019metapruning,wang2020apq}, gradient optimization~\cite{wu2018mixed,cai2020rethinking,gao2021network}, \etc~}To increase the compression ratio, several methods have been proposed to jointly optimize pruning and quantization strategies. \jingr{In particular, some work only support weight quantization~\cite{tung2018clip,ye2018unified,hu2021opq} or use fine-grained pruning~\cite{yang2020automatic,hu2021opq}}. However, the resultant models cannot be implemented efficiently on edge devices. To handle this, several methods~\cite{wu2018mixed,wang2020apq,wang2020diff} have been proposed to consider filter pruning, weight quantization, and activation quantization jointly. Compared with these methods, we carefully design the compression search space by 
\zheng{sharing the quantized values between different candidate configurations,}
which significantly reduces the number of parameters, search cost, and optimization difficulty. 
Compared with those methods that share the similarities of using quantized residual errors~\cite{chen2010approximate,gong2014compressing,li2017performance,van2020bayesian}, our proposed method recursively uses quantized residual errors to decompose a quantized representation into a set of candidate bitwidths and parameterize the bitwidth selection via a series of binary gates.} 
\jing{
Compared with Bayesian Bits~\cite{van2020bayesian}, our \methodshortname differs in several aspects: 1) We theoretically verify the theorem of quantization decomposition and it can be applied to non-power-of-two bitwidths (See Section~\ref{sec:comparison_power_of_two_non_power_of_two}),
which is a general case of the one in Bayesian Bits. 2) The optimization problems are different. Specifically, we formulate model compression as a single-path subset selection problem while Bayesian Bits casts the optimization of the binary gates into a variational inference problem that requires more relaxations and hyperparameters. 3) Our compressed models with less or comparable BOPs outperform those of Bayesian Bits by a large margin on ImageNet (See Table~\ref{table:results_on_imagenet}).}

\section{Proposed Method}
\jing{
Given a pre-trained model, we focus on 
automatic model compression for pruning and quantization jointly, which poses two new challenges.
First, finding the optimal trade-off between pruning and quantization is non-trivial since they may affect each other. Second, to determine the optimal configurations (\eg, pruning rates and bitwidths) for each layer, one may consider different configurations as different paths and reformulate the configuration search problem as a path selection problem~\cite{wu2018mixed}, as shown in Fig.~\ref{fig:diagram_multi_path}. However, when the search space becomes large, it suffers from a huge number of parameters and high computation cost. \jingr{Moreover, different candidate configurations are trained separately and thus the optimization of the compressed model \jingr{may become} more challenging.}
}

\jingr{In this paper, we first provide a unified view for model compression so that
we can perform pruning and quantization jointly. 
Specifically, we consider network pruning as a special case of quantization and formulate the joint pruning and quantization problem as a mixed-precision quantization problem. 
We then propose a \methodfullname (\methodshortname) scheme to encode all candidate configurations into a single path model, as shown in Fig.~\ref{fig:diagram_signle_path}. It is worth mentioning that our \methodshortname has much fewer model parameters than the most related work~\cite{wu2018mixed} and thus requires less computation cost.
Moreover, we are able to cast the mixed-precision problem into a subset selection problem and alleviate the optimization difficulties.}
\jing{
In the following subsections, we will illustrate each component of our method.
}

\vspace{-0.05in}
\subsection{Single-path bit sharing decomposition}
\label{sec:problem_definition}
To illustrate the single-path bit sharing decomposition, we begin with an example of 2-bit quantization for $z \in \{ z_x, z_w \}$.
Specifically, we use the following equation to quantize $z$ to $2$-bit using Eqs.~(\ref{eq:discretizer}) and~(\ref{eq:step_size}):
\begin{equation}
    \label{eq:2_bit_quantization}
    z_2 = D(z, s_2), \quad ~s_2=\frac{1}{2^2-1},
\end{equation}
where $z_2$ and $s_2$ are the quantized value and the step size of 2-bit quantization, respectively. 
Due to the large step size, the residual error $z - z_2 \in [-s_2/2, s_2/2]$ may be large and results 
in a significant performance decline. \jing{To reduce the residual error, an intuitive way is to use a smaller step size, which indicates that we quantize $z$ to a higher bitwidth.} Since the step size $s_4 = 1/(2^4-1)$ in 4-bit quantization is a divisor of the step size $s_2$ in 2-bit quantization, the quantized values of 2-bit quantization are shared with those of 4-bit quantization.
\jing{Based on 2-bit quantization, the 4-bit counterpart 
introduces}
additional unshared quantized values. In particular, if $z_2$ has zero residual error, then 4-bit quantization maps $z$ to the shared quantized values (\ie, $z_2$). \jing{In contrast, if $z_2$ has non-zero residual error, 4-bit quantization is likely to map $z$ to the unshared quantized values.}
In this case, 4-bit quantization can be regarded as performing quantized value re-assignment based on $z_2$. Such a re-assignment process can be formulated as
\begin{equation}
    \label{eq:4_bit_decomposition}
    z_4 = z_2 + r_4,
\end{equation}
where $z_4$ is the 4-bit quantized value and $r_4$ is the re-assignment offset based on $z_2$. To ensure that the results of re-assignment fall into the 4-bit quantized values, the re-assignment offset $r_4$ must be an integer multiple of the step size $s_4$.
Formally, $r_4$ can be computed by performing 4-bit quantization on the residual error of $z_2$:
\begin{equation}
    \label{eq:2_bit_residual_quantization}
    r_4 = D(z - z_2, s_4), \quad ~s_4 = \frac{1}{2^4 - 1}.
\end{equation}
According to Eq.~(\ref{eq:4_bit_decomposition}), a 4-bit quantized value can be decomposed into the 2-bit representation and its re-assignment offset. Similarly, an 8-bit quantized value can also be decomposed into the 4-bit representation and its corresponding re-assignment offset. In this way, we can generalize the idea of decomposition to arbitrary effective bitwidths as follows.
\begin{thm}
\textbf{\emph{(Quantization Decomposition)}} 
\label{prop:bit_decompistion}
    Let $z \in \left[ 0, 1 \right]$ be a normalized full-precision input, and $\{b_j\}_{j=1}^{K} $ be a sequence of candidate bitwidths. If $b_j$ is an integer multiple of $b_{j-1}$, \ie, $b_{j} \small{=} \gamma_j b_{j-1}(j>1)$, where $\gamma_j\in \sZ^+ \backslash \{1\}$ is a multiplier, then the quantized approximation $z_{b_K}$ can be decomposed as: 
        \begin{equation}
        \begin{aligned}
        \label{eq:bit_decompistion}
        &z_{b_K} = z_{b_1} + \sum_{j=2}^{K}r_{b_j}, ~\\ &\mathrm{where}~r_{b_{j}} = D(z - z_{b_{j-1}}, s_{b_{j}}), ~\\
        &~~~~~~~~~~~z_{b_j} = D(z , s_{b_j}), ~\\
        &~~~~~~~~~~~s_{b_j} =\frac{1}{2^{b_j} - 1}.
        \end{aligned}
        \end{equation}
\end{thm}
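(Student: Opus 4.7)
My plan is to prove the decomposition by induction on $K$, reducing the whole statement to a single ``one-step'' lemma that upgrades a $b_{j-1}$-bit quantization to a $b_j$-bit quantization by adding a residual re-assignment term. Concretely, the key claim I would isolate and prove first is:
\begin{equation*}
z_{b_j} \;=\; z_{b_{j-1}} + D\bigl(z - z_{b_{j-1}},\, s_{b_j}\bigr)
\;=\; z_{b_{j-1}} + r_{b_j},
\end{equation*}
for every $j \ge 2$. Once this per-step identity is established, the full statement follows by a trivial telescoping induction on $j = 2, \dots, K$ starting from $z_{b_1}$.

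\paragraph*{The number-theoretic ingredient.} The per-step identity relies on the fact that $s_{b_j}$ is a \emph{divisor} of $s_{b_{j-1}}$, i.e.\ that $s_{b_{j-1}}/s_{b_j} = (2^{b_j}-1)/(2^{b_{j-1}}-1)$ is a positive integer. Because $b_j = \gamma_j b_{j-1}$ with $\gamma_j \in \sZ^+$, I would use the elementary factorization
\begin{equation*}
2^{\gamma_j b_{j-1}} - 1 \;=\; \bigl(2^{b_{j-1}}-1\bigr)\sum_{i=0}^{\gamma_j-1} 2^{i\,b_{j-1}},
\end{equation*}
which yields the divisibility immediately. This is where the hypothesis $b_j = \gamma_j b_{j-1}$ enters in an essential way; without it the step sizes would not be rationally commensurable in the right direction.

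\paragraph*{Proving the per-step identity.} Write $M := s_{b_{j-1}}/s_{b_j} \in \sZ^+$, so that $z_{b_{j-1}} = s_{b_{j-1}}\,\mathrm{round}(z/s_{b_{j-1}})$ is an integer multiple of $s_{b_j}$; call this integer $n$, so $z_{b_{j-1}} = n\, s_{b_j}$. Then
\begin{equation*}
z_{b_{j-1}} + r_{b_j} \;=\; n\,s_{b_j} + s_{b_j}\,\mathrm{round}\!\left(\tfrac{z - n\,s_{b_j}}{s_{b_j}}\right) \;=\; s_{b_j}\Bigl(n + \mathrm{round}\bigl(z/s_{b_j} - n\bigr)\Bigr).
\end{equation*}
Applying the integer-shift property of $\mathrm{round}(\cdot)$, namely $\mathrm{round}(x - n) = \mathrm{round}(x) - n$ for $n \in \sZ$ (which follows directly from the definition $\mathrm{round}(x) = \lceil x - 0.5\rceil$), the right-hand side collapses to $s_{b_j}\,\mathrm{round}(z/s_{b_j}) = D(z, s_{b_j}) = z_{b_j}$. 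This is the crux of the proof.

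\paragraph*{Main obstacle.} The only real obstacle is the careful handling of the rounding operator: one must be sure that $\mathrm{round}(x - n) = \mathrm{round}(x) - n$ holds for every integer $n$ under the convention $\mathrm{round}(x) = \lceil x - 0.5 \rceil$ used in the paper. This is a one-line check from the translation invariance of the ceiling function, but it is the step that makes the telescoping work. Everything else is either a standard induction or the elementary identity $2^{ab}-1 = (2^a-1)\sum_{i=0}^{b-1} 2^{ai}$. With the per-step identity in hand, the induction is simply $z_{b_K} = z_{b_{K-1}} + r_{b_K} = \bigl(z_{b_1} + \sum_{j=2}^{K-1} r_{b_j}\bigr) + r_{b_K}$, yielding the claimed decomposition.
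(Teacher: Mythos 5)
Your proof is correct, and it follows the same overall skeleton as the paper's: both reduce the theorem to the one-step identity $z_{b_j} = z_{b_{j-1}} + r_{b_j}$ (the paper states this as a separate lemma) and then obtain the full decomposition by telescoping from $j=2$ to $K$. Both also hinge on the same divisibility fact, that $2^{b_j}-1$ is a multiple of $2^{b_{j-1}}-1$ when $b_j = \gamma_j b_{j-1}$, so that the coarse grid is a subset of the fine grid. Where you genuinely diverge is in how the one-step identity is established. The paper writes out the two quantization grids $\{A_m\}$ and $\{B_n\}$ explicitly, locates $z$ inside an interval $[A_i, A_{i+1}]$ and an interval $[B_t, B_{t+1}]$, and argues by cases on which half of each interval $z$ falls into. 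Your argument is purely algebraic: since $z_{b_{j-1}} = n\,s_{b_j}$ for an integer $n$, the translation invariance $\mathrm{round}(x-n)=\mathrm{round}(x)-n$ (immediate from $\mathrm{round}(x)=\lceil x-0.5\rceil$) collapses $z_{b_{j-1}} + D(z - z_{b_{j-1}}, s_{b_j})$ to $D(z,s_{b_j})$ in one line. This buys you two things: you avoid the case enumeration entirely, and you handle the tie-breaking at interval midpoints uniformly, since the same rounding convention is applied on both sides of the identity rather than being reasoned about geometrically. The paper's grid-based argument, on the other hand, makes the ``shared quantized values'' picture that motivates the whole method more visually explicit. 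Both proofs are valid; yours is the tighter one.
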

\jingr{{From Theorem \ref{prop:bit_decompistion},} the quantized representation $z_{b_K}$ can be decomposed into the sum of the lowest bitwidth representation $z_{b_1}$ and a series of recursive re-assignment offsets.}
\jingr{In this way, we can gradually reduce the error brought from  quantization by introducing the recursive re-assignment offsets.}
\jing{To measure the quantization error,}
we introduce the following corollary.

\begin{figure}[!t]
	\centering
	\includegraphics[width = 0.61\linewidth]{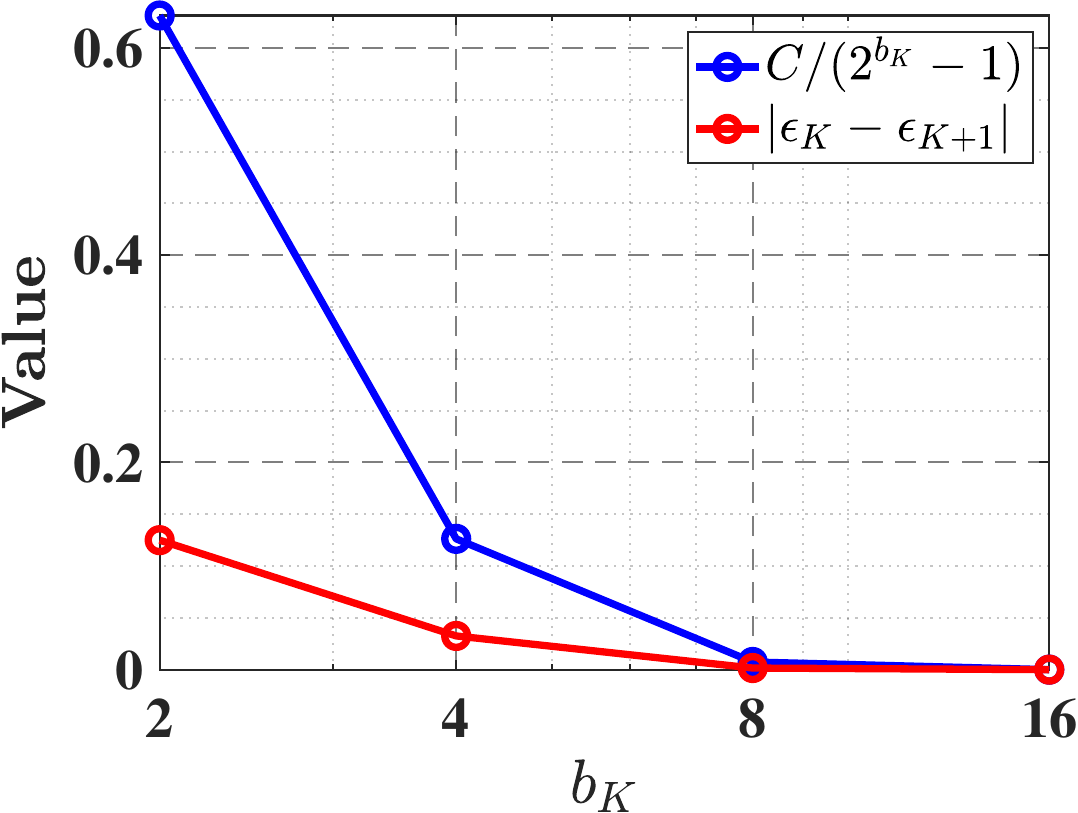}
	\caption{\jing{The normalized quantization error change and error bound \vs~bitwidth. The normalized quantization error change (red line) is bounded by $C/(2^{b_{K}}\small{-}1)$ (blue line).}
	}
	\label{fig:quantization_error}
\end{figure}

\begin{coll}
\textbf{\emph{(Normalized  Quantization Error Bound)}} 
\label{coll:bit_error}
Given $\bz \in [0, 1]^d$ being a normalized full-precision vector, $\bz_{b_K} $ being its quantized vector with bitwidth $b_K$, where $d$ is the cardinality of $\bz$. Let $\epsilon_K = \frac{\|\bz - \bz_{b_K}\|_1}{\|\bz\|_1}$ be the normalized quantization error, then the following error bound w.r.t. $K$ holds:
\begin{equation}
    \begin{aligned}
    |\epsilon_K \small{-} \epsilon_{K+1}|\leq \frac{C}{2^{b_{K}}\small{-}1},
   \end{aligned}
\end{equation}
where $~C \small{=} \frac{d}{\|\bz\|_1}$ is a constant.
\end{coll}
\jing{
To empirically demonstrate Corollary~\ref{coll:bit_error}, we perform quantization on a random toy data $\bz \in [0, 1]^{100}$ and show the normalized quantization error change $|\epsilon_K \small{-} \epsilon_{K+1}|$ and error bound $\frac{C}{2^{b_{K}}\small{-}1}$ in Fig.~\ref{fig:quantization_error}. From the results, the normalized quantization error change decreases quickly as the bitwidth increases and is bounded by $C/(2^{b_{K}}\small{-}1)$. 
We put the proof of Corollary~\ref{coll:bit_error} in the supplementary material.
}

\jing{Note that in Theorem~\ref{prop:bit_decompistion}, both the smallest bitwidth $b_1$ and the multiplier $\gamma_j$ can be set to arbitrary appropriate integer values (\eg, 2, 3, \etc).
To obtain a hardware-friendly compressed network\footnote{More details can be found in the supplementary material.}, we set $b_1$ and $\gamma_j$ to 2, which ensures that all the decomposition bitwidths are power-of-two. 
Moreover, since the normalized quantization error change is small when the bitwidth is greater than 8 as indicated in Corollary~\ref{coll:bit_error} and Fig.~\ref{fig:quantization_error},
we only consider those bitwidths that are not greater than 8-bit (\ie, $b_j \in \{2,4,8\}$) in our paper.} \jingr{An empirical study on the non-power-of-two bitwidths can be found in Section~\ref{sec:comparison_power_of_two_non_power_of_two}.} 

\subsection{Single-path bit sharing model compression}
\label{sec:bit_sharing_compression}
\subsubsection{Binary gate for quantization}
In a neural network, different layers have diverse redundancy and contribute differently to the accuracy and efficiency of the network.
To determine the bitwidth for each layer,
we introduce a layer-wise binary quantization gate $g_{b_j} \in \{0, 1\} (j > 1)$ on each of the re-assignment offsets in Eq.~(\ref{eq:bit_decompistion}) to encode the choice of the quantization bitwidth $b_j$ as:
\begin{equation}
\begin{aligned}
\label{eq:gate_bit_decomposition}
g_{b_j} &= H\left( || \bz - \bz_{b_{j-1}} ||_1 -  \alpha_{b_{j}} \right), \\
z_{b_K} = z_{b_1} + g_{b_2}& \big( r_{b_2} + \cdots + g_{b_{K-1}} \big (r_{b_{K-1}} + g_{b_K} r_{b_K}  \big) \big) ,
\end{aligned}
\end{equation}
\jingr{
where
$\alpha_{b_{j}} (j > 1)$ is a layer-wise threshold that controls the choice of bitwidth and
$H(A)$ is the step function which returns 1 if $A \ge 0$ and returns 0 otherwise.
Here, 
we use the quantization error 
to determine the choice of bitwidth.
Specifically, if the quantization error is greater than $\alpha_{b_{j}}$, we activate the corresponding quantization gate to increase the bitwidth to reduce the residual error, and vice versa.}

\subsubsection{Binary gate for pruning}
Note that in Eq.~(\ref{eq:gate_bit_decomposition}), we can also consider filter pruning as a special case of quantization, \jing{which makes it possible to perform network pruning and quantization jointly}. To avoid the prohibitively large filter-wise search space, we propose to divide the filters into groups based on channel indices and consider the group-wise \liu{sparsity} instead. 
To be specific, we introduce a binary gate $g_{c, b_1}$ for each group to encode the choice of pruning as:
\begin{equation}
    \begin{aligned}
        \label{eq:ecompistion_compression}
        g_{c, b_1} &= H(|| \bw_c ||_1 - \alpha_{b_1}), \\
        z_{c,b_K} &= g_{c, b_1} \cdot \big( z_{c, b_1}  + g_{b_2} \big( r_{c, b_2} + \cdots ~\\
        &+ g_{b_{K-1}} ( r_{c, b_{K-1}} +  g_{b_K} r_{c, b_K} ) \big) \big),
    \end{aligned}
\end{equation}
\jingr{where 
$z_{c,b_j}$ is an element of the $c$-th group filters with $b_j$-bit quantization and 
$r_{c,b_j}$ is the corresponding re-assignment offset obtained by quantizing the residual error $z_c - z_{c,b_{j-1}}$. 
Here, $\alpha_{b_1}$ is a layer-wise threshold for filter pruning. 
Following PFEC~\cite{li2017pruning}, we use the $\ell_1$-norm criteria to evaluate the importance of \jingr{different groups of filters}. Specifically, if a group of filters is important, the corresponding pruning gate will be activated, and vice versa.
}

\subsubsection{Normalization for binary gate}
\jingr{Note that the binary gates for quantization are layer-wise while those for pruning are group-wise, which may lead to different scales of thresholds. To mitigate the effect of different scaling, we perform normalization before applying the step function $H(\cdot)$. Specifically, given an evaluation metric $A$ ($|| \bz - \bz_{b_{j-1}} ||_1$ for quantization and the $\ell_1$-norm of the $c$-th group of filters
 $||\bw_c||_1 $ for pruning), we obtain the normalized metric $\hat{A}$ by $\hat{A} = A / N_A$, where $N_A$ is the number of elements in $A$. We then feed $\hat{A} - \alpha$ into the step function to obtain the output of the binary gate, where $\alpha$ is the corresponding pruning or quantization threshold.}

\subsection{Learning for loss-aware compression}
\label{sec:learning_compression}
\subsubsection{Gradient approximation for the step function}
\jingr{Instead of manually determining the thresholds of pruning and quantization,
we propose to learn them via \jingr{a gradient descent method}.} \jingr{Unfortunately}, the step function in Eqs.~(\ref{eq:gate_bit_decomposition}) and~(\ref{eq:ecompistion_compression}) is non-differentiable. To address this, \liu{we propose to use straight-through estimator (STE)~\cite{bengio2013estimating,zhou2016dorefa} to approximate the gradient of $H(\cdot)$ by the gradient of the sigmoid function $S(\cdot)$}, which can be formulated as
\begin{equation}
    \label{eq:gradient_approximation}
    \begin{aligned}
    \frac{\partial g}{\partial \alpha} = \frac{\partial H \left( \hat{A} - \alpha \right)}{\partial \alpha} &\approx  \frac{\partial S \left( \hat{A} - \alpha \right)}{\partial \alpha} ~\\
    &= -S \left( \hat{A} - \alpha \right) \left(1 - S(\hat{A} - \alpha)\right),
    \end{aligned}
\end{equation}
where $g$ is the output of a binary gate.

\subsubsection{Objective function for \methodshortname}
\label{sec:objective_lbs}
\jingr{Let $\bW$ be the model parameters and $\boldsymbol{\alpha}$ be the compression configuration that are composed of pruning and quantization thresholds. To design a hardware-efficient model, the objective function should reflect both the accuracy and computation cost of a compressed model.} Following~\cite{cai2018proxylessnas}, we incorporate the computation cost into the objective function and formulate the joint objective as:
\begin{equation}
    \label{eq:constraint_objective_function}
    \mathcal{L}(\bW , \boldsymbol{\alpha}) = \mathcal{L}_{ce}(\bW , \boldsymbol{\alpha}) + \lambda \log R(\boldsymbol{\alpha}),
\end{equation}
\jing{where $\mathcal{L}_{ce}(\cdot,\cdot)$ is the cross-entropy loss, $R(\cdot)$ is the computation cost of the network and}
\revise{$\lambda$ is a hyperparameter that adjusts the importance of the computation cost term $\log R(\boldsymbol{\alpha})$ in the loss function. In fact, a larger $\lambda$ indicates that we put more penalty on computation cost term and thus results in a compressed model with lower resource consumption.}
Following single-path NAS~\cite{stamoulis2019single}, we use a similar formulation of computation cost to preserve the differentiability of the objective function.

For simplicity, we consider performing weight quantization only. 
\jingr{Let $G^l$ be the number of filters groups for layer $l$ and $R_{c,b_j}^l$ be the computation cost of the $c$-th group of filters with $b_j$-bit quantization for layer $l$.}
The computation cost $R(\cdot)$ is formulated as follows:
\begin{equation}
\begin{aligned}
    \label{eq:regularization}
    R(\boldsymbol{\alpha}) &= \sum_{l=1}^L \sum_{c=1}^{G^l} g_{c,b_1}^l \big(R_{c, b_1}^l + g_{b_2}^l \big( R_{c, b_2}^l - R_{c, b_1}^l ~\\ &+ \cdots + g_{b_K}^l (R_{c, b_{K}}^l - R_{c, b_{K-1}}^l ) \big) \big),
\end{aligned}
\end{equation}
\jingr{where $g_{b_j}^l$ and $g_{c, b1}^l$ are the binary gates for $b_j$-bit quantization and the pruning decision for the $c$-th group of filters, respectively.}
Similarly, the computation cost for activation quantization can be easily derived by replacing the binary gates of weights with those of activations.

Note that in Eq.~(\ref{eq:gradient_approximation}), we approximate the gradient of the step function $H(\cdot)$ by the gradient of the sigmoid function $S(\cdot)$. \jingr{Therefore, the objective function in Eq.~(\ref{eq:constraint_objective_function}) remains differentiable.}
\jingr{By minimizing the objective using gradient descent,
the configurations of each layer are automatically determined. Moreover, we are able to make a trade-off between pruning and quantization.} However, the gradient approximation of the binary gate may inevitably introduce noisy signals, which is more severe when we quantize both weights and activations. 

\jingr{To alleviate the noisy signals in the gradient
approximation of the binary gate}, we propose to train the binary gates of weights and activations in an alternating manner. \jingr{Let $\boldsymbol{\alpha}_w$ and $\boldsymbol{\alpha}_x$ be the thresholds for weights and activations quantization, respectively.}
The training algorithm of the proposed \methodshortname is shown in Algorithm~\ref{alg:lbs}. 
\jingr{Starting from a pre-trained model $M^0$, \jingr{we first initialize} $\boldsymbol{\alpha}_w$ and $\boldsymbol{\alpha}_x$ to $\boldsymbol{0}$. Then, we train the compressed model $M$ for $T$ epochs. For each epoch, we train the model parameters $\bW$ with alternating updates of $\boldsymbol{\alpha}_w$ and $\boldsymbol{\alpha}_x$.} 
\jing{An empirical study over the effect of alternating training scheme is put in Section~\ref{sec:effect_algernative_training}.}
\jingr{Once the training is finished, 
we are able to obtain a compressed model by removing unactivated re-assignment offsets and filters.
\jingrs{Following the common practice in model compression~\cite{wang2019haq,wu2018mixed,dong2019hawq,dong2019network,liu2019metapruning}, we then fine-tune the resulting compressed model to compensate for the accuracy loss of model compression.} During fine-tuning, we use the quantization method mentioned in Section~\ref{sec:related_work} with the searched bitwidths rather than the quantization decomposition. Therefore, the inference cost of our \methodshortname is the same as the traditional quantization methods.
}

\jing{
\begin{algorithm}[!t]
\caption{Training method for \methodshortname}
\label{alg:lbs}
\textbf{Input}: A pre-trained model $M^0$, the number of candidate bitwidths $K$, a sequence of candidate bitwidths $\{b_j\}_{j=1}^{K} $, the number of training epochs $T$, 
\jingr{the number of training iteration $I$,} 
and hyperparameters $\lambda$.
\\
\textbf{Output}: A compressed model $M$.
 \begin{algorithmic}[1]
	\STATE Initialize $M$ using $M^0$.
	\STATE \jingr{Initialize the weights quantization thresholds  $\boldsymbol{\alpha}_w$ and activations quantization thresholds $\boldsymbol{\alpha}_x$ to $\boldsymbol{0}$.}
	\FOR{ epoch $t\in \{1,\dots,T\}$}
	    \FOR{iteration $i\in \{1, \dots, I\}$}
    	\STATE \jingr{Calculate the binary gates for quantization and pruning using Eqs.~(\ref{eq:gate_bit_decomposition}) and (\ref{eq:ecompistion_compression}).}
    	\IF{$i = 2n, ~~n \in \mathbb{N}^{+}$}
    	    \STATE Update ${\rm \bW}$ and $\boldsymbol{\alpha}_w$ by minimizing        Eq.~(\ref{eq:constraint_objective_function}).
    	\ELSE
    	    \STATE Update ${\rm \bW}$ and $\boldsymbol{\alpha}_x$ by minimizing
        Eq.~(\ref{eq:constraint_objective_function}).
    	\ENDIF
        \ENDFOR
    \ENDFOR
	\end{algorithmic}
\end{algorithm}
}

\begin{figure}[!t]
	\centering
	\includegraphics[width = 0.61\linewidth]{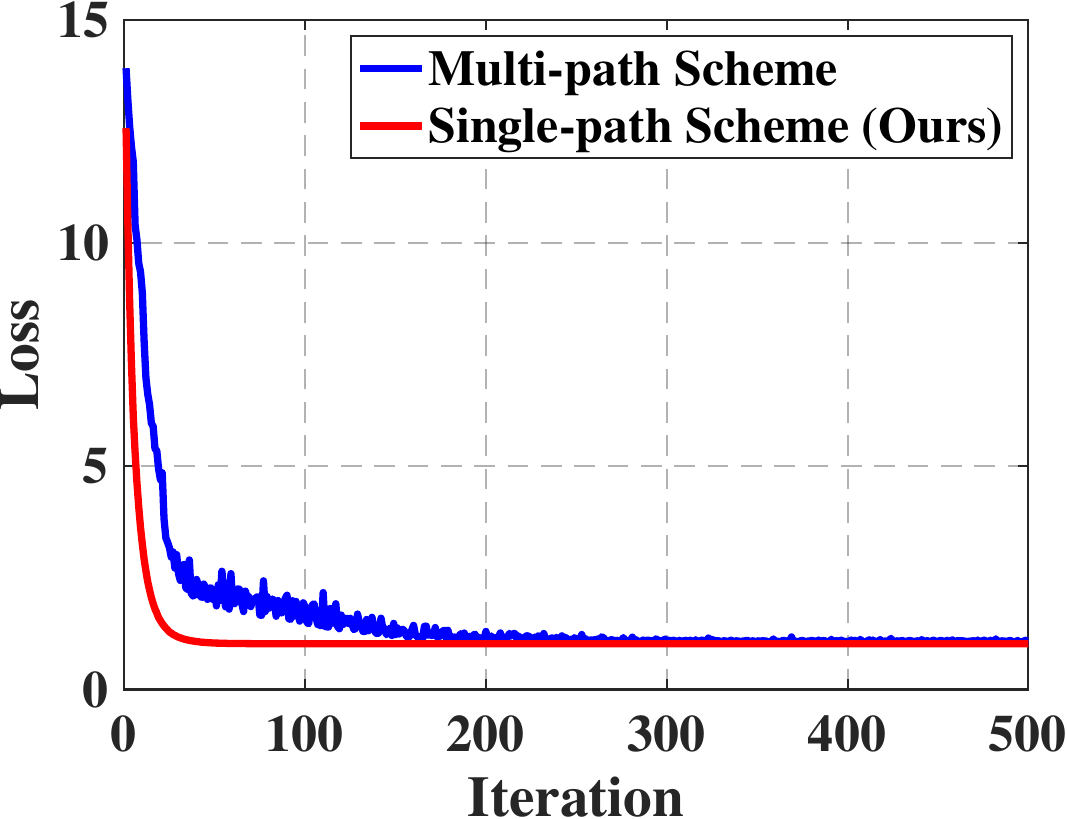}
	\caption{Loss comparison between the single-path (Ours) and multi-path~\cite{wu2018mixed} schemes on a toy dataset with details discussed in Section~\ref{sec:more_discussions}.
	}
	\label{fig:comparisons_loss}
\end{figure}

\vspace{-0.3in}
\subsection{More discussions on \methodshortname}
\label{sec:more_discussions}
Our single-path scheme is different from the multi-path scheme in DNAS~\cite{wu2018mixed}. To study the difference, for simplicity, we consider a linear regression scenario and compare the quantization errors incurred by the two schemes over the regression weights. Formally, we have the following results on the quantization errors incurred by the two schemes.
\begin{prop}\label{prop:error}
Consider a linear regression problem $\min \limits_{\bw \in \sR^{d}} \underset{(\bx, y) \sim \gD}  \sE[(y - \bw\bx)^2]$ with data pairs $\{(\bx, y)\}$, where $\bx \in \sR ^d$ is sampled from $\gN(0,\sigma^2\bf I)$ and $y \in \sR $ is its response. Consider using \methodshortname and DNAS to quantize the linear regression weights.
Let $\bw_L^t$ and $\bw_D^t$ be the quantized regression weights of \methodshortname and DNAS at the iteration $t$ of the optimization, respectively.
Then the following equivalence holds during the optimization process:
\begin{equation}
    \begin{aligned}
    &\lim_{t \rightarrow \infty} \sE _{(\bx, y)\sim\gD}[(y \small{-} \bw^t_{\text{L}}\, \bx)^2] \small{=} \lim_{t \rightarrow \infty} \sE _{(\bx, y)\sim\gD}[(y \small{-} \bw^t_{\text{D}}\, \bx)^2],\\
    &\mathrm{where}~\bw^t_{\text{L}} \small{=}  \bw^t_{L,b_1}\! \!\small{+} g_{b_2}^t \big( \br^t_{b_2} \!\small{+}\! \cdots + \!g_{b_{K-1}}^t \!\big (\br^t_{b_{K-1}} \!\small{+} g_{b_K}^t\! \br^t_{b_K} \! \big)\! \big),\\
    &~~~~~~~~~~~\br^t_{b_{j}} = D(\bw^t_{\text{L}} - \bw^t_{{\text{L}},{b_{j-1}}},s_{b_{j}}),~ j=2,\cdots,K,\\
    &~~~~~~~~~~~\bw^t_{\text{D}} = \sum_{i=1}^K p_i^t\bw_{b_i}^t,~~~\sum_{i=1}^K p^t_i = 1.
     \end{aligned}
\end{equation}
\end{prop}

From Proposition \ref{prop:error}, the multi-path scheme (DNAS) converges to our single-path scheme during the optimization. However, our single-path scheme contains fewer parameters and consumes less computational overhead. 
\jing{
For example, the multi-path scheme maintains $K$ paths while our proposed single path scheme maintains only single-path. Thus, the parameters and computation cost reduction is $(K - 1)/K$.}

\jing{To further study the difference between our single-path scheme and multi-path scheme, we consider a linear regression scenario $\min \limits_{\bw \in \sR^{d}} \underset{(\bx, y) \sim \gD}  \sE[(y - \bw\bx)^2]$ and compare the quantization errors incurred by the two quantization schemes over the regression weights. Specifically, we construct a toy linear regression dataset $\{(\bx_i, y_i)\}_{i=1}^N$ where we randomly sample $N\small{=}10,000$ data $\bx \in \sR^{10}$ from Gaussian distribution $\gN(0, \bf I)$ and obtain corresponding responses $y = \bw^*\bx + \Delta$. Here, $\bw^*\in \sR^{10}$ is a fixed weight randomly sampled from $[0,1]^{10}$ and $\Delta\in \sR$ is a noise sampled from $\gN(0, 1)$. We then apply \methodshortname and DNAS to quantize the regression weights. As shown in Fig.~\ref{fig:comparisons_loss}, our method converges faster and smoother than the multi-path scheme.}

\begin{table*}[t]
\renewcommand{\arraystretch}{1.3}
\caption{Comparisons of different methods w.r.t. Bit-Operation (BOP) count on CIFAR-100. ``BOP comp. ratio'' denotes the BOP compression ratio.
}
\vspace{-0.1in}
\centering
\scalebox{0.88}
{
\begin{tabular}{cccccccc}
\toprule
Network  & Method & BOPs (M) & \tabincell{c}{BOP comp. ratio} & \tabincell{c}{Search Cost \\ (GPU hours)} & Top-1 Acc. (\%) & Top-5 Acc. (\%) \\
\midrule
\multirow{8}{*}{ResNet-20} & Full-precision & 41798.6 & 1.0 & -- & 67.5 & 90.8 \\
& 4-bit precision & 674.6 & 62.0 & -- & 67.8$\pm$0.3 & 90.4$\pm$0.2 \\
\cdashline{2-7}
& DQ~\cite{Uhlich2020Mixed} & 1180.0 & 35.4 & 2.2 & 67.7$\pm$0.6 & 90.4$\pm$0.5 \\
& HAQ~\cite{wang2019haq} & 653.4 & 64.0 & 5.8 & 67.7$\pm$0.1 & 90.4$\pm$0.3 \\
& DNAS~\cite{wu2018mixed} & 660.0 & 62.9 & 2.8& 67.8$\pm$0.3 & 90.4$\pm$0.2 \\
\cdashline{2-7}
& \methodshortname-P (Ours) & 28586.5 & 1.5 & \textbf{0.2} & 67.9$\pm$0.1 & 90.7$\pm$0.2 \\
& \methodshortname-Q (Ours) & 649.5 & 64.4 & 0.8 & 68.1$\pm$0.1 & 90.5$\pm$0.0 \\
& \methodshortname (Ours) & \textbf{630.6} & \textbf{66.3} & 1.0 & \textbf{68.1$\pm$0.3} & \textbf{90.6$\pm$0.2}\\
\midrule
\multirow{8}{*}{ResNet-56} & Full-precision & 128771.7 & 1.0 & -- &  71.7 & 92.2 \\
& 4-bit precision & 2033.6 & 63.3 & -- & 70.9$\pm$0.3 & 91.2$\pm$0.4 \\
\cdashline{2-7}
& DQ~\cite{Uhlich2020Mixed} & 2222.9 & 57.9 & 7.0 & 70.7$\pm$0.2 & 91.4$\pm$0.4\\
& HAQ~\cite{wang2019haq} & 2014.9 & 63.9 & 12.9 & 71.2$\pm$0.1 & 91.1$\pm$0.2\\
& DNAS~\cite{wu2018mixed} & 2016.8 & 63.8 & 7.6 & 71.2$\pm$0.1 & 91.3$\pm$0.2 \\
\cdashline{2-7}
& \methodshortname-P (Ours) & 87021.6 & 1.5 & \textbf{0.4} & 71.5$\pm$0.1 & 91.8$\pm$0.2 \\
& \methodshortname-Q (Ours) & 1970.7 & 65.3 & 1.3 & 71.5$\pm$0.2 & 91.5$\pm$0.2\\
& \methodshortname (Ours) & \textbf{1918.8} & \textbf{67.1} & 1.5 & \textbf{71.6$\pm$0.1} & \textbf{91.8$\pm$0.4} \\
\bottomrule
\end{tabular}
}
\label{table:results_on_CIFAR_100}
\vspace{-0.1in}
\end{table*}

\section{Experiments}
\label{sec:experiments}

\begin{table*}[!tb]
\renewcommand{\arraystretch}{1.3}
\caption{
\zheng{Comparisons of different methods w.r.t. memory footprints on CIFAR-100. ``M.f. comp. ratio" denotes the memory footprints compression ratio.
}
}
\vspace{-0.1in}
\centering
\scalebox{0.88}
{
\begin{tabular}{ccccccc}
\toprule
Network & Method & Memory footprints (KB) & M.f. comp. ratio & Top-1 Acc. (\%) & Top-5 Acc. (\%)  \\ 
\midrule
\multirow{8}{*}{ResNet-56} & Full-precision & 5653.4 & 1.0 & 71.7  & 92.2   \\
&4-bit precision & 711.7 & 7.9 & 70.9$\pm$0.3 & 91.2$\pm$0.4   \\
\cdashline{2-7}
& DQ~\cite{Uhlich2020Mixed} & 723.2 & 7.8 & 70.9$\pm$0.4 & 91.7$\pm$0.3 \\
& HAQ~\cite{wang2019haq} & 700.0 & 8.1 & 71.3$\pm$0.1  & 91.1$\pm$0.1   \\
& DNAS~\cite{wu2018mixed}  & 708.9  & 8.0 & 71.5$\pm$0.2  &  91.3$\pm$0.1  \\
\cdashline{2-7}
&\methodshortname-P (Ours) & 4778.0 & 1.2 & 71.5$\pm$0.1 & 91.8$\pm$0.2 \\
&\methodshortname-Q (Ours) &  674.5 & 8.4 & 71.5$\pm$0.2  &  91.6$\pm$0.2  \\
&\methodshortname (Ours) & \textbf{657.3} & \textbf{8.6} & \textbf{71.6$\pm$0.1}  &  \textbf{91.8$\pm$0.4}  \\
\bottomrule
\end{tabular}
}
\label{table:results_on_cifar100_Memory}
\vspace{-0.1in}
\end{table*}

\subsection{Datasets and evaluation metrics}
\label{sec:dataset_evaluation_metrics}
\jing{We evaluate the proposed \methodshortname on two image classification datasets, including CIFAR-100~\cite{krizhevsky2009learning} and ImageNet~\cite{deng2009imagenet}. CIFAR-100 consists of 50k training samples and 10k testing images with 100 classes. ImageNet contains 1.28 million training samples and 50k testing images for 1,000 classes.}

We measure the performance of different methods using the Top-1 and Top-5 accuracy. Experiments on CIFAR-100 are repeated 5 times and we report the mean and standard deviation. 
\jing{For fair comparisons, we measure the computation cost by the Bit-Operation (BOP) count for all the compared methods following~\cite{guo2020single,wang2020diff}.} The BOP compression ratio is defined as the ratio between the total BOPs of the uncompressed and compressed model. \liu{We can also measure the computation cost by the total weights and activations memory footprints \zheng{following DQ~\cite{Uhlich2020Mixed}}.} Similarly, the memory footprints compression ratio is defined as the ratio between the total memory footprints of the uncompressed and compressed model.
\zheng{Moreover, following~\cite{stamoulis2019single,liu2018darts}, we use the search cost on a GPU device (NVIDIA TITAN Xp) to measure the time of finding an optimal compressed model.}

\subsection{Implementation details}
\label{sec:implementation_details}
To demonstrate the effectiveness of the proposed method, we apply \methodshortname to various architectures, such as ResNet~\cite{he2016deep} and MobileNetV2~\cite{sandler2018inverted}. 
All implementations are based on PyTorch~\cite{paszke2019pytorch}.
Following HAQ~\cite{wang2019haq}, we quantize all the layers, in which the first and the last layers are quantized to 8-bit. Following ThiNet~\cite{luo2019thinet}, \jing{we only perform filter pruning for the first layer in the residual block.} For ResNet-20 and ResNet-56 on CIFAR-100~\cite{krizhevsky2009learning}, we set $B$ to 4. For ResNet-18 and MobileNetV2 on ImageNet~\cite{russakovsky2015imagenet}, $B$ is set to 16 and 8, respectively. \revise{We tune $\lambda$ to obtain compressed models under different resource constraints.}
We first train the full-precision models and then use the pre-trained weights to initialize the compressed models \jingrs{following~\cite{wang2019haq,dong2019hawq}}.
For CIFAR-100, we use the same data augmentation as in~\cite{he2016deep}, including randomly cropping and horizontal flipping. For ImageNet, images are resized to $256 \times 256$, and then a $224 \times 224$ patch is randomly cropped from an image or its horizontal flip for training. For testing, a $224 \times 224$ center cropped is chosen.

Following~\cite{Li2020Additive}, we introduce weight normalization during training. We use SGD with nesterov~\cite{nesterov1983method} for optimization. The momentum term is set to $0.9$.
We first \jingrs{search configurations} for 30 epochs on CIFAR-100 and 10 epochs on ImageNet.  
The learning rate is set to $0.001$. We then fine-tune the searched compressed network to recover the performance drop. On CIFAR-100, we fine-tune the searched network for $200$ epochs with a mini-batch size of $128$. 
The learning rate is initialized to $0.1$ and is divided by 10 at $80$-th and $120$-th epochs.
For ResNet-18 on ImageNet, we fine-tune the searched network for $15$ epochs with a mini-batch size of 256. For MobileNetV2 on ImageNet, we fine-tune for 150 epochs. For all models on ImageNet, the learning rate starts at 0.01 and decays with cosine annealing~\cite{loshchilov2016sgdr}. 

\subsection{Compared methods}
To investigate the effectiveness of \methodshortname, we consider the following methods for comparisons: \textbf{\methodshortname}: our proposed method with joint pruning and quantization; \textbf{\methodshortname-Q}: \methodshortname with quantization only; \textbf{\methodshortname-P}: \methodshortname with pruning only; and several state-of-the-art model compression methods: \jing{\textbf{DNAS}~\cite{wu2018mixed}: \hk{it} uses multi-path search scheme to search for optimal bitwidths; \textbf{HAQ}~\cite{wang2019haq}: \hk{it} uses reinforcement learning to automatically determine the quantization policies; \textbf{HAWQ}~\cite{dong2019hawq}: \hk{it} uses second-order Hessian information
to guide the bitwidth search of each layer;
\textbf{DQ}~\cite{Uhlich2020Mixed}: \hk{it} uses a gradient-based method to learn the bitwidths; \textbf{DJPQ}~\cite{wang2020diff}: \hk{it} combines the variational information bottleneck method to structured pruning and mixed-bit precision quantization and learns optimal configurations using a gradient-based method; \textbf{Bayesian Bits}~\cite{van2020bayesian}: \hk{the authors cast} configuration search problem of pruning and quantization into a variational inference problem and use gradient-based method to learn configurations. We do not consider other methods since they have been compared in the methods mentioned above. \jingrs{Except DQ, all the compared methods use the same strategy that first conducts configurations search based on the same pre-trained model and then finetunes the resulting model with the searched configurations.}}

\subsection{Comparisons on CIFAR-100}
We apply \methodshortname to compress ResNet-20 and ResNet-56 on CIFAR-100.
We report the results under BOPs and memory footprints constraints in Tables~\ref{table:results_on_CIFAR_100} and~\ref{table:results_on_cifar100_Memory}. Compared with the full-precision counterparts, 4-bit quantized models achieve comparable performance or even better performance. \jingrs{This can be attributed to the redundancy removal and regularization effect of network quantization. Similar phenomenon is also observed in LSQ~\cite{Esser2020LEARNED}.} 
Compared with fixed-precision models, mixed-precision methods are able to further reduce the BOPs while preserving performance. 
Critically, \methodshortname-Q outperforms state-of-the-arts DQ, HAQ, and DNAS with less computation cost and memory footprints. For example, 
\methodshortname-Q ResNet-56 outperforms DNAS by $0.3\%$ on the Top-1 accuracy while achieving 65.3$\times$ BOPs reduction. 
\jing{By performing pruning and quantization jointly, \methodshortname achieves the best performance while further reducing computation cost and memory footprints of the compressed models.}

We also show the results of the compressed ResNet-56 with different BOPs and memory footprints in Fig.~\ref{fig:comparisons_different_bops}. \jing{From the results, \methodshortname consistently outperforms all the other methods under variant BOPs and memory footprints settings. Moreover, our proposed \methodshortname achieves significant improvement in terms of BOPs and memory footprints compared with fixed-precision quantization, especially at low BOPs and memory footprints settings.} For example, compared with the fixed-precision counterpart, \jingrs{\methodshortname compressed ResNet-56 consumes much less computational overhead (783.52 \vs~1156.46 BOPs) and fewer memory footprints (395.25 \vs~536.24 KB) but achieves comparable performance.}

\begin{figure}[!t]
	\centering
	\subfigure[BOPs \vs~Top-1 Acc.]{
		\includegraphics[width = 0.43\linewidth]{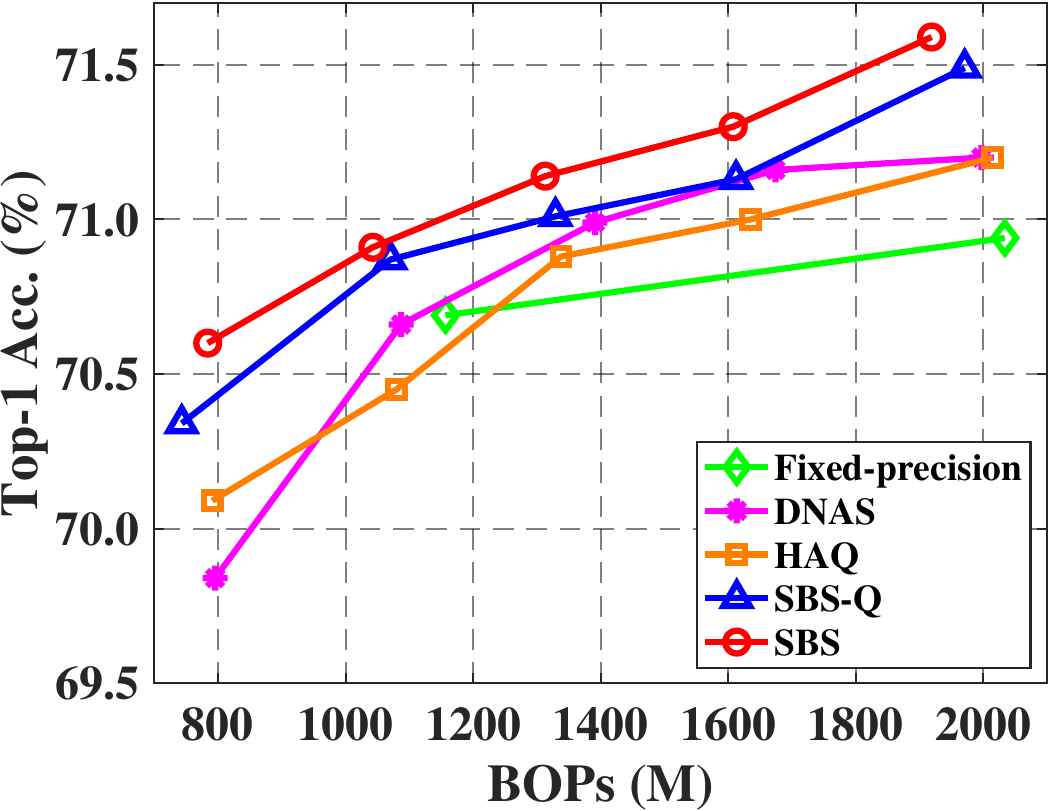}
		\label{fig:resnet56_bops_acc}
	}~~
    \subfigure[Memory \vs~Top-1 Acc.]{
		\includegraphics[width = 0.43\linewidth]{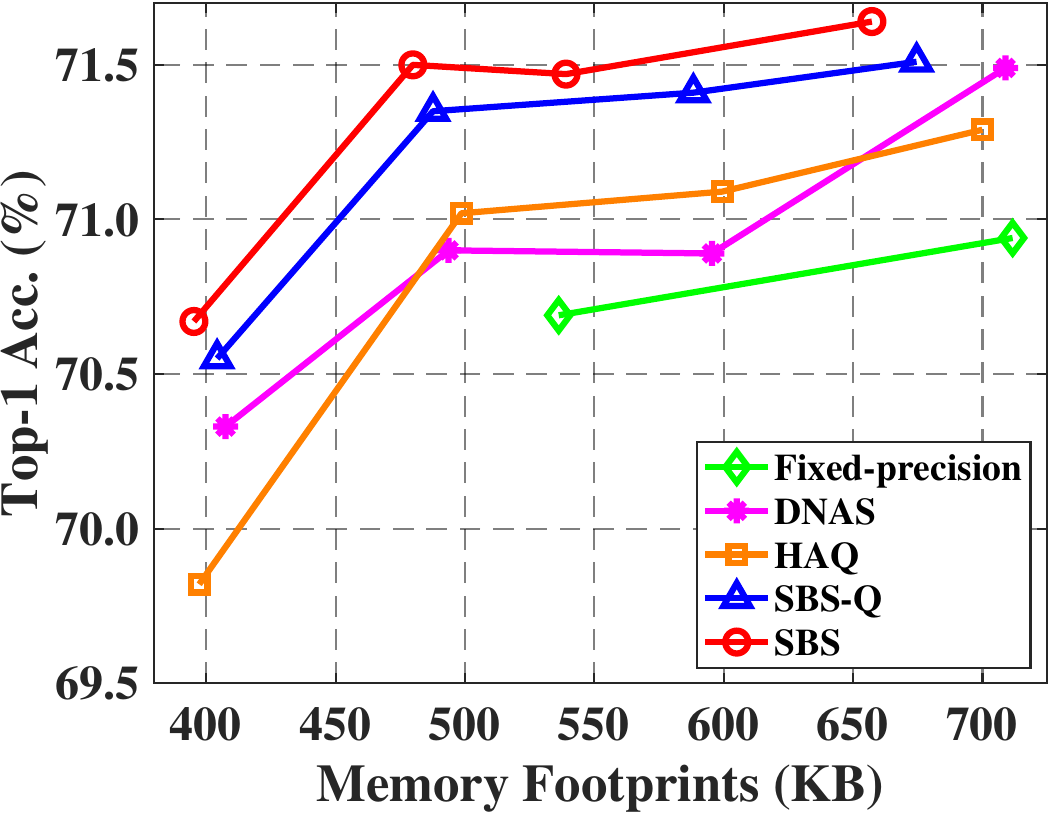}
		\label{fig:resnet56_footprint_acc}
	}~~
	\caption{\jing{Performance comparisons of different methods with different BOPs and memory footprints. We use different methods to compress ResNet-56 and report the results on CIFAR-100.}}
	\label{fig:comparisons_different_bops}
	\vspace{-2pt}
    \vspace{-0.1in}
\end{figure}

\jing{To evaluate the efficiency of the proposed method, we also compare the search cost of different methods. 
From Table~\ref{table:results_on_CIFAR_100},
the search cost of the proposed \methodshortname is much smaller than the state-of-the-art methods. For example, 
for ResNet-20, the search cost of \methodshortname is 2.8$\times$ lower than DNAS while 
the search cost of \methodshortname for ResNet-56 is nearly 5.1$\times$ lower than DNAS. Compared with \methodshortname-Q, \methodshortname only introduces a small amount of overhead (\eg, 0.2 GPU hour for ResNet-56).
These results show the superior efficiency of \methodshortname-Q and \methodshortname.}

\begin{table}[!t]
    \renewcommand{\arraystretch}{1.3}
    \caption{
    Comparisons between our proposed method and fixed-precision quantization on ImageNet.
    }
    \vspace{-0.1in}
    \begin{center}
    \scalebox{0.8}
    {
    \begin{tabular}{cccccccc}
    \toprule
    Network  & Method & Top-1 Acc. (\%) & Top-5 Acc. (\%) & BOPs (G)  \\ 
    \midrule
    \multirow{7}{*}{ResNet-18} & Full-precision & 69.8 & 89.1 & 1857.6  \\
    \cdashline{2-7}
    & 8-bit precision & 70.0 & 89.3  & 116.1 \\
    & \methodshortname-Q (Ours) & 70.2 & 89.4  & 114.3 \\
    & \methodshortname (Ours)  & \textbf{70.3} & \textbf{89.4} & \bf{108.4} \\
    \cdashline{2-7}
    & 4-bit precision & 69.2 & 89.0 & 34.7  \\
    & \methodshortname-Q (Ours) & 69.5 & 89.0 & 34.4  \\
    & \methodshortname (Ours) & \textbf{69.6} & \textbf{89.0} & \textbf{34.0} \\
    \midrule
    \multirow{7}{*}{ResNet-50} & Full-precision & 76.8 & 93.3 & 4187.3  \\
    \cdashline{2-7}
    & 8-bit precision &  76.3 & 93.0 & 261.7  \\
    & \methodshortname-Q (Ours)  & 76.4 & 93.1 & 253.1 \\
    & \methodshortname (Ours)  & \textbf{76.5} & \textbf{93.1} & \textbf{243.9} \\
    \cdashline{2-7}
    & 4-bit precision  & 75.7 & 92.7 & 71.2 \\
    & \methodshortname-Q (Ours)  &  75.8 & 92.8 & 71.0 \\
    & \methodshortname (Ours)  & \textbf{75.9} & \textbf{92.8} & \textbf{70.4} \\
    \bottomrule
    \end{tabular}
    }
    \end{center}
    \label{table:mixed_vs_fixed_precision}
    \vspace{-0.1in}
\end{table}

\begin{table*}[!t]
\renewcommand{\arraystretch}{1.3}
\caption{
Comparisons of different methods on ImageNet. \zheng{“*” denotes that we get the results from the figures in~\cite{van2020bayesian} and} ``--" denotes that the results are not reported. \jing{``BOP comp. ratio'' denotes the BOP compression ratio.}
}
\vspace{-0.1in}
\begin{center}
\scalebox{0.9}
{
\begin{tabular}{cccccccc}
\toprule
Network  & Method & BOPs (G) & BOP comp. ratio & Top-1 Acc. (\%) & Top-5 Acc. (\%) & \\ 
\midrule
\multirow{9}{*}{ResNet-18} & Full-precision & 1857.6 & 1.0 & 69.8 & 89.1 \\
& 4-bit precision & 34.7 & 53.5 & 69.2 & 89.0 \\
\cdashline{2-7}
& DQ~\cite{Uhlich2020Mixed} & 40.1 & 46.3& 68.9 & 88.6\\
& \zheng{Bayesian Bits*}~\cite{van2020bayesian} & 35.9 & 51.7 & 69.5 & -- \\
& DJPQ~\cite{wang2020diff} & 35.5 & 52.3 & 69.1 & --  \\
& HAQ~\cite{wang2019haq} & 34.4 & 54.0 & 69.2 & 89.0 \\
& HAWQ~\cite{dong2019hawq} & 34.0 & 54.6 & 68.5 & -- \\
\cdashline{2-7}
& \methodshortname-Q (Ours) & 34.4 & 54.0 & 69.5 & 89.0 \\
& \methodshortname (Ours) & \textbf{34.0} & \textbf{54.6} & \textbf{69.6} & \textbf{89.0}\\
\midrule
\multirow{7}{*}{MobileNetV2} & Full-precision & 308.0 & 1.0 &  71.9 & 90.3 \\
& 8-bit precision & 19.2 & 16.0 & 71.5 & 90.1 \\
\cdashline{2-7}
& DQ~\cite{Uhlich2020Mixed} & 19.6 & 1.9 & 70.4 & 89.7 \\
& HAQ~\cite{wang2019haq} & 13.8 & 22.3 &  71.4 & 90.2 \\
& \zheng{Bayesian Bits*}~\cite{van2020bayesian} & 13.8 & 22.3 & 71.4 & -- \\
\cdashline{2-7}
& \methodshortname-Q (Ours) & 13.6 & 22.6 & 71.6 & 90.2 \\
& \methodshortname (Ours) & \textbf{13.6} & \textbf{22.6} & \textbf{71.8}& \textbf{90.3}\\
\bottomrule
\end{tabular}
}
\end{center}
\label{table:results_on_imagenet}
\vspace{-0.1in}
\end{table*}

\subsection{Comparisons on ImageNet}
\revise{To evaluate the effectiveness of our method, we first apply SBS and SBS-Q to compress ResNet-18 and ResNet-50 and evaluate the performance on ImageNet. From Table~\ref{table:mixed_vs_fixed_precision}, SBS-Q compressed models with lower BOPs achieve better performance than the fixed-precision counterparts. For example, for ResNet-18, SBS-Q with 1.8G fewer BOPs outperforms 8-bit quantization by 0.2\% on the Top-1 accuracy. These results justify the effectiveness and necessity of bitwidth search.
By combining pruning and quantization, SBS yields compressed models with higher accuracy and lower BOPs. For example, for ResNet-50, SBS outperforms SBS-Q by 0.1\% on the Top-1 accuracy while reducing 9.2G BOPs. We also show the results of compressed ResNet-18 and ResNet-50 with different BOPs in Fig.~\ref{fig:comparisons_different_bops_imagenet}. Compared with SBS-Q, \jingrs{SBS achieves better accuracy-BOPs trade-off},
which shows the benefit of performing pruning and quantization jointly.}

\revise{
To compare \methodshortname with other state-of-the-art methods, 
we apply different methods to compress ResNet-18 and MobileNetV2. 
From 
Table~\ref{table:results_on_imagenet},
\methodshortname-Q with less computation cost outperforms the state-of-the-art baselines. Specifically, \methodshortname-Q compressed MobileNetV2 surpasses the one compressed by HAQ with more BOPs reduction. 
By combining pruning and quantization, \methodshortname further improves the performance while 
reducing the computation cost of the compressed models. 
For example, \methodshortname compressed MobileNetV2 reduces the BOPs by 22.6$\times$ while only resulting in 0.1\% performance degradation in terms of the Top-1 accuracy.}

\begin{table*}[!htb]
\renewcommand{\arraystretch}{1.3}
\caption{Effect of the bit sharing scheme. We report the Top-1/Top-5 accuracy, BOPs, search cost, and GPU memory footprints on CIFAR-100. The search cost and GPU memory are measured on a GPU device (NVIDIA TITAN Xp).}
\vspace{-0.1in}
\begin{center}
\scalebox{0.9}
{
\begin{tabular}{ccccccc}
\toprule
Network  & Method & Top-1 Acc. (\%) & Top-5 Acc. (\%) & BOPs (M) & Search Cost (GPU hours) & GPU Memory (GB) \\ 
\midrule
\multirow{2}{*}{ResNet-20} & w/o bit sharing & 67.8$\pm$0.1 & 90.5$\pm$0.2 & 664.2 & 2.8 & 4.4 \\
&  w/ bit sharing & \textbf{68.1$\pm$0.1} & \textbf{90.5$\pm$0.0} & \textbf{649.5} & \textbf{0.8} & \textbf{1.5} \\
\midrule
\multirow{2}{*}{ResNet-56} & w/o bit sharing & 71.3$\pm$0.3 & 91.4$\pm$0.4 & 2001.1 & 8.7 & 10.9\\
& w/ bit sharing &  \textbf{71.5$\pm$0.2} &  \textbf{91.5$\pm$0.2} &  \textbf{1970.7} &  \textbf{1.3} & \textbf{3.0}\\
\bottomrule
\end{tabular}
}
\end{center}
\label{table:effect_bit_sharing}
\vspace{-0.2in}
\end{table*}

\begin{figure}[!t]
	\centering
	\subfigure[BOPs \vs~Top-1 Acc. of R18.]{
		\includegraphics[width = 0.43\linewidth]{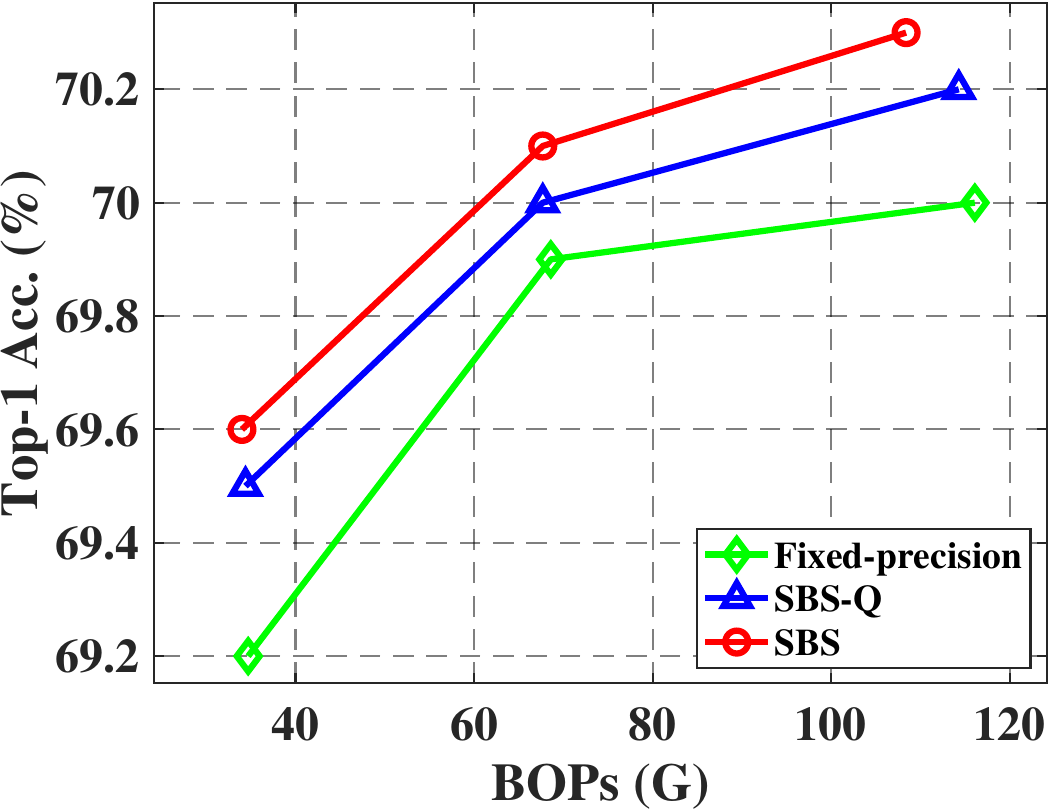}
		\label{fig:resnet18_bops_acc}
	}~~
    \subfigure[BOPs \vs~Top-1 Acc. of R50.]{
		\includegraphics[width = 0.43\linewidth]{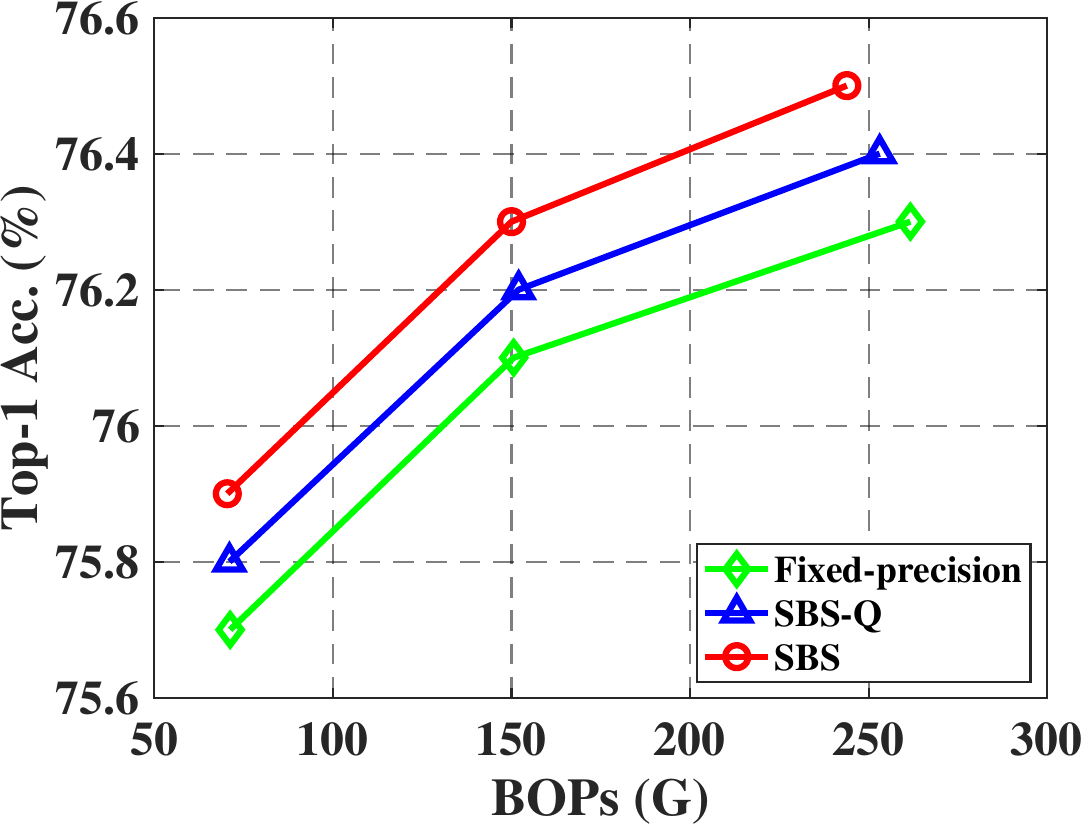}
		\label{fig:resnet50_bops_acc}
	}~~
	\caption{\jing{Performance comparisons of various methods for ResNet-18 and ResNet-50 under different BOPs on ImageNet.}}
	\label{fig:comparisons_different_bops_imagenet}
	\vspace{-10pt}
\end{figure}

\revise{We also illustrate the detailed configurations (\ie, bitwidth and pruning rate) of each layer from the compressed ResNet-18 and ResNet-50. From  Figs.~\ref{fig:config_r18_main} and~\ref{fig:config_r50_main}, \methodshortname assigns higher bitwidth to $1 \times 1$ convolutional layers (including downsampling layers).
One possible reason is that the number of parameters and computation cost of $1 \times 1$ convolutional layers are much smaller than other layers. Compressing these layers may lead to a significant performance decline. Besides, \methodshortname allocates fewer bitwidth to $3 \times 3$ convolutional layers to reduce BOPs. \jingrs{Note that the bit-width allocation might vary among weights, activations, and various models due to distinct tensor dimensions or network building blocks (non-bottleneck for ResNet-18 and bottleneck for ResNet-50), which affects layer complexity and compression trade-offs.}
For filter pruning, 
\methodshortname inclines to prune more filters in the shallower layers of ResNet-18 and middle layers of ResNet-50, which significantly reduces the number of parameters and computational overhead. More detailed configurations of the other models are put in the supplementary material.}

\subsection{Hardware resource-constrained compression}
\label{sec:comparisons_resource_constrained_compression}
To investigate the effect of \methodshortname on hardware devices, \jingrs{we further apply \methodshortname to compress MobileNetV2 under various resource constraints on the BitFusion architecture~\cite{sharma2018bit}, which is a state-of-the-art spatial ASIC accelerator for neural networks.}
We measure the computation cost by the latency and energy on a simulator of the BitFusion with a batch size of 16. We report the results on ImageNet in Table~\ref{table:resource_constrained_compression}. 
Compared with fixed-precision quantization, \methodshortname achieves better performance with lower latency and energy. For example, 
\methodshortname compressed MobileNetV2 with 3.3ms lower latency outperforms 8-bit MobileNetV2 by 0.5\% on the Top-1 accuracy. \jingrs{These results show the promising hardware efficiency of our SBS.}

\begin{table*}[!t]
\renewcommand{\arraystretch}{1.3}
\caption{\jing{Hardware resource-constrained compression on BitFusion. We evaluate the proposed \methodshortname under the latency and energy constraints and report the Top-1 and Top-5 accuracy on ImageNet.}
}
\vspace{-0.1in}
\begin{center}
\scalebox{0.9}
{
\begin{tabular}{cccccccc}
\toprule
\multirow{2}[0]{*}{Network}  & \multirow{2}[0]{*}{Method} & \multicolumn{3}{c}{Latency-constrained} & \multicolumn{3}{c}{Energy-constrained} \\
\cline{3-8}
& & Top-1 Acc. (\%) & Top-5 Acc. (\%) & Latency (ms) & Top-1 Acc. (\%) & Top-5 Acc. (\%) & Energy (mJ) \\
\midrule
\multirow{2}{*}{MobileNetV2} & 8-bit precision & 71.5 & 90.1 & 24.9 & 71.5 & 90.1 & 37.0\\
& \methodshortname (Ours) & \textbf{72.0} & \textbf{90.4} & \textbf{21.6} & \textbf{71.7} & \textbf{90.3} & \textbf{29.8} \\
\bottomrule
\end{tabular}
}
\end{center}
\label{table:resource_constrained_compression}
\vspace{-0.1in}
\end{table*}

\section{Further Studies}
\revise{In this section, we conduct further studies for our \methodshortname. 
1) We investigate the effect of the bit sharing scheme in Section~\ref{sec:effect_bit_sharing}. 
2) We explore the effect of the one-stage compression scheme in Section~\ref{sec:effect_one_stage_compression}. 
3) We study the effect of the alternating training scheme in Section~\ref{sec:effect_algernative_training}. 
4) We explore the effect of different group sizes in Section~\ref{sec:effect_group_size}.
5) We investigate the effect of SBS with non-power-of-two bitwidths in Section~\ref{sec:comparison_power_of_two_non_power_of_two}.
6) We compare the training from scratch scheme with the fine-tuning strategy in Section~\ref{sec:fine_tuning}.}

\begin{figure}[!tb]
    \centering
    \subfigure[\methodshortname searched bitwidths of ResNet-18.]{
        \includegraphics[width=0.89\linewidth]{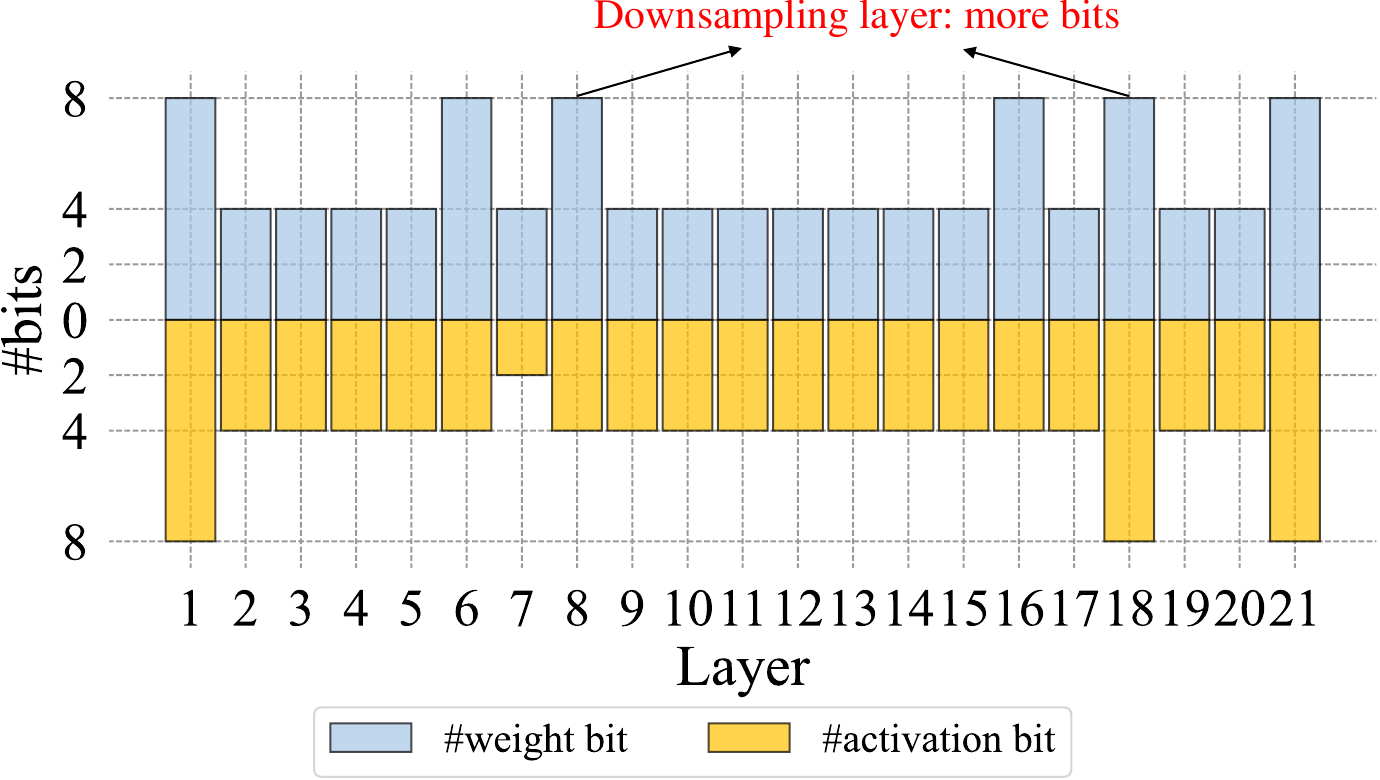}
        \label{fig:bitwidth_r18_main}
    }
    \hspace{3mm}
    \subfigure[\methodshortname searched pruning rates of ResNet-18.]{
        \includegraphics[width=0.89\linewidth]{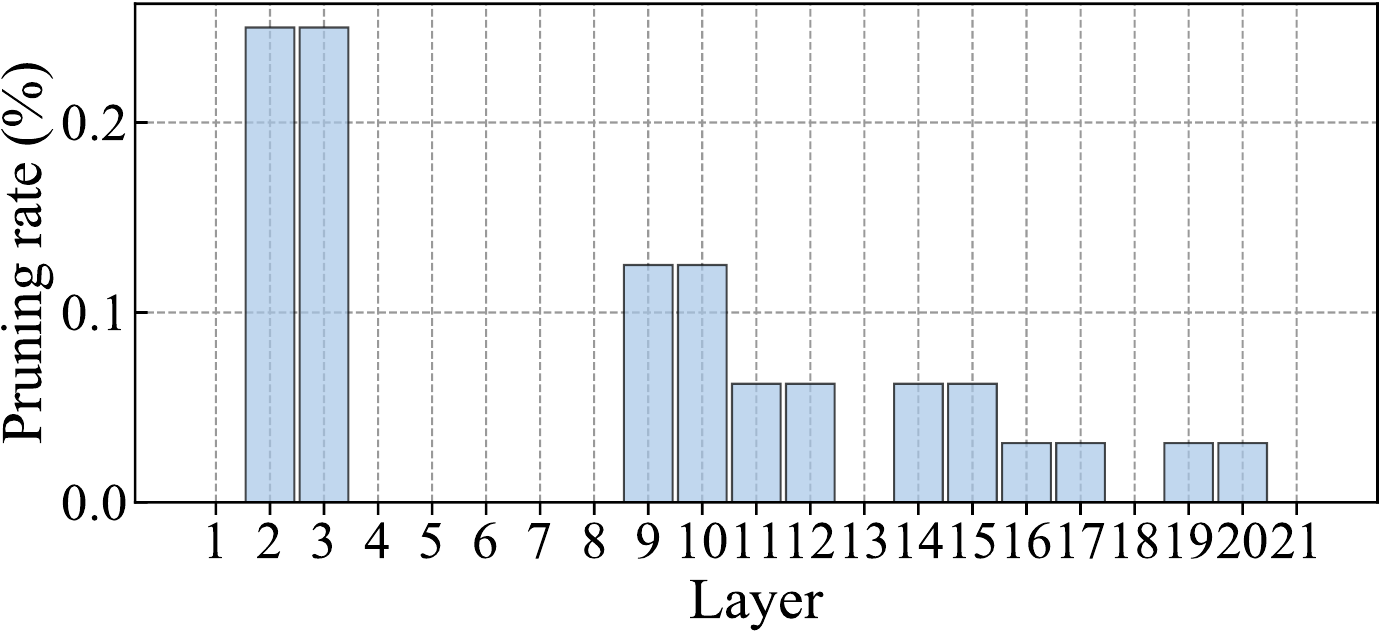}
        \label{fig:pruning_rate_r18_main}
    }
    \caption{\revise{\methodshortname searched configurations of ResNet-18 on ImageNet. The Top-1 accuracy, Top-5 accuracy and BOPs of the compressed ResNet-18 are 69.6\%, 89.0\% and 34.0G, respectively.}
    }
    \label{fig:config_r18_main}
    \vspace{-0.1in}
\end{figure}

\begin{figure}[!tb]
    \centering
    \subfigure[\methodshortname searched bitwidths of ResNet-50.]{
        \includegraphics[width=0.95\linewidth]{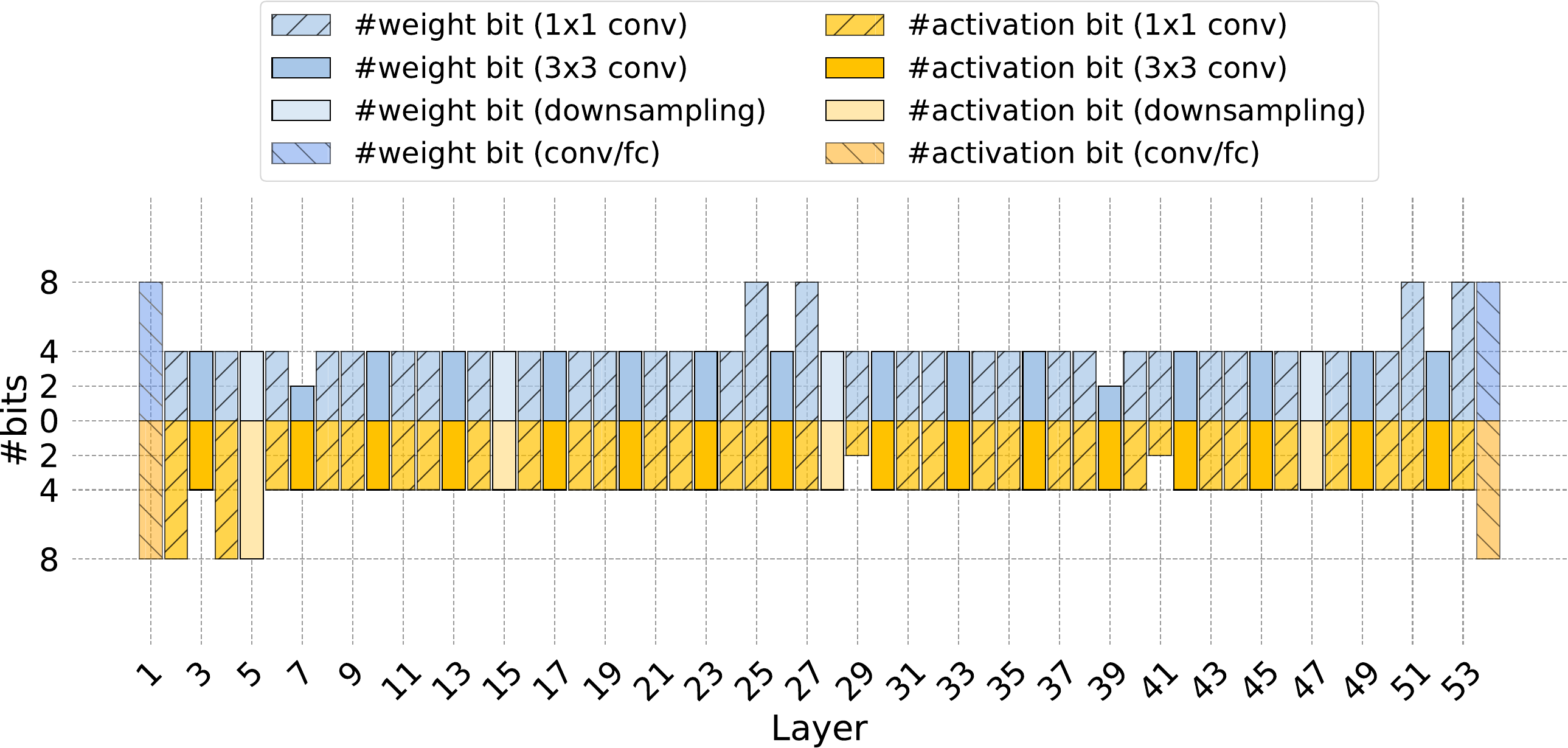}
        \label{fig:bitwidth_r50_main}
    }
    \hspace{3mm}
    \subfigure[\methodshortname searched pruning rates of ResNet-50.]{
        \includegraphics[width=0.95\linewidth]{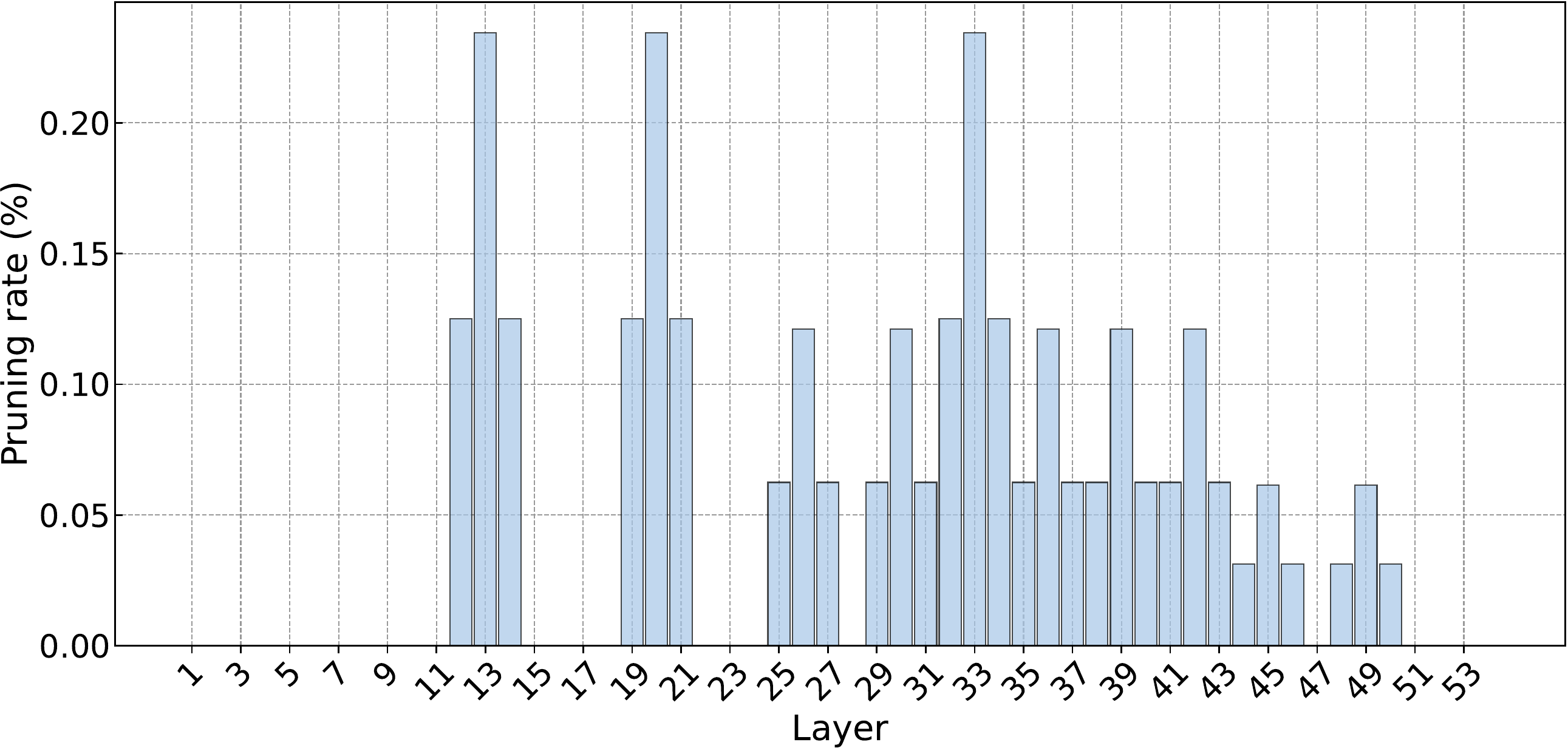}
        \label{fig:pruning_rate_r50_main}
    }
    \caption{\revise{\methodshortname searched configurations of ResNet-50 on ImageNet. The Top-1 accuracy, Top-5 accuracy and BOPs of the compressed ResNet-50 are 75.9\%, 92.8\% and 70.4G, respectively.}
    }
    \label{fig:config_r50_main}
    \vspace{-0.1in}
\end{figure}

\subsection{Effect of the bit sharing scheme}
\label{sec:effect_bit_sharing}
To investigate the effect of the bit sharing scheme, we apply \methodshortname to quantize ResNet-20 and ResNet-56 with  and without the bit sharing scheme and report the results on CIFAR-100. 
\jing{Here, \methodshortname without the bit sharing denotes that we compress the models with the multi-path scheme~\cite{wu2018mixed}.}
We report the Top-1/Top-5 accuracy and BOPs in Table~\ref{table:effect_bit_sharing}. We also present the search cost and GPU memory footprints measured on a GPU device (NVIDIA TITAN Xp). From the results, \methodshortname with the bit sharing scheme obtains more compact models and consistently outperforms the ones without the bit sharing scheme while significantly reducing the search cost and GPU memory footprints. 
For example, ResNet-56 with the bit sharing scheme outperforms the counterpart by 0.2\% in the Top-1 accuracy and achieves 3.6$\times$ reduction on GPU memory and 6.7$\times$ acceleration during training.

\subsection{Effect of the one-stage compression}
\label{sec:effect_one_stage_compression}
To investigate the effect of the one-stage compression scheme, we perform model compression with both the one-stage and the two-stage compression schemes on ResNet-56. Specifically, the one-stage compression scheme means that we perform filter pruning and quantization jointly. \jingr{The two-stage compression scheme is that we perform filter pruning and network quantization in a separate stage. For convenience, we denote the two-stage compression scheme as A$\rightarrow$B,
where A$\rightarrow$B denotes that we first perform A and then conduct B. We consider both SBS-Q $\rightarrow$ SBS-P and SBS-P $\rightarrow$ SBS-Q for comparisons.} \jingr{For the two-stage compression method, we have conducted extensive experiments to find a trade-off between pruning and quantization that leads to high accuracy in the final compressed model following~\cite{wang2020diff}.} From Table~\ref{table:effect_one_stage_compression},
the resulting model obtained by the one-stage scheme with less computation cost outperforms the two-stage counterparts. For example, SBS with 1064.1M BOPs outperforms SBS-P $\rightarrow$ SBS-Q by 0.2\% on the Top-1 accuracy. These results show the superiority of performing pruning and quantization jointly.

\subsection{Effect of the alternating training scheme}
\label{sec:effect_algernative_training}
\liu{To investigate the effect of the alternating training scheme introduced in Algorithm~\ref{alg:lbs},}
we apply \methodshortname to compress ResNet-56 with a joint training scheme and  an alternating training scheme on CIFAR-100. 
\jing{Here, the joint training scheme denotes that we train the binary gates of weights and activations jointly. The alternating training scheme indicates that we train the binary gates of weights and activations in an alternating way, as mentioned in Section~\ref{sec:objective_lbs}.}
From the results of Table~\ref{table:effect_alternating_training}, the model trained with the alternating scheme achieves better performance than those of the joint scheme while consuming lower computational overhead, which demonstrates the effectiveness of the proposed alternating training scheme.

\begin{table}[!tb]
\renewcommand{\arraystretch}{1.3}
\caption{Effect of the one-stage compression. We report the results on CIFAR-100. A$\rightarrow$B denotes that we first perform A and then conduct B.
}
\vspace{-0.1in}
\begin{center}
\scalebox{0.82}
{
\begin{tabular}{cccccc}
\toprule
Network  & Method & Top-1 Acc. (\%) & Top-5 Acc. (\%) & BOPs (M) \\ 
\midrule
\multirow{3}{*}{ResNet-56} 
& \methodshortname-P $\rightarrow$ \methodshortname-Q & 70.7$\pm$0.2 & 91.3$\pm$0.2 & 1092.4 \\
& \methodshortname-Q $\rightarrow$ \methodshortname-P & 70.6$\pm$0.3 & 91.4$\pm$0.3 & 1067.1\\
&  \methodshortname & \textbf{70.9$\pm$0.3} & \textbf{91.4$\pm$0.2} & \textbf{1064.1} \\
\bottomrule
\end{tabular}
}
\end{center}
\label{table:effect_one_stage_compression}
\vspace{-0.1in}
\end{table}

\begin{table}[!tb]
\renewcommand{\arraystretch}{1.3}
\caption{Effect of the alternating training scheme. We report the results of ResNet-56 on CIFAR-100.
}
\vspace{-0.2in}
\begin{center}
\scalebox{0.82}
{
\begin{tabular}{cccccc}
\toprule
Network  & Method & Top-1 Acc. (\%) & Top-5 Acc. (\%) & BOPs (M) \\ 
\midrule
\multirow{2}{*}{ResNet-56} 
& Joint training  & 71.3$\pm$0.2 & 91.6$\pm$0.3 & 1942.4\\
&  Alternating training & \textbf{71.6$\pm$0.1} & \textbf{91.8$\pm$0.4} & \textbf{1918.8} \\
\bottomrule
\end{tabular}
}
\end{center}
\label{table:effect_alternating_training}
\vspace{-0.1in}
\end{table}

\begin{table}[!tb]
    \renewcommand{\arraystretch}{1.3}
    \caption{Performance comparisons with different group sizes $B$ on CIFAR-100.}
    \vspace{-0.1in}
    \begin{center}
    \scalebox{0.8}
    {
    \begin{tabular}{ccccccc}
    \toprule
    Network  & $B$ & Top-1 Acc. (\%) & Top-5 Acc. (\%) & BOPs (M) \\ 
    \midrule
    \multirow{4}{*}{ResNet-20} & 1 & 67.7$\pm$0.2 & 90.5$\pm$0.1 & 648.3 \\
    & 2 & 68.0$\pm$0.1 & 90.4$\pm$0.1 & 631.2 \\
    & 4 & \textbf{68.1$\pm$0.3} & \textbf{90.6$\pm$0.2} & \textbf{630.6} \\
    & 8 & 67.9$\pm$0.1 & 90.4$\pm$0.1 & 631.1 \\
    \midrule
    \multirow{4}{*}{ResNet-56} & 1 & 71.3$\pm$0.1 & 91.6$\pm$0.2 & 1945.3 \\
    & 2 & 71.3$\pm$0.1 & 91.6$\pm$0.1 & 1949.5 \\
    & 4 & \textbf{71.6$\pm$0.1} & \textbf{91.8$\pm$0.4} & \textbf{1918.8} \\
    & 8 & 71.5$\pm$0.2 & $91.4\pm0.2$ & 1956.5 \\
    \bottomrule
    \end{tabular}
    }
    \end{center}
    \label{table:effect_B}
    \vspace{-0.1in}
\end{table}

\subsection{Effect of different group sizes}
\label{sec:effect_group_size}
\jingr{To investigate the effect of different group sizes $B$, we apply SBS to compress ResNet-20 and ResNet-56 with different $B \in \{ 1, 2, 4, 8 \}$ and show the results in Table~\ref{table:effect_B}. From the table, the performance of the compressed models first improves and then degrades with the increase of $B$. For example, the compressed ResNet-56 with $B=4$ outperforms those of $B=1$ by 0.3\% in terms of the Top-1 accuracy. In fact, a large $B$ leads to a small search space. With limited computing resources, we are able to find good configurations with high probability and thus improve performance. In contrast, a too large $B$ results in extremely small search space, which may ignore many good compression configurations and thus limits the performance. Since our SBS achieves the best performance with $B=4$, we use it by default for the experiments on CIFAR-100.}

\begin{table}[!tb]
    \renewcommand{\arraystretch}{1.3}
    \caption{Performance comparisons of different methods on CIFAR-100. $^{*}$ denotes our SBS with the candidate bitwidths of $\{ 3, 6, 12 \}$.
    }
    \vspace{-0.1in}
    \begin{center}
    \scalebox{0.8}
    {
    \begin{tabular}{ccccccc}
    \toprule
    Network  & Method & Top-1 Acc. (\%) & Top-5 Acc. (\%) & BOPs (M) \\ 
    \midrule
    \multirow{3}{*}{ResNet-20} 
    & Full-precision & 67.5 & 90.8 & 41798.6 \\
    & 6-bit precision & 68.3$\pm$0.1 & 90.7$\pm$0.2 & 1482.1 \\
    & SBS$^{*}$ & \textbf{68.4$\pm$0.2} & \textbf{90.8$\pm$0.2} & \textbf{1481.2}\\
    \cline{1-6}
    \multirow{3}{*}{ResNet-56} 
    & Full-precision & 71.7 & 92.2 & 128771.7 \\
    & 6-bit precision & 71.7$\pm$0.1 & 91.7$\pm$0.4 & 4539.7 \\
    & SBS$^{*}$ & \textbf{71.8$\pm$0.2} & \textbf{91.8$\pm$0.3} & \textbf{4504.3} \\
    \bottomrule
    \end{tabular}
    }
    \end{center}
    \label{table:different_bitwidths_cifar}
    \vspace{-0.2in}
\end{table}

\subsection{Effect of SBS with non-power-of-two bitwidths.}
\label{sec:comparison_power_of_two_non_power_of_two}
\jingr{To investigate the effect of our \methodshortname on non-power-of-two bitwidths, we apply SBS to compress ResNet-20/56 on CIFAR-100 and ResNet-18/50 on ImageNet with the non-power-of-two bitwidths. As shown in Theorem~\ref{prop:bit_decompistion}, if we set $b_1$ to 3 and $\gamma_j$ to 2, we have the quantization decomposition for an example of the non-power-of-two bitwidths (\ie, $b_j \in \{ 3, 6, 12, 24, \cdots \}$). Similar to the power-of-two bitwidths, we only consider those bitwidths that are not greater than 12-bit since the normalized quantization error change is small when the bitwidth is greater than 12 according to Corollary~\ref{coll:bit_error}.
We denote SBS with the candidate bitwidths of $\{ 3,6,12 \}$ as SBS$^*$ for convenience. From Tables~\ref{table:different_bitwidths_cifar} and~\ref{table:different_bitwidths_imagenet}, SBS$^*$ obtains fewer BOPs while achieving better performance than the fixed-precision counterparts. For example, for ResNet-50, SBS$^{*}$ outperforms the 6-bit conterpart by 0.2\% on the Top-1 accuracy. 
These results show the effectiveness of SBS with the non-power-of-two bitwidths.
Moreover, the non-power-of-two bitwidths are not hardware-friendly due to the bit wasting in values packing (See Appendix~\ref{sec:hardware_friendly_decomposition}), which seriously degrades the hardware utilization. Therefore, we only consider the power-of-two bitwidths in our experiments.}

\begin{table}[!tb]
    \renewcommand{\arraystretch}{1.3}
    \caption{Performance comparisons of different methods on ImageNet. $^{*}$ denotes our SBS with the candidate bitwidths of $\{ 3, 6, 12 \}$.
    }
    \vspace{-0.1in}
    \begin{center}
    \scalebox{0.8}
    {
    \begin{tabular}{cccccccc}
    \toprule
    Network  & Method & Top-1 Acc. (\%) & Top-5 Acc. (\%) & BOPs (G) \\ 
    \midrule
    \multirow{3}{*}{ResNet-18} & Full-precision & 69.8 & 89.1 & 1857.6 \\
    & 6-bit precision & 69.9 & 89.3 & 68.6 \\
    & SBS$^{*}$ & \textbf{70.1} & \textbf{89.4} & \textbf{67.7} \\
    \cline{1-6}
    \multirow{3}{*}{ResNet-50} & Full-precision & 76.8 & 93.3 & 4187.3 \\
    & 6-bit precision & 76.1 & 93.0 & 150.6 \\
    & SBS$^{*}$ & \textbf{76.3} & \textbf{93.1} & \textbf{150.0} \\
    \bottomrule
    \end{tabular}
    }
    \end{center}
    \label{table:different_bitwidths_imagenet}
    \vspace{-0.1in}
\end{table}

\begin{table}[!t]
    \renewcommand{\arraystretch}{1.3}
    \caption{Performance comparisons of different methods on CIFAR-100. All the compressed models are trained from scratch.
    }
    \vspace{-0.05in}
    \centering
    \scalebox{0.8}
    {
    \begin{tabular}{ccccccc}
    \toprule
    Network  & Method & Top-1 Acc. (\%) & Top-5 Acc. (\%) & BOPs (M) \\
    \midrule
    \multirow{3}{*}{ResNet-20} & Full-precision & 67.5 & 90.8 & 41798.6 \\
    & 4-bit precision & 65.7$\pm$0.2 & 89.4$\pm$0.3 & 674.6 \\
    & \methodshortname (Ours) & \textbf{65.9$\pm$0.2} & \textbf{89.6$\pm$0.2} &\textbf{657.7} \\
    \midrule
    \multirow{3}{*}{ResNet-56} & Full-precision & 71.7 & 92.2 & 128771.7  \\
    & 4-bit precision  & 67.4$\pm$0.1 & 89.9$\pm$0.3 & 2033.6\\
    & \methodshortname (Ours) & \textbf{67.7$\pm$0.4} & \textbf{90.2$\pm$0.2} & \textbf{1981.3}  \\
    \bottomrule
    \end{tabular}
    }
    \label{table:from_scratch_on_CIFAR_100}
    \vspace{-0.1in}
\end{table}

\subsection{Training from scratch \vs~fine-tuning scheme}
\label{sec:fine_tuning}
\jingr{To investigate the effect of \methodshortname with training from scratch scheme, we apply \methodshortname to compress ResNet-20 and ResNet-56 on CIFAR-100. The experimental settings are the same as Section~\ref{sec:implementation_details} except that we do not use any pre-trained model. From Table~\ref{table:from_scratch_on_CIFAR_100}, the models obtained by our \methodshortname surpass the full-precision counterparts. Moreover, the performance improvement brought from \methodshortname is smaller than those that are from fine-tuning. For example, for ResNet-56, the improvement of SBS over the fixed-precision quantization is 0.3\% while the fine-tuning counterpart is 0.7\% in Table~\ref{table:results_on_CIFAR_100}. As the model are not well-trained, the searched configurations may not be accurate, which hinders the effect of our method.}

\section{Conclusion and Future Work}
\jing{In this paper, we have proposed an automatic loss-aware model compression method called \methodfullname (\methodshortname) for pruning and quantization jointly.} The proposed \methodshortname introduces a novel single-path bit sharing model to encode all bitwidths in the search space, where 
a quantized representation will be decomposed into the sum of the lowest bitwidth representation and a series of re-assignment offsets.
Based on this, we have further introduced learnable binary gates to encode the choice of different compression configurations. By jointly training the binary gates and network parameters, we are able to make a trade-off between pruning and quantization and automatically learn the configuration of each layer. \jing{Experiments on CIFAR-100 and ImageNet have shown that SBS is able to achieve significant computation cost and memory footprints reduction while preserving performance.} 
\liu{In the future, we plan to combine our proposed \methodshortname with other methods, \eg, neural architecture search,  to find a more compact model with better performance.}

\ifCLASSOPTIONcompsoc
  \section*{Acknowledgments}
\else
  \section*{Acknowledgment}
\fi
\jingrs{This work was partially supported by the Key-Area Research and Development Program of Guangdong Province (2019B010155002, 2018B010107001), Key Realm R\&D Program of Guangzhou 202007030007, National Natural Science Foundation of China (NSFC) 62072190, Program for Guangdong Introducing Innovative and Enterpreneurial Teams 2017ZT07X183, and National Key R\&D Program of China 2022ZD0118700.}

\ifCLASSOPTIONcaptionsoff
  \newpage
\fi
\bibliographystyle{abbrv}
{
	\bibliography{reference}
}

\begin{IEEEbiography}[{\includegraphics[width=1in,height=1.25in,clip,keepaspectratio]{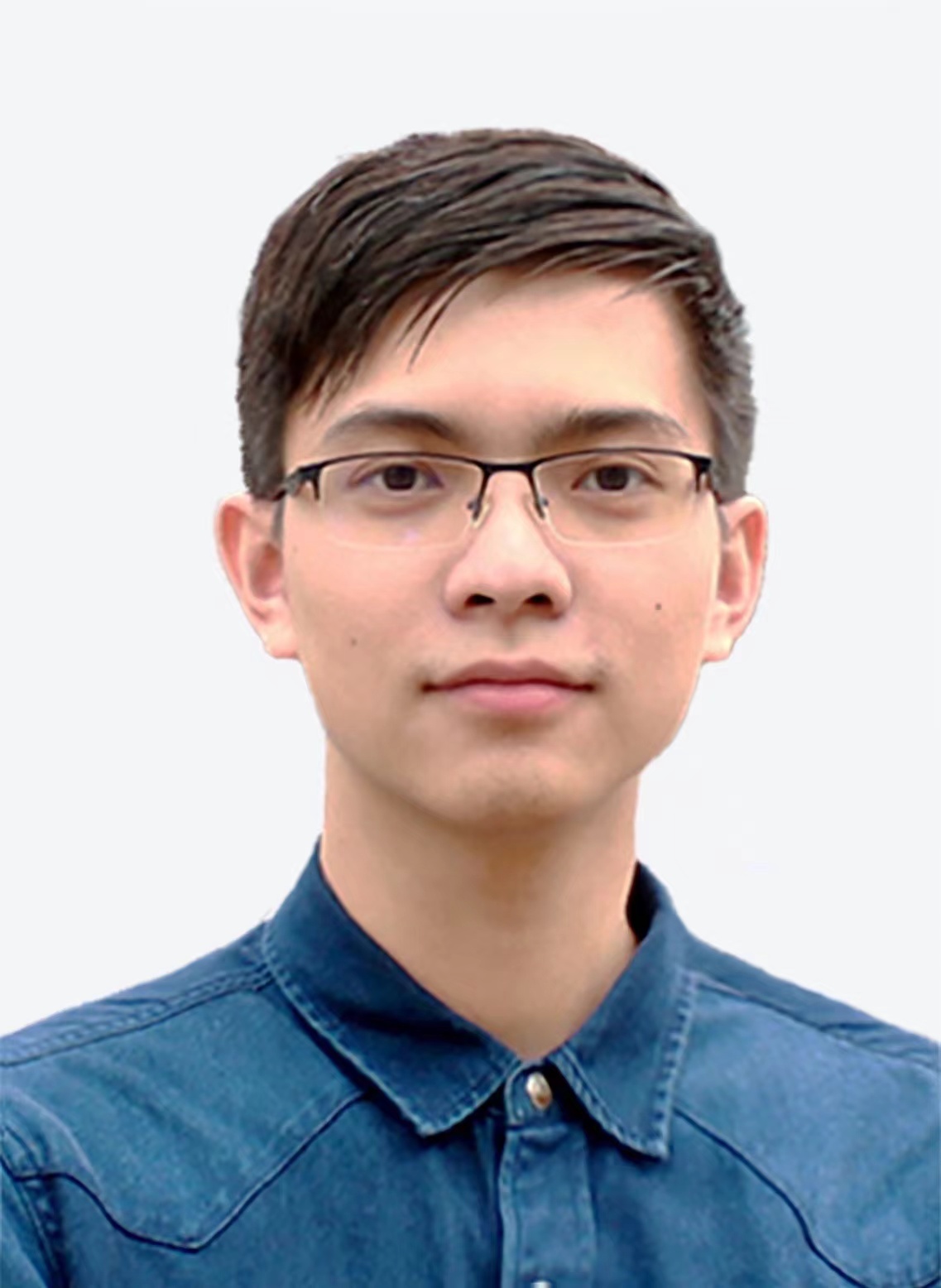}}]{Jing Liu}
is a Ph.D. student in the Faculty of Information Technology, Monash University Clayton Campus, Australia. He received his Bachelor Degree in 2017 and Master Degree in 2020, both from the School of Software Engineering at South China University of Technology, China.
His research interests include computer vision, model compression and acceleration.
\end{IEEEbiography}
\vspace{-1em}

\begin{IEEEbiography}[{\includegraphics[width=1in,height=1.25in,clip,keepaspectratio]{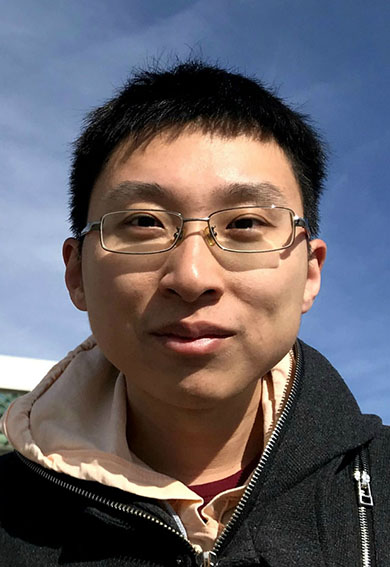}}]{Bohan Zhuang} is an assistant professor with the Faculty of Information Technology, Monash University Clayton Campus, Australia.
He received his Bachelor degree in Electronic and Information Engineering in 2014 from Dalian University of Technology, China. He has been with the School of Computer Science, University of Adelaide, Australia, where he received the Ph.D. degree in 2018 and worked as a Senior Research Associate from 2018-2020. His research interests include computer vision and machine learning. 
\end{IEEEbiography}
\vspace{-1em}

\begin{IEEEbiography}[{\includegraphics[width=1in,height=1.25in,clip,keepaspectratio]{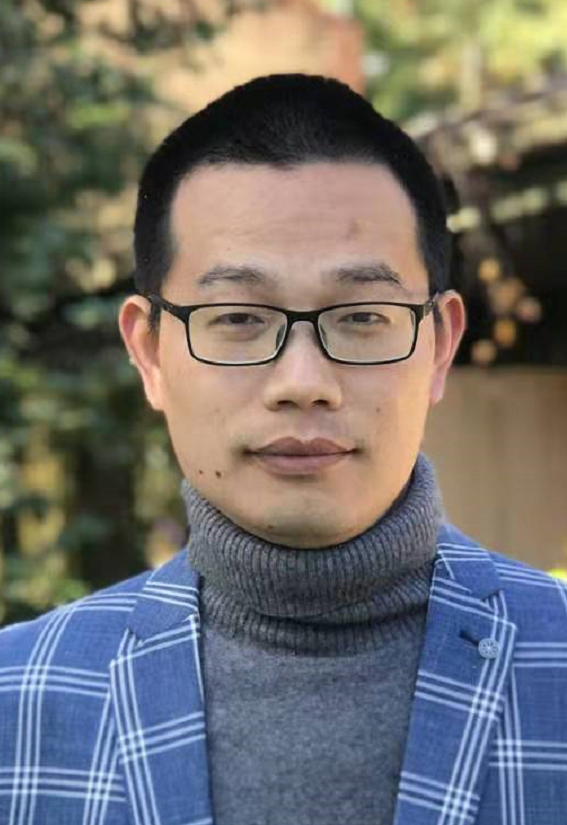}}]{Peng Chen}
was a postdoc researcher at The University of Adelaide. He received his B.S. and PhD degrees from the University of Science and Technology of China. His research interests include computer vision and machine learning algorithms as well as model compression technologies.
\end{IEEEbiography}
\vspace{-1em}

\begin{IEEEbiography}
[{\includegraphics[width=1in,height=1.25in,clip,keepaspectratio]{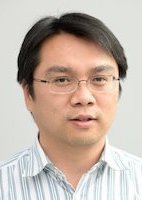}}]
{Chunhua Shen} is a Professor at the College of Computer Science and Technology, Zhejiang University.
\end{IEEEbiography}
\vspace{-1em}

\begin{IEEEbiography}[{\includegraphics[width=1in,height=1.25in,clip,keepaspectratio]{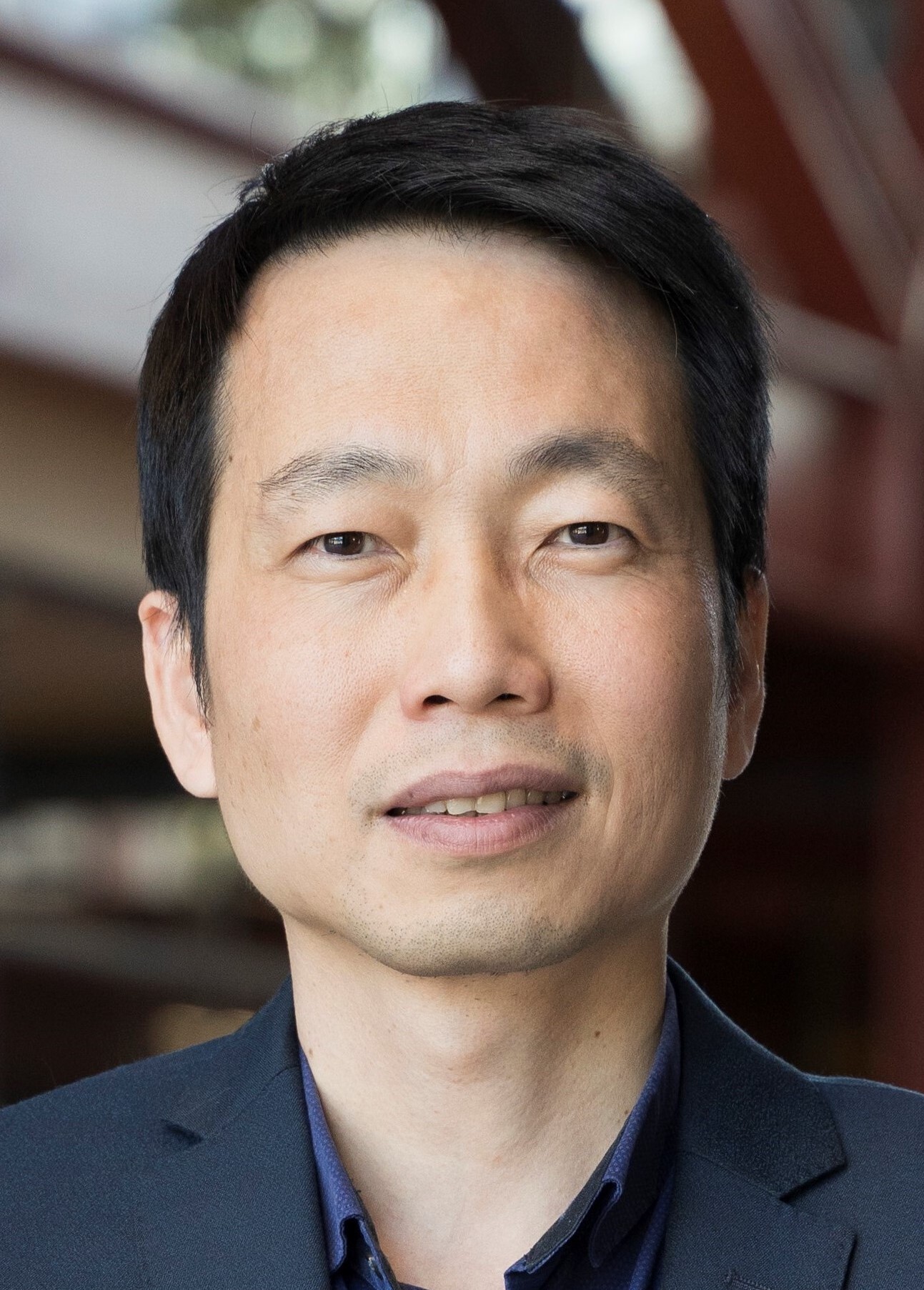}}]{Jianfei Cai}(S'98-M'02-SM'07-F’21) received his PhD degree from the University of Missouri-Columbia. He is currently a Professor and serves as the Head of the Data Science \& AI Department at Faculty of IT, Monash University, Australia. Before that, he had served as Head of Visual and Interactive Computing Division and Head of Computer Communications Division in Nanyang Technological University (NTU). His major research interests include computer vision, multimedia and visual computing. He has successfully trained 30+ PhD students with three getting NTU SCSE Outstanding PhD thesis award. He is a co-recipient of paper awards in ACCV, ICCM, IEEE ICIP and MMSP. He serves or has served as an Associate Editor for IJCV, IEEE T-IP, T-MM, and T-CSVT as well as serving as Area Chair for CVPR, ICCV, ECCV, IJCAI, ACM Multimedia, ICME and ICIP. He was the Chair of IEEE CAS VSPC-TC during 2016-2018. He has also served as the leading TPC Chair for IEEE ICME 2012 will be the leading general chair for ACM Multimedia 2024. He is a Fellow of IEEE.
\end{IEEEbiography}
\vspace{-1em}

\begin{IEEEbiography}[{\includegraphics[width=1in,height=1.25in,clip,keepaspectratio]{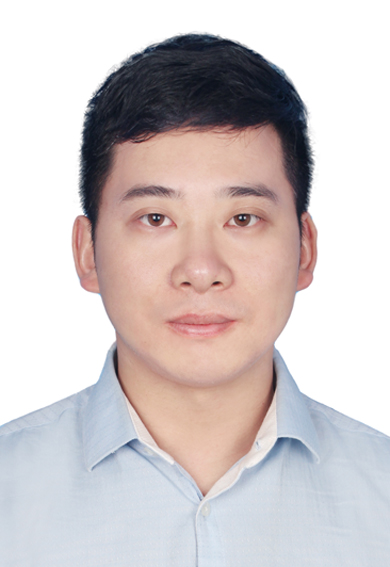}}]{Mingkui Tan}
	is currently a professor with the School of Software Engineering at South China University of Technology.
	He received his Bachelor Degree in Environmental Science and Engineering in 2006 and Master degree in Control Science and Engineering in 2009, both from Hunan University in Changsha, China. He received the Ph.D. degree in Computer Science from Nanyang Technological University, Singapore, in 2014.  From 2014-2016, he worked as a Senior Research Associate on computer vision in the School of Computer Science, University of Adelaide, Australia.
	His research interests include  machine learning,  sparse analysis, deep learning and large-scale optimization.
\end{IEEEbiography}

\appendices
\onecolumn

\begin{LARGE}
~~~\vspace{1pt}
\begin{center}
    \bf Supplementary Materials for ``\mytitle''
\end{center}
\end{LARGE}
\vspace{2pt}

\renewcommand\thefigure{\Roman{figure}}
\renewcommand{\theequation}{\Alph{equation}}
\renewcommand{\thetable}{\Roman{table}}
\setcounter{figure}{0}
\setcounter{equation}{0}
\setcounter{table}{0}
\setcounter{thm}{0}
\setcounter{prop}{0}
\setcounter{coll}{0}

In the supplementary, we provide the detailed proof of Theorem 1, Corollary 1, Proposition 1, more details and more experimental results of the proposed \methodshortname. We organize the supplementary material as follows.

\begin{itemize}[leftmargin=*]
    \item In Section~\ref{sec:proof_theorem_1}, we provide the proof for Theorem 1.
    \item In Section~\ref{sec:proof_corollary_1}, we give the proof for Corollary 1.
    \item In Section~\ref{sec:proof_proposition_2}, we offer more details about the proof of Proposition 1.
    \item In Section~\ref{sec:hardware_friendly_decomposition}, we provide more explanations on the hardware-friendly decomposition.
    \item In Section~\ref{sec:search_space}, we give more details about the search space of \methodshortname.
    \item In Section~\ref{sec:quantization_configurations}, we give more details about the quantization configurations of different methods.
    \item In Section~\ref{sec:results_mobilenetv3}, we offer more results in terms of MobileNetV3 on CIFAR-100.
    \item In Section~\ref{sec:detailed_structure_compressed_network}, we report the searched compression configurations of the compressed models.
\end{itemize}

\section{Proof of Theorem 1}\label{sec:proof_theorem_1}
To prove Theorem 1, we first provide a lemma of quantization formulas as follows.
\begin{lemma}
\label{lem:Recursive equation}
    Let $z \in \left[ 0, 1 \right]$ be a normalized full-precision input, and $\{b_j\}_{j=1}^{K} $ be a sequence of candidate bitwidths. If $b_j$ is an integer multiple of $b_{j-1}$, \ie, $b_{j} \small{=} \gamma_j b_{j-1}(j>1)$, where $\gamma_j\in \sZ^+ \backslash \{1\}$ is a multiplier, then the quantized $z_{b_{j+1}}$ can  be  decomposed into the quantized $z_{b_{j}}$ and quantized value re-assignment $r_{b_{j+1}}$
        \begin{equation}
        \begin{aligned}
        \label{eq:bit_decompistion1}
        z_{b_{j+1}} = z_{b_{j}} &+ r_{b_{j+1}},\\ \mathrm{where}~r_{b_{j+1}} &= D(z - z_{b_{j}}, s_{b_{j+1}}), ~\\
        z_{b_j} &= D(z , s_{b_j}), ~\\
        s_{b_j} &=\frac{1}{2^{b_j} - 1}.
        \end{aligned}
        \end{equation}
\end{lemma}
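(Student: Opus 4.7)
The plan is to prove Lemma~\ref{lem:Recursive equation} in two steps: (i) show that the coarse grid $s_{b_j}\sZ$ is a sub-lattice of the fine grid $s_{b_{j+1}}\sZ$, and (ii) use translation-equivariance of the rounding map to split the fine quantization of $z$ into the coarse quantization plus a fine quantization of the residual. The only nontrivial part is establishing (i); once that is in hand, the rest is an algebraic identity.

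For step (i), I would compute the ratio $s_{b_j}/s_{b_{j+1}} = (2^{b_{j+1}}-1)/(2^{b_j}-1)$ and, using the assumption $b_{j+1}=\gamma_{j+1}b_j$ with $\gamma_{j+1}\in\sZ^+\setminus\{1\}$, invoke the factorization
\begin{equation}
2^{\gamma_{j+1} b_j}-1 \;=\; (2^{b_j}-1)\sum_{i=0}^{\gamma_{j+1}-1} 2^{i b_j},
\end{equation}
so that $m:=s_{b_j}/s_{b_{j+1}} = \sum_{i=0}^{\gamma_{j+1}-1} 2^{i b_j}\in\sZ^+$. In particular, $z_{b_j}=s_{b_j}\,\mathrm{round}(z/s_{b_j}) = m\,s_{b_{j+1}}\,\mathrm{round}(z/s_{b_j})$ is an integer multiple of $s_{b_{j+1}}$; write $z_{b_j}=k\,s_{b_{j+1}}$ with $k\in\sZ$.

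For step (ii), I would use the elementary fact that shifting the argument of $\mathrm{round}(\cdot)$ by an integer shifts its output by the same integer:
\begin{equation}
\mathrm{round}\!\left(\frac{z-z_{b_j}}{s_{b_{j+1}}}\right) \;=\; \mathrm{round}\!\left(\frac{z}{s_{b_{j+1}}}-k\right) \;=\; \mathrm{round}\!\left(\frac{z}{s_{b_{j+1}}}\right)-k.
\end{equation}
Multiplying by $s_{b_{j+1}}$ and adding $z_{b_j}=k\,s_{b_{j+1}}$ gives
\begin{equation}
D(z-z_{b_j},s_{b_{j+1}}) + z_{b_j} \;=\; s_{b_{j+1}}\,\mathrm{round}\!\left(\frac{z}{s_{b_{j+1}}}\right) \;=\; D(z,s_{b_{j+1}}) \;=\; z_{b_{j+1}},
\end{equation}
which rearranges to the claim $z_{b_{j+1}} = z_{b_j} + r_{b_{j+1}}$ with $r_{b_{j+1}}=D(z-z_{b_j},s_{b_{j+1}})$.

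The main obstacle is step (i): without the arithmetic identity $(2^{b_j}-1)\mid(2^{\gamma_{j+1}b_j}-1)$, the coarse grid need not embed into the fine grid and the telescoping decomposition would fail. Everything else---the translation-equivariance of $\mathrm{round}$ and the definitions of $D$, $z_{b_j}$, $s_{b_j}$---is purely mechanical. Note also that although $z\in[0,1]$, I never need to check that the decomposed values remain in $[0,1]$ for the algebraic identity itself; the assumption is used only to justify that $z_{b_j}$ and $z_{b_{j+1}}$ are well-defined quantized approximations in the normalized range.
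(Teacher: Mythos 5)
Your proposal is correct, but it takes a genuinely different route from the paper's proof. Both arguments hinge on the same arithmetic fact—that $b_{j+1}=\gamma_{j+1}b_j$ implies $(2^{b_j}-1)\mid(2^{b_{j+1}}-1)$, so the coarse quantization levels form a subset of the fine ones—but they diverge after that. The paper writes out the two level sequences $\{A_m\}$ and $\{B_n\}$ explicitly and then performs a case analysis on where $z$ falls: first relative to the midpoint of its coarse cell $[A_2,A_3]$, then relative to the midpoint of its fine cell $[B_t,B_{t+1}]$, verifying $z_{b_{j+1}}=z_{b_j}+r_{b_{j+1}}$ in each sub-case. You instead observe that $z_{b_j}=k\,s_{b_{j+1}}$ for some $k\in\sZ$ and invoke the exact identity $\mathrm{round}(x-k)=\mathrm{round}(x)-k$ (which holds for the paper's $\mathrm{round}(x)=\lceil x-0.5\rceil$ since $\lceil y-k\rceil=\lceil y\rceil-k$), collapsing the whole verification into one line of algebra. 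Your version buys brevity, immunity to the tie-breaking convention at cell midpoints (which the paper's cases must track carefully), and immediate generality to any two step sizes with $s_{b_{j+1}}$ dividing $s_{b_j}$, independent of the specific form $1/(2^{b}-1)$. The paper's version buys an explicit geometric picture of the shared and unshared quantization levels, which mirrors the intuition given in the main text around Eq.~(\ref{eq:4_bit_decomposition}), at the cost of a longer enumeration. Both are complete proofs of the lemma.
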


\begin{proof}
We construct two sequences 
$\{A_m\}$ and $\{B_n\}$ for the $b_j$-quantized value and the $b_{j+1}$-quantized value\\
\begin{equation}
    \begin{aligned}
    \left\{0, \frac{1}{2^{b_j}-1}, \frac{2}{2^{b_j}-1}, \ldots, \frac{2^{b_j}-2}{2^{b_j}-1}, 1  \right\},
    \end{aligned}
\end{equation}
\begin{equation}
    \begin{aligned}
    \left\{0, \frac{1}{2^{b_{j+1}}-1}, \frac{2}{2^{b_{j+1}}-1}, \ldots, \frac{2^{b_{j+1}}-2}{2^{b_{j+1}}-1}, 1  \right\}.
    \end{aligned}
\end{equation}
First, we can obtain each value in $\{A_m\}$ is in $\{B_n\}$, \ie,  $\{A_m\} \subset\{B_n\} $ since $2^{b_{j+1}} \small{-} 1 \small{=} (2^ {b_j})^{\gamma_{j+1}} \small{-} 1$ is divisible by $2^{b_j} \small{-} 1$. Then we rewrite the sequence $\{B_n\}$ as:\\
\begin{equation}
\label{eq:sequence_Bn}
    \begin{aligned}
    \Bigg\{0, \frac{1}{2^{b_{j+1}}-1}, \ldots, \underbrace{\frac{1}{2^{b_j}-1}}_{A_2}, \ldots, \underbrace{\frac{t-1}{2^{b_{j+1}}-1}}_{B_t}, \underbrace{\frac{t}{2^{b_{j+1}}-1}}_{B_{t+1}}, \dots,
    \underbrace{\frac{2}{2^{b_{j}}-1}}_{A_3}, \ldots, \frac{2^{b_{j+1}}-2}{2^{b_{j+1}}-1}, 1 \Bigg\}.
    \end{aligned}
\end{equation}
Note that $A_1, A_2, \ldots \in \{B_n\}$, thus we only focus on the sequence $B_n$. For an attribute $z \in \left[0,1\right]$, it surely falls in some interval $\left[A_i, A_{i+1} \right]$. Without loss of generality, we assume $z \in \left[A_2, A_3\right] $ and $z \in \left[B_t, B_{t+1}\right] $ (see sequence \ref{eq:sequence_Bn}).

Next, we will discuss the following cases according to the position of $z$ in $\left[A_2, A_3\right]$:\\
1) If $z \in [A_2, \frac{A_2 + A_3}{2}]$, based on the definition of $z_{b_{j}} = D(z , s_{b_j})$, we have $z_{b_{j}} = A_2$. 
Moreover, if $z \in [B_t, \frac{B_t + B_{t+1}}{2}]$, then $z_{b_{j+1}} = B_t$ and $r_{b_{j+1}} = D(z - A_2, s_{b_{j+1}}) = B_t - A_2$. Thus, we get $z_{b_{j+1}} = z_{b_{j}} + r_{b_{j+1}}$; 
otherwise if $z \in (\frac{B_t + B_{t+1}}{2}, B_{t+1}]$, then $z_{b_{j+1}} = B_{t+1}$ and $r_{b_{j+1}}= B_{t+1} - A_2$. Therefore, we have the same conclusion.
\\
2) If $z \in (\frac{A_2 + A_3}{2}, A_3]$, similar to the first case, we have $z_{b_{j}} = A_3$. Moreover, if $z \in [B_t, \frac{B_t + B_{t+1}}{2}]$, then $z_{b_{j+1}} = B_t$ and $r_{b_{j+1}} = D(z - A_3, s_{b_{j+1}}) = -(A_3 - B_t)$. Thus we have $z_{b_{j+1}} = z_{b_{j}} + r_{b_{j+1}}$; otherwise if  $z \in (\frac{B_t + B_{t+1}}{2}, B_{t+1}]$, then $z_{b_{j+1}} = B_{t+1}$ and $r_{b_{j+1}} = -(A_3 - B_{t+1})$. Hence we still obtain the same conclusion.

\end{proof}

\begin{thm}
    Let $z \in \left[ 0, 1 \right]$ be a normalized full-precision input, and $\{b_j\}_{j=1}^{K} $ be a sequence of candidate bitwidths. If $b_j$ is an integer multiple of $b_{j-1}$, \ie, $b_{j} \small{=} \gamma_j b_{j-1}(j>1)$, where $\gamma_j\in \sZ^+ \backslash \{1\}$ is a multiplier, then the following quantized approximation $z_{b_K}$ can be decomposed as: 
        \begin{equation}
        \begin{aligned}
        \label{eq:bit_decompistion2}
        &z_{b_K} = z_{b_1} + \sum_{j=2}^{K }r_{b_j}, ~\\ &\mathrm{where}~r_{b_{j}} = D(z - z_{b_{j-1}}, s_{b_{j}}), ~\\
        &~~~~~~~~~~~z_{b_j} = D(z , s_{b_j}), ~\\
        &~~~~~~~~~~~s_{b_j} =\frac{1}{2^{b_j} - 1}.
        \end{aligned}
        \end{equation}
\end{thm}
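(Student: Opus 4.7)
The plan is to derive Theorem~1 as a straightforward inductive consequence of Lemma~\ref{lem:Recursive equation}, which has already been established. Since Lemma~\ref{lem:Recursive equation} supplies the one-step relation $z_{b_{j+1}} = z_{b_j} + r_{b_{j+1}}$ under exactly the divisibility condition $b_{j+1} = \gamma_{j+1} b_j$ that is also assumed in Theorem~1, the telescoping structure we need for the sum $\sum_{j=2}^{K} r_{b_j}$ is already present. So the main body of the argument will simply unroll this one-step identity $K-1$ times.

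First I would handle the base case $K=1$, for which the claim reduces to $z_{b_1} = z_{b_1}$ with an empty summation and is trivially true. I would also mention $K=2$ as an immediate instance of Lemma~\ref{lem:Recursive equation}: setting $j=1$ in the lemma yields $z_{b_2} = z_{b_1} + r_{b_2}$, matching the theorem at $K=2$. Next, for the inductive step I would assume the decomposition
\begin{equation*}
z_{b_K} = z_{b_1} + \sum_{j=2}^{K} r_{b_j}
\end{equation*}
holds for some $K \geq 2$. Since $b_{K+1} = \gamma_{K+1} b_K$ with $\gamma_{K+1} \in \sZ^+ \setminus \{1\}$, Lemma~\ref{lem:Recursive equation} applied with $j=K$ gives $z_{b_{K+1}} = z_{b_K} + r_{b_{K+1}}$, where $r_{b_{K+1}} = D(z - z_{b_K}, s_{b_{K+1}})$ matches the definition stated in Theorem~1. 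Substituting the inductive hypothesis into this equality yields $z_{b_{K+1}} = z_{b_1} + \sum_{j=2}^{K+1} r_{b_j}$, completing the induction.

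In terms of obstacles, there is essentially nothing hard left to do once Lemma~\ref{lem:Recursive equation} is in hand. The only subtlety worth a sentence is to verify that the hypothesis of Lemma~\ref{lem:Recursive equation} is met at every step of the induction: the lemma requires $b_{j+1}$ to be an integer multiple of $b_j$, and Theorem~1 assumes precisely this for every consecutive pair in the sequence $\{b_j\}_{j=1}^{K}$, so the lemma can be invoked at each index $j = 1, 2, \ldots, K-1$ without additional work. I would therefore present the proof as a two-line induction that cites Lemma~\ref{lem:Recursive equation}, making clear that the real technical content (the case analysis on which bin of the finer grid $z$ lies in) was absorbed into the lemma's proof.
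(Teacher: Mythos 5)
Your proposal is correct and matches the paper's own argument: the paper likewise derives Theorem~1 by summing the one-step identities $z_{b_j} = z_{b_{j-1}} + r_{b_j}$ from Lemma~\ref{lem:Recursive equation} over $j = 2, \dots, K$, which is exactly your telescoping induction written as a direct sum. Your added care in checking the divisibility hypothesis at each step is a fine but inessential refinement of the same route.
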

\begin{proof}
By summing the equations $z_{b_j} \small{=} z_{b_{j-1}} + r_{b_j}$ in lemma \ref{lem:Recursive equation} from $j \small{=} 2$ to $K$, we complete the results.
\end{proof} 

\section{Proof of Corollary 1}\label{sec:proof_corollary_1}
\begin{coll}
\textbf{\emph{(Normalized  Quantization Error Bound) }} 
\label{coll:bit_error1}
Given $\bz \in [0, 1]^d$ being a normalized full-precision vector, $\bz_{b_K} $ being its quantized vector with bitwidth $b_K$, where $d$ is the cardinality of $\bz$. Let $\epsilon_K = \frac{\|\bz - \bz_{b_K}\|_1}{\|\bz\|_1}$ be the normalized quantization error, then the following error bound w.r.t. $K$ holds
\begin{equation}
    \begin{aligned}
    |\epsilon_K \small{-} \epsilon_{K+1}|\leq \frac{C}{2^{b_{K}}\small{-}1},
   \end{aligned}
\end{equation}
where $~C \small{=} \frac{d}{\|\bz\|_1}$ is a constant.
\end{coll}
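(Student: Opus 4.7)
The plan is to bound the difference $|\epsilon_K - \epsilon_{K+1}|$ by applying the reverse triangle inequality in $\ell_1$-norm and then using Theorem~1 to identify the gap $\bz_{b_{K+1}} - \bz_{b_K}$ with the re-assignment offset, whose magnitude is controlled directly by the step size $s_{b_K}$.

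First I would write
$$ \big| \epsilon_K - \epsilon_{K+1} \big| \;=\; \frac{\big| \|\bz - \bz_{b_K}\|_1 - \|\bz - \bz_{b_{K+1}}\|_1 \big|}{\|\bz\|_1} \;\leq\; \frac{\|\bz_{b_{K+1}} - \bz_{b_K}\|_1}{\|\bz\|_1}, $$
where the inequality is the reverse triangle inequality applied coordinatewise and then summed. This reduces the corollary to bounding $\|\bz_{b_{K+1}} - \bz_{b_K}\|_1$.

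Next I would invoke Theorem~1 (equivalently Lemma~\ref{lem:Recursive equation} in the appendix) to obtain the recursive identity $\bz_{b_{K+1}} - \bz_{b_K} = \br_{b_{K+1}} = D(\bz - \bz_{b_K},\, s_{b_{K+1}})$. Because $D(\cdot, s_{b_K})$ rounds each coordinate to the nearest multiple of $s_{b_K} = 1/(2^{b_K}-1)$, the coordinate-wise residual satisfies $|z_i - (z_{b_K})_i| \leq s_{b_K}/2$ for all $i$. Applying the quantizer $D(\cdot, s_{b_{K+1}})$ to a scalar of magnitude at most $s_{b_K}/2$ produces a scalar of magnitude at most $s_{b_K}/2 + s_{b_{K+1}}/2 \leq s_{b_K}$, since $s_{b_{K+1}} \leq s_{b_K}$. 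Summing over the $d$ coordinates yields
$$ \|\br_{b_{K+1}}\|_1 \;\leq\; d\, s_{b_K} \;=\; \frac{d}{2^{b_K} - 1}. $$

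Combining these two steps gives $|\epsilon_K - \epsilon_{K+1}| \leq d / ((2^{b_K} - 1)\|\bz\|_1) = C/(2^{b_K} - 1)$, which is the claimed bound. There is no real obstacle here; the only subtlety is being careful that the output of $D(\cdot, s_{b_{K+1}})$ applied to a residual of magnitude at most $s_{b_K}/2$ does not exceed $s_{b_K}$ in absolute value, and this follows from the fact that $s_{b_{K+1}}$ divides $s_{b_K}$ under the hypothesis $b_{K+1} = \gamma_{K+1} b_K$.
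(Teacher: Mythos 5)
Your proposal is correct and follows essentially the same route as the paper's own proof: reverse triangle inequality to reduce the problem to bounding $\|\bz_{b_{K+1}}-\bz_{b_K}\|_1$, identification of that difference with the re-assignment offset $\br_{b_{K+1}}$ via the decomposition, and a coordinatewise bound of the offset by the step size $s_{b_K}$. If anything, your justification that $|r^i_{b_{K+1}}|\leq s_{b_K}/2 + s_{b_{K+1}}/2 \leq s_{b_K}$ is slightly more explicit than the paper's, which asserts the bound $|r^i_{b_{K+1}}|\leq s_{b_K}$ directly.
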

\begin{proof}
Let $\bz = [z^1, z^2, \ldots, z^d]$ and $\bz_{b_K} = [z_{b_K}^1, z_{b_K}^2, \ldots, z_{b_K}^d]$, we rewrite the normalized quantization error as:
\begin{equation}
    \epsilon_K = \frac{\sum_{i=1}^d |z^i-z_{b_K}^i|}{\|\bz\|_1}.
\end{equation}
Then, we get the following result
\begin{equation}
\begin{aligned}
    |\epsilon_K \small{-} \epsilon_{K+1}| \small{=} \frac{1}{\|\bz\|_1} \sum_{i=1}^d \left| |z^i-z_{b_K}^i| \small{-} |z^i-z_{b_{K+1}}^i| \right| &\small{\leq} \frac{1}{\|\bz\|_1} \sum_{i=1}^d \left| z_{b_K}^i \small{-} z_{b_{K+1}}^i \right|\small{=} \frac{1}{\|\bz\|_1} \sum_{i=1}^d \left| r_{b_{K+1}}^i \right|\\
    &\small{\leq} \frac{1}{\|\bz\|_1} \sum_{i=1}^d \left| s_{b_{K}}\right| \small{=} \frac{d}{\|\bz\|_1 (2^{b_{K}}\small{-}1)} = \frac{C}{2^{b_K} - 1}.
\end{aligned}
\end{equation}
\end{proof}

To empirically demonstrate Corollary~\ref{coll:bit_error}, we perform quantization on a random toy data $\bz \in [0, 1]^{100}$ and show the normalized quantization error change $|\epsilon_K \small{-} \epsilon_{K+1}|$ and error bound $\frac{C}{2^{b_{K}}\small{-}1}$ in Fig.~\ref{fig:quantization_error_supp}. From the results, the normalized quantization error change decreases quickly as the bitwidth increases and is bounded by $C/(2^{b_{K}}\small{-}1)$.

\begin{figure}[!t]
	\centering
	\includegraphics[width = 0.38\linewidth]{figures/error_bound.pdf}
	\caption{\jing{The normalized quantization error change and error bound vs. bitwidth. The normalized quantization error change (red line) is bounded by $C/(2^{b_{K}}\small{-}1)$ (blue line).}
	}
	\label{fig:quantization_error_supp}
\end{figure}

\section{Proof of Proposition 1}
\label{sec:proof_proposition_2}
To prove Proposition 1, we derive a lemma of normalized quantization error as follows.
\begin{lemma}
\label{lemma2:error decrese}
Given $\bz \in [0,1]^d$ being a normalized full-precision vector and $\bz_{b_K}$ being its quantized vector with bitwidth $b_K$, where $d$ is the cardinality of $\bz$, let $\epsilon_K \small{=} \frac{\|\bz -\bz_{b_K}\|_1}{\|\bz\|_1}$ be  the
 normalized quantization error, then $\epsilon_K$ decreases with the increase of K.
 \end{lemma}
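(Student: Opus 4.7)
The plan is to show the stronger coordinate-wise statement $|z^i - z_{b_{K+1}}^i| \leq |z^i - z_{b_K}^i|$ for every index $i$, then sum over $i$ and divide by $\|\bz\|_1$ to conclude $\epsilon_{K+1} \leq \epsilon_K$. This reduces everything to a one-dimensional claim about the scalar quantizer $D(\cdot, s)$.

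First I would exploit the divisibility condition on the candidate bitwidths. Since $b_{K+1} = \gamma_{K+1} b_K$ with $\gamma_{K+1} \in \sZ^+$, the identity
\begin{equation}
2^{b_{K+1}} - 1 = (2^{b_K})^{\gamma_{K+1}} - 1 = (2^{b_K} - 1)\sum_{t=0}^{\gamma_{K+1}-1}(2^{b_K})^t
\end{equation}
shows that $2^{b_K}-1$ divides $2^{b_{K+1}}-1$, so the step size $s_{b_{K+1}} = 1/(2^{b_{K+1}}-1)$ is an integer divisor of $s_{b_K} = 1/(2^{b_K}-1)$. Consequently the quantization grid $G_{K+1} := \{k\, s_{b_{K+1}} : k \in \sZ\} \cap [0,1]$ for bitwidth $b_{K+1}$ is a refinement of the grid $G_K := \{k\, s_{b_K} : k \in \sZ\} \cap [0,1]$ for bitwidth $b_K$, i.e., $G_K \subseteq G_{K+1}$.

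Next I would recall that, by the definition $D(z,s) = s\cdot\mathrm{round}(z/s)$, the scalar quantized value $z^i_{b_j} = D(z^i, s_{b_j})$ is a nearest point of $G_j$ to $z^i$ (ties are broken by rounding, but this does not affect the distance). Since $G_K \subseteq G_{K+1}$, the nearest point on $G_{K+1}$ is at least as close to $z^i$ as the nearest point on $G_K$:
\begin{equation}
|z^i - z^i_{b_{K+1}}| = \min_{g \in G_{K+1}} |z^i - g| \leq \min_{g \in G_K} |z^i - g| = |z^i - z^i_{b_K}|.
\end{equation}
Summing this inequality over the $d$ coordinates yields $\|\bz - \bz_{b_{K+1}}\|_1 \leq \|\bz - \bz_{b_K}\|_1$, and dividing by $\|\bz\|_1$ gives $\epsilon_{K+1} \leq \epsilon_K$, which proves that $\epsilon_K$ is monotonically non-increasing in $K$.

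The only subtlety I foresee is making the grid-refinement argument airtight at the boundary of $[0,1]$ (the input is assumed normalized and the step size divides $1$ because $2^{b_j}-1$ divides the denominator of the grid spacing, so $0$ and $1$ both lie in every $G_j$) and justifying that $D(\cdot,s)$ really returns the nearest grid point rather than a boundary-clipped value; since the input $z^i \in [0,1]$ and the grid already tiles $[0,1]$ by integer multiples of $s_{b_j}$, no clipping occurs and the nearest-point interpretation is valid. Everything else is a routine application of the decomposition in Theorem~\ref{prop:bit_decompistion}.
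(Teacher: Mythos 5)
Your proof is correct. It rests on the same structural fact as the paper's proof --- that the divisibility $2^{b_K}-1 \mid 2^{b_{K+1}}-1$ makes the bitwidth-$b_{K+1}$ grid a refinement of the bitwidth-$b_K$ grid (this is exactly the paper's observation $\{A_m\}\subset\{B_n\}$ in Lemma~1) --- but you finish differently. The paper derives the coordinate-wise inequality $|z - z_{b_j}| \ge |z - z_{b_{j+1}}|$ by an explicit three-case analysis on where $z$ sits inside the coarse cell $[A_2,A_3]$ relative to the fine cell $[B_t,B_{t+1}]$, while you replace all of that with the single variational observation that $D(\cdot,s)$ returns a nearest grid point, so minimizing distance over the superset $G_{K+1}\supseteq G_K$ can only shrink the error. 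Your route is shorter, immune to bookkeeping errors in the case split, and generalizes verbatim to any pair of nested grids; the paper's case analysis has the minor side benefit of identifying exactly when equality holds, which you lose (though the lemma does not need it). You also correctly handle the two subtleties --- tie-breaking in $\mathrm{round}$ not affecting distance, and the absence of boundary clipping since $0$ and $1$ lie on every grid. One small note: your closing remark that the rest is "a routine application of the decomposition in Theorem~1" is unnecessary --- the monotonicity follows entirely from the grid refinement and never invokes the decomposition.
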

\begin{proof}
Let $\bz = [z^1, z^2, \ldots, z^d]$ and $\bz_{b_K} = [z_{b_K}^1, z_{b_K}^2, \ldots, z_{b_K}^d]$, we rewrite the normalized quantization error as:
\begin{equation}
\label{revised ever}
       \epsilon_K = \frac{\sum_{i=1}^d |z^i-z_{b_K}^i|}{\|\bz\|_1}.
\end{equation}
Since $z_{b_K}$ is the only variable, we focus on one term $z-z_{b_K}$ of the numerator  in Eq. (\ref{revised ever}). Next, we will prove $\vert z-z_{b_{j}} \vert \geq \vert z-z_{b_{j+1}} \vert$. Following the sequence (\ref{eq:sequence_Bn}) in Lemma \ref{lem:Recursive equation} and without loss of generality, we assume $z \in \left[B_t, B_{t+1}\right] \subset \left[A_2, A_3\right] $, then we have the following conclusions:\\
1) If $A_2 = B_t < B_{t+1} < A_3$, then $\vert z-z_{b_{j}} \vert = \vert z-A_2 \vert \geq \vert z-z_{b_{j+1}} \vert$, where $\vert z-A_2 \vert {=} \vert z-z_{b_{j+1}} \vert$ happens only when $z \in [B_t, \frac{B_t+B_{t+1}}{2}]$ ;\\
2) If $A_2 < B_t < B_{t+1} < A_3$, then $\vert z-z_{b_{j}} \vert \geq B_{t+1}-B_t > \frac {B_{t+1}-B_t}{2} \geq \vert z-z_{b_{j+1}} \vert$;\\
3) If $A_2 < B_t < B_{t+1} = A_3$, then $\vert z-z_{b_{j}} \vert = \vert z-A_3 \vert \geq \vert z-z_{b_{j+1}} \vert$, where $\vert z-A_3 \vert {=} \vert z-z_{b_{j+1}} \vert$ happens only when $z \in [\frac{B_t+B_{t+1}}{2}, B_{t+1}]$.

Therefore, we have $\vert z-z_{b_{j}} \vert \geq \vert z-z_{b_{j+1}} \vert$. Thus, summing from $i=1$ to $d$ complete the conclusion.
\end{proof}

\begin{prop}
Consider a linear regression problem $\min \limits_{\bw \in \sR^{d}} \underset{(\bx, y) \sim \gD}  \sE[(y - \bw\bx)^2]$ with data pairs $\{(\bx, y)\}$, where $\bx \in \sR ^d$ is sampled from $\gN(0,\sigma^2\bf I)$ and $y \in \sR $ is its response. Consider using \methodshortname and DNAS to quantize the linear regression weights.
Let $\bw_L^t$ and $\bw_D^t$ be the quantized regression weights of \methodshortname and DNAS at the iteration $t$ of the optimization, respectively.
Then the following equivalence holds during the optimization process
\begin{equation}
    \begin{aligned}
    &\lim_{t \rightarrow \infty} \sE _{(\bx, y)\sim\gD}[(y \small{-} \bw^t_{\text{L}}\, \bx)^2] \small{=} \lim_{t \rightarrow \infty} \sE _{(\bx, y)\sim\gD}[(y \small{-} \bw^t_{\text{D}}\, \bx)^2],\\
    &\mathrm{where}~\bw^t_{\text{L}} \small{=} \bw^t_{L,b_1}\! \!\small{+} g_{b_2}^t \big( \br^t_{b_2} \!\small{+}\! \cdots + \!g_{b_{K-1}}^t \!\big (\br^t_{b_{K-1}} \!\small{+} g_{b_K}^t\! \br^t_{b_K} \! \big)\! \big),\\
    &~~~~~~~~~~~\br^t_{b_{j}} = D(\bw^t_{\text{L}} - \bw^t_{{\text{L}},{b_{j-1}}},s_{b_{j}}),~ j=2,\cdots,K,\\
    &~~~~~~~~~~~\bw^t_{\text{D}} = \sum_{i=1}^K p_i^t\bw_{b_i}^t,~~~\sum_{i=1}^K p^t_i = 1.
     \end{aligned}
\end{equation}
\end{prop}
\begin{proof}
We first rewrite the optimization objective as:
\begin{equation*}
    \begin{aligned}
    (y \small{-} \bw^t\, \bx)^2 \small{=}  (\bw^{\ast}\, \bx \small{-} \bw^t \, \bx \small{+} \Delta_{\bx} )^2 \small{=}  \left((\bw^{\ast}\small{-} \bw^t) \, \bx  )\right)^2 \small{+}  2 \Delta_{\bx} \left \langle \bw^{\ast}\small{-} \bw^t , \bx \right \rangle \small{+} \Delta_{\bx}^2 \small{=} (\br^t \, \bx  )^2 \small{+} 2 \Delta_{\bx} \left \langle \br^t , \bx \right \rangle \small{+} \Delta_{\bx}^2, 
    \end{aligned}
\end{equation*}
where $\bw^*$ is the optimal value of $\bw$, $\Delta_{\bx}$ is the regression error that is a constant, and $\br^t$ denotes the weight quantization error at the $t$-th iteration. Taking expectation, we have following results
\begin{equation*}
    \begin{aligned}
    \sE _{(\bx, y)\sim\gD}[(y \small{-} \bw^t\, \bx)^2] 
    &\small{=} \sE _{\bx\sim \gN(0,\sigma^2\bf I)} (\br^t \, \bx  )^2 \small{+} 2 \Delta_{\bx} \left \langle \br^t , \bx \right \rangle \small{+} \Delta_{\bx}^2\\
    & \small{=} \sE _{\bx\sim \gN(0,\sigma^2\bf I)} (\br^t \, \bx  )^2 \small{+} \Delta_{\bx}^2\\
    & \small{=} \sE _{\bx\sim \gN(0,\sigma^2\bf I)} \left(\sum_{i=1}^d \br_i^t \bx_i \right)^2 \small{+} \Delta_{\bx}^2\\
    & \small{=} \sE _{\bx\sim \gN(0,\sigma^2\bf I)} \sum_{i=1}^d (\br_i^t \bx_i )^2 \small{+} 2\sum \limits_{k \neq j} \br_{k}^t \br_{j}^t \bx_k \bx_j \small{+} \Delta_{\bx}^2\\
    & \small{=}  \sum_{i=1}^d (\br_i^t)^2 \sE _{\bx\sim \gN(0,\sigma^2\bf I)}(\bx^2) \small{+} 2\sum \limits_{k \neq j} \br_{k}^t \br_{j}^t \left(\sE _{\bx\sim \gN(0,\sigma^2\bf I)} (\bx) \right)^2 \small{+} \sE _{\bx\sim \gN(0,\sigma^2\bf I)} \Delta_{\bx}^2\\
    & \small{=}  \sum_{i=1}^d (\br_{i}^t )^2{\sigma}^2  \small{+}  \sE _{\bx\sim \gN(0,\sigma^2\bf I)}\Delta_{\bx}^2,\\
    \end{aligned}
\end{equation*}
where we use assumption of Gaussian distribution $\sE (\bx) = 0$ and $\sE (\bx^2) = \sigma^2$.

The last equation suggests the training quantization error is only related to the weight quantization error $(\br_{i}^t)$ in an iteration since the term $\sE _{\bx\sim \gN(0,\sigma^2\bf I)}\Delta_{\bx}^2$ is a constant. Followed by Lemma  \ref{lemma2:error decrese}, the quantization error decreases as the bitwidth increases. This means that for each iteration, both the \methodshortname and the DNAS are able to search for the $K$-bit to lower the quantization error by gradient descent.

Therefore, with gradient descent, for attribute $\eta/\sigma^2>0$, there exists some iteration $T$ so that when $t>T$, we have 
\begin{equation*}
    \begin{aligned}
    \left| \sum_{i=1}^d (\tilde{\br}_{i}^t )^2 \small{-}  \sum_{i=1}^d (\hat{\br}_{i}^t )^2\right|\leq \eta/\sigma^2,
    \end{aligned}
\end{equation*}
where $\tilde{\br} = \bw^{\ast}\small{-} \bw_{\text{L}}$  and $\hat{\br} = \bw^{\ast}\small{-} \bw_{\text{D}}$ denote  the weight quantization error with the \methodshortname and the DNAS, respectively. Then multiplying $\sigma^2$ gives that
\begin{equation*}
    \begin{aligned}
    \bigg |    \sE _{(\bx, y)\sim\gD}[(y \small{-} \bw_L^t\, \bx)^2] \small{-}      \sE _{(\bx, y)\sim\gD}[(y \small{-} \bw_D^t\, \bx)^2]\bigg | \leq \eta.
    \end{aligned}
\end{equation*}
Based on the definition of limit, the following result holds
\begin{equation*}
    \begin{aligned}
    \lim_{t \rightarrow \infty} \bigg |    \sE _{(\bx, y)\sim\gD}[(y \small{-} \bw_L^t\, \bx)^2] \small{-}      \sE _{(\bx, y)\sim\gD}[(y \small{-} \bw_D^t\, \bx)^2]\bigg | \small{=} 0.
    \end{aligned}
\end{equation*}
Therefore, we have
\begin{equation*}
    \lim_{t \rightarrow \infty} \sE _{(\bx, y)\sim\gD}[(y \small{-} \bw^t_{\text{L}}\, \bx)^2] \small{=} \lim_{t \rightarrow \infty} \sE _{(\bx, y)\sim\gD}[(y \small{-} \bw^t_{\text{D}}\, \bx)^2].
\end{equation*}
\end{proof}

\section{More details about hardware-friendly decomposition}
\label{sec:hardware_friendly_decomposition}
In this section, we provide more explanations on hardware-friendly decomposition.
As mentioned in Section~3.1, $b_1$ and $\gamma_j$ can be set to arbitrary appropriate integer values (\eg, 2, 3, etc.). By default, we set $b_1$ and $\gamma_j$ to 2 for better hardware utilization. On general-purpose computing devices (\eg, CPU, GPU), $byte$ (8 bits) is the lowest data type for operations. Other data types and ALU registers are all composed of multiple bytes in width. By setting $b_1$ and $\gamma_j$ to 2, 2-bit/ 4-bit/ 8-bit quantization values can be packed into $byte$ (or $short, int, long$) data type without bit wasting. Otherwise, 
if $b_1$ and $\gamma_j$ set to other values,
it is inevitable to have wasted bits when packing mixed-precision quantized tensors on general-purpose devices. For example, a 32-bit $int$ data type can be used to store ten 3-bit quantized values with 2 bits wasted. One might argue that these 2 bits can be leveraged with the next group of 3-bit data, but it will result in irregular memory access patterns, which will degrade the hardware utilization more seriously. Moreover, 8-bit quantization is able to achieve comparable performance with the full precision counterparts for many networks~\cite{Esser2020LEARNED} and the normalized quantization error change is small when the bitwidth is greater than 8 as indicated in Corollary~1 and Fig.~2. Therefore, there is no need to consider a bitwidth higher than 8.%

\section{More details about search space for model compression}
\label{sec:search_space}
Given an uncompressed network with $L$ layers, 
\guo{we use $C_l$ to denote the number of filters at the $l$-th layer.}
\guo{Let $B$ be the number of filters in a group. For any layer $l$, there would be $G = \left \lceil \frac{C_l}{B} \right \rceil$ groups in total.}
\guo{Since we quantize both weights and activations, given $K$ candidate bitwidths, there are $K^2$ different quantization configurations for each layer.}
Thus, 
\guo{for the whole network with $L$ layers}, 
the size of the search space $\Omega$ can be computed by
\begin{equation}
    \label{eq:search_space}
    | \Omega | =  \prod_{l=1}^L \left(K^2 \times G \right).
\end{equation}
Eq.~(\ref{eq:search_space}) indicates that the search space is extremely large when we have large $L$, $K$, and smaller $B$. For example, the search space of ResNet-18 is $|\Omega| \approx 4.3 \times 10^{34}$ with $K=3$ and $B=16$, which is large enough to cover the potentially good configurations.

\begin{table}[!htb]
\renewcommand{\arraystretch}{1.3}
\caption{
\zheng{Comparisons of different methods in terms of MobileNetV3 on CIFAR-100.}
}
\begin{center}
\scalebox{1.0}
{
\begin{tabular}{ccccc}
\toprule
Method & BOPs (M) & BOP comp. ratio & Top-1 Acc. (\%) & Top-5 Acc. (\%)  \\ 
\midrule
Full-precision & 68170.1 & 1.0 & 76.1 & 93.9\\
6-bit precision & 2412.6 & 28.3 & 76.1$\pm$0.0 & 93.7$\pm$0.0\\
\cdashline{1-5}
DQ & 2136.3 & 31.9 & 75.9$\pm$0.1 & 93.7$\pm$0.1   \\
HAQ & 2191.7 & 31.1 & 76.1$\pm$0.1 & 93.5$\pm$0.0\\
DNAS  & 2051.9 & 33.2 & 76.1$\pm$0.1 & 93.7$\pm$0.1   \\
\cdashline{1-5}
\methodshortname-P (Ours) & 59465.8 & 1.1 & 76.0$\pm$0.0 & 93.5$\pm$0.0   \\
\methodshortname-Q (Ours) & 2021.9 & 33.7 & 76.1$\pm$0.1 & 93.7$\pm$0.1 \\
\cellcolor{ZhuangGreen} \methodshortname (Ours) & \cellcolor{ZhuangGreen}\textbf{2006.6} & \cellcolor{ZhuangGreen}\textbf{34.0} & \cellcolor{ZhuangGreen}\textbf{76.1$\pm$0.1} & \cellcolor{ZhuangGreen}\textbf{93.7$\pm$0.1}  \\
\bottomrule
\end{tabular}
}
\end{center}
\label{table:results_on_MobileNetV3}
\vspace{-0.15in}
\end{table}

\section{More details about quantization configurations}
\label{sec:quantization_configurations}
Note that all the compared methods follow the quantization configurations in their papers. Specifically, for DQ~\cite{Uhlich2020Mixed}, we parameterize the fixed-point quantizer using case U3 with $\boldsymbol{\theta} = \left[ d, q_{\mathrm{max}} \right]$. We initialize the weights using a pre-trained model. The initial step size is set to $d = 2^{\lfloor \log_2(\max (| \bW |) / (2^{b-1}-1) \rfloor}$ for weights and $2^{-3}$ for activations. The remaining quantization parameters are set such that  the initial bitwidth is 4-bit. For HAQ~\cite{wang2019haq}, we first truncate the weights and activations into the range of $\left[ -v_w, v_w \right]$ and $\left[ 0, v_x \right]$, respectively. We then perform linear quantization for both weights and activations. To find more proper $v_w$ and $v_x$, we minimize the KL-divergence between the original weight distribution $\bW$ and the quantized one $Q(\bW)$. For DNAS~\cite{wu2018mixed}, we follow DoReFa-Net~\cite{zhou2016dorefa} to quantize weights and follow PACT~\cite{choi2018pact} to quantize activations. We initialize the learnable clipping level to 1. For DJPQ~\cite{wang2020diff} and Bayesian Bits~\cite{van2020bayesian}, we directly get the results from original papers.
For other methods, we use the quantization function introduced in Section~3. The trainable quantization intervals $v_x$ and $v_w$ are initialized to 1.

\section{More results in terms of MobileNetV3 on CIFAR-100}
\label{sec:results_mobilenetv3}
\liu{To investigate the effectiveness of the proposed \methodshortname on the lightweight models, we apply our methods to MobileNetV3 on CIFAR-100. Following LSQ+~\cite{Bhalgat_2020_CVPR_Workshops}, we introduce a learnable offset to handle the negative activations in hard-swish. We show the results in Table~\ref{table:results_on_MobileNetV3}. From the results, our proposed \methodshortname outperforms the compared methods with much fewer computational cost, which demonstrates the effectiveness of the proposed \methodshortname on the lightweight models.}

\section{More details about the compression configurations of the compressed models}
\label{sec:detailed_structure_compressed_network}
In this section, we illustrate the detailed compression configurations (\ie, bitwidth and/or pruning rate) of each layer from the quantized and compressed ResNet-18 and MobileNetV2 in Figures~\ref{fig:lbsq_r18}, ~\ref{fig:lbsq_mbv2}, ~\ref{fig:config_r18}, and ~\ref{fig:config_mbv2}. From the results, our \methodshortname assigns more bitwidths to the weights in the downsampling convolutional layer in ResNet-18 and depthwise convolutional layer in MobileNetV2. Intuitively, one possible reason is that the number of parameters and computational cost of these layers are much smaller than other layers. Compressing these layers may lead to a significant performance decline. Moreover, our \methodshortname inclines to prune more filters in the shallower layers of ResNet-18, which can significantly reduce the number of parameters and computational overhead. To further explore the relationship between pruning and quantization, we illustrate the detailed configurations of each layer from a more compact MobileNetV2, as shown in Fig.~\ref{fig:config_mbv2_compact}. From the figure, if a layer is set to a high pruning rate, our \methodshortname tends to select a higher bitwidth to compensate for the performance decline. In contrast, if a layer is set to a low pruning rate, our \methodshortname tends to select a lower bitwidth to reduce the model size and computational costs.

\begin{figure*}[!htb]
    \centering
    \includegraphics[width=0.9\linewidth]{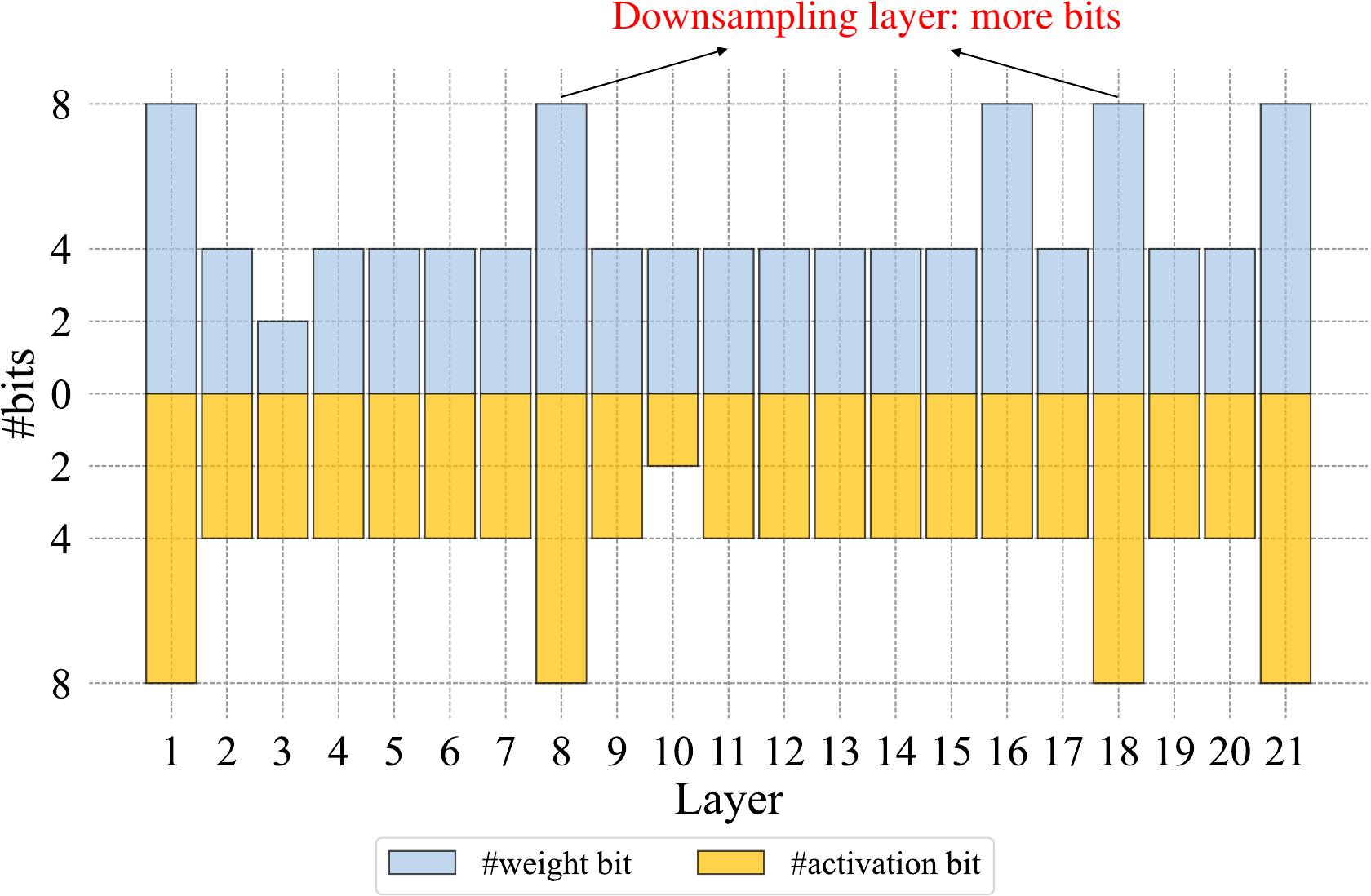}
    \caption{\methodshortname-Q searched bitwidths of ResNet-18 on ImageNet.
    The Top-1 accuracy, Top-5 accuracy and BOPs of the quantized ResNet-18 are 69.5\%, 89.0\% and 34.4G, respectively.
    }
    \label{fig:lbsq_r18}
\end{figure*}

\begin{figure*}[!htb]
    \centering
    \includegraphics[width=0.9\linewidth]{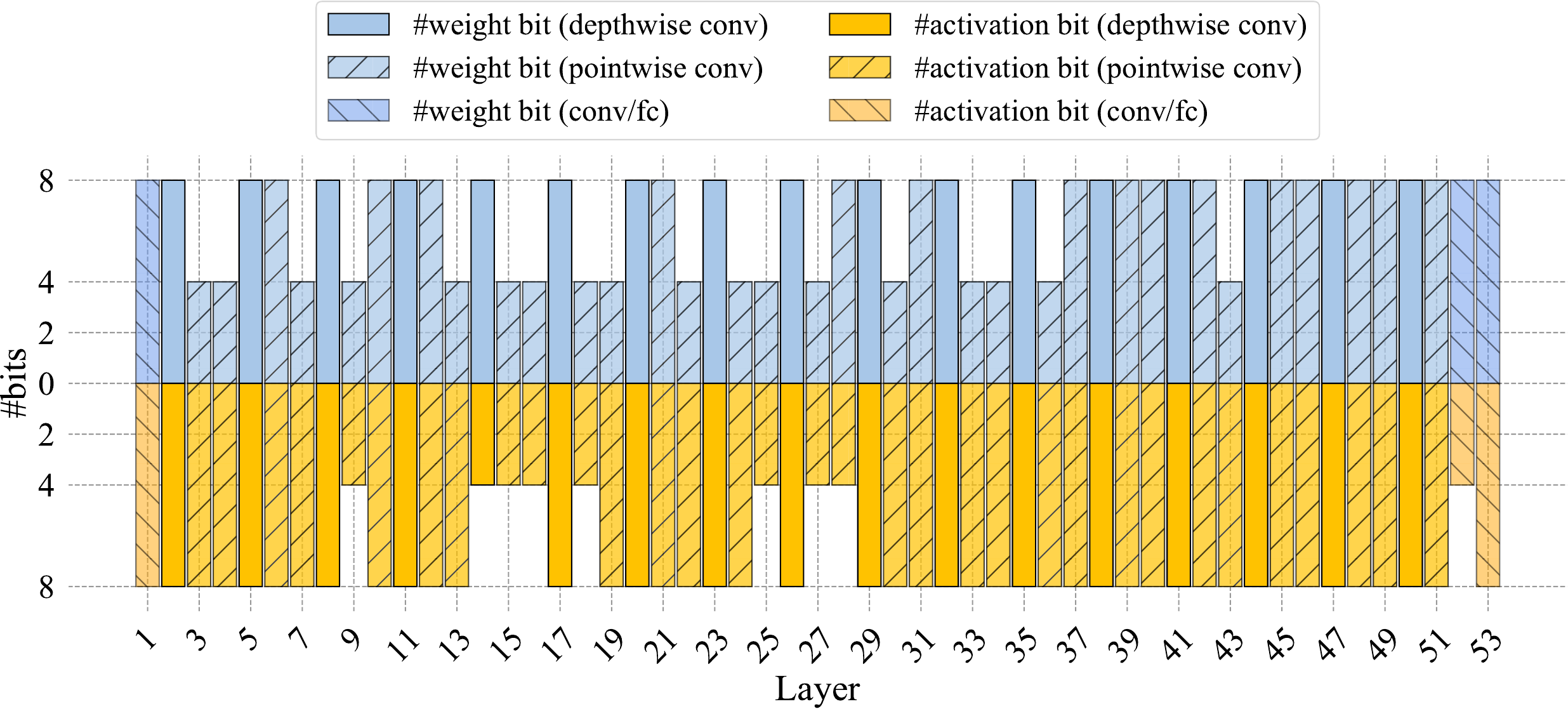}
    \caption{\methodshortname-Q searched bitwidths of MobileNetV2 on ImageNet.
    The Top-1 accuracy, Top-5 accuracy and BOPs of the quantized MobileNetV2 are 71.6\%, 90.2\% and 13.6G, respectively.
    }
    \label{fig:lbsq_mbv2}
\end{figure*}

\begin{figure*}[!htb]
    \centering
    \subfigure[
    \methodshortname searched bitwidths of ResNet-18.
    ]{
        \includegraphics[width=0.9\linewidth]{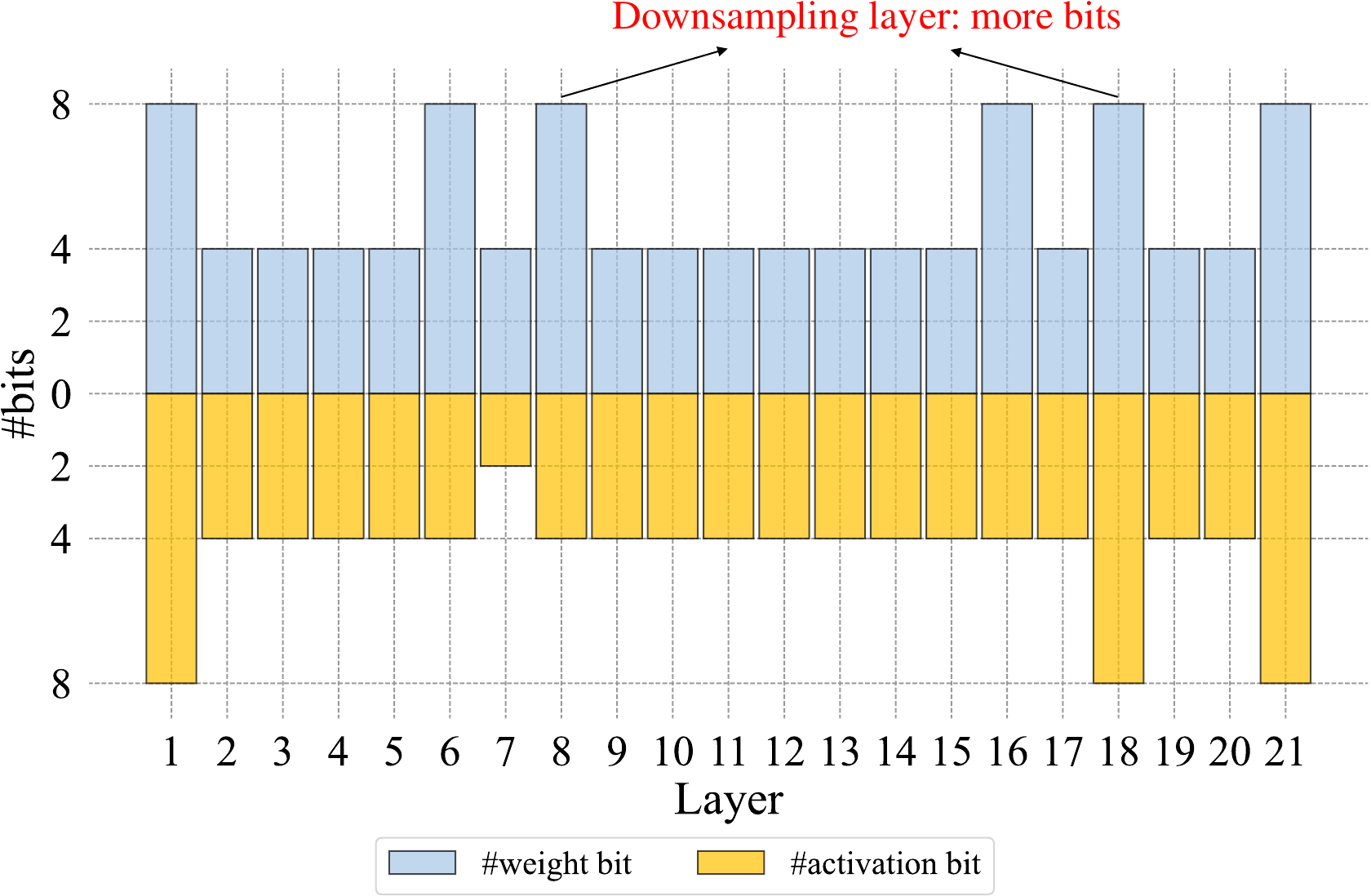}
        \label{fig:bitwidth_r18}
    }
    \hspace{3mm}
    \subfigure[
    \methodshortname searched pruning rates of ResNet-18.
    ]{
        \includegraphics[width=0.9\linewidth]{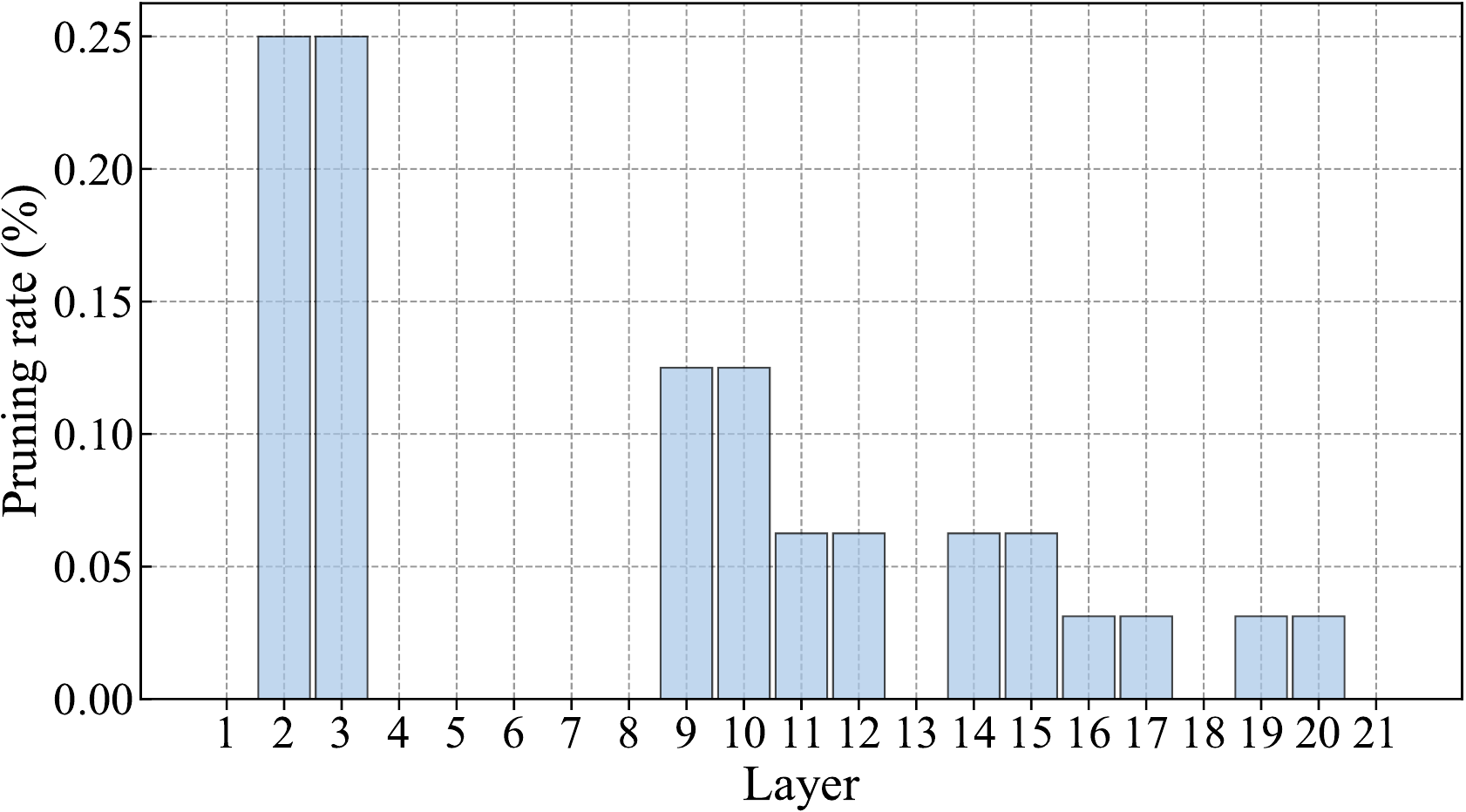}
        \label{fig:pruning_rate_r18}
    }
    \caption{
    \methodshortname searched configurations of ResNet-18 on ImageNet.
    The Top-1 accuracy, Top-5 accuracy and BOPs of the compressed ResNet-18 are 69.6\%, 89.0\% and 34.0G, respectively.
    }
    \label{fig:config_r18}
\end{figure*}

\begin{figure*}[!htb]
    \centering
    \subfigure[
    \methodshortname searched bitwidths of MobileNetV2.
    ]
    {
        \includegraphics[width=1.0\linewidth]{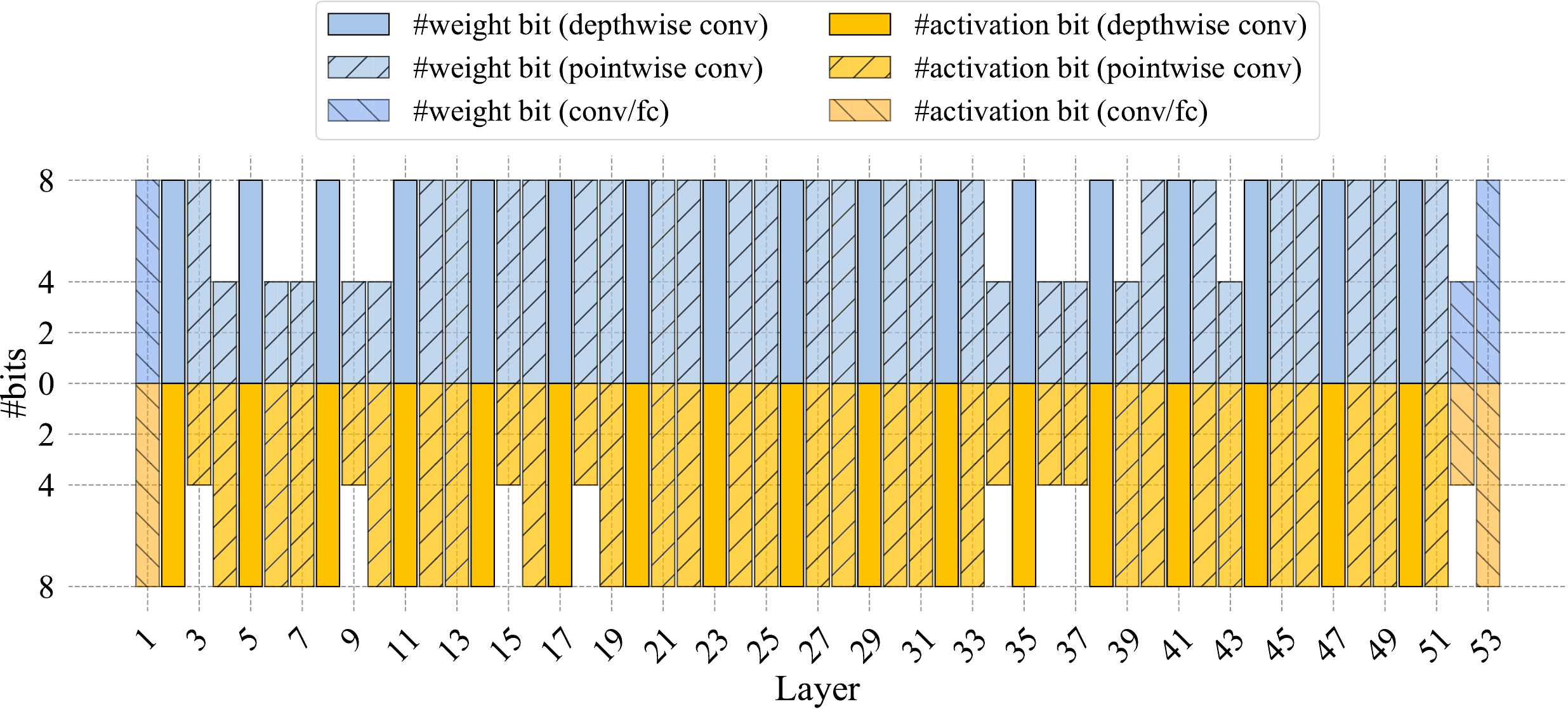}
        \label{fig:bitwidth_mbv2_13554}
    }
    \hspace{3mm}
    \subfigure[
    \methodshortname searched pruning rates of MobileNetV2.
    ]{
        \includegraphics[width=1.0\linewidth]{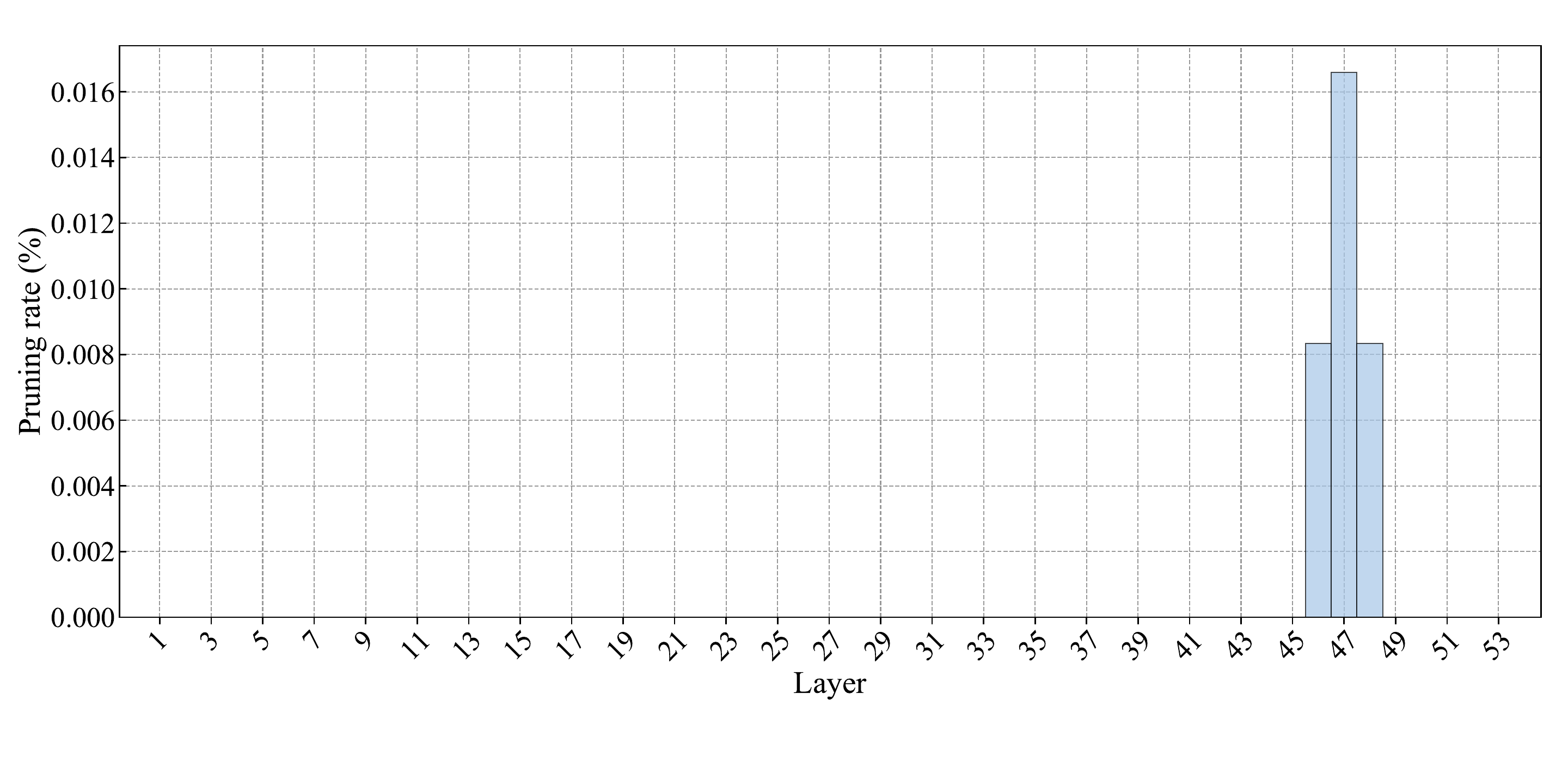}
        \label{fig:pruning_rate_mbv2_13554}
    }
    \caption{\methodshortname searched configurations of MobileNetV2 on ImageNet.
    The Top-1 accuracy, Top-5 accuracy and BOPs of the compressed MobileNetV2 are 71.8\%, 90.3\% and 13.6G, respectively.
    }
    \label{fig:config_mbv2}
\end{figure*}

\begin{figure*}[!htb]
    \centering
    \subfigure[
    \methodshortname searched bitwidths of MobileNetV2.
    ]{
        \includegraphics[width=1.0\linewidth]{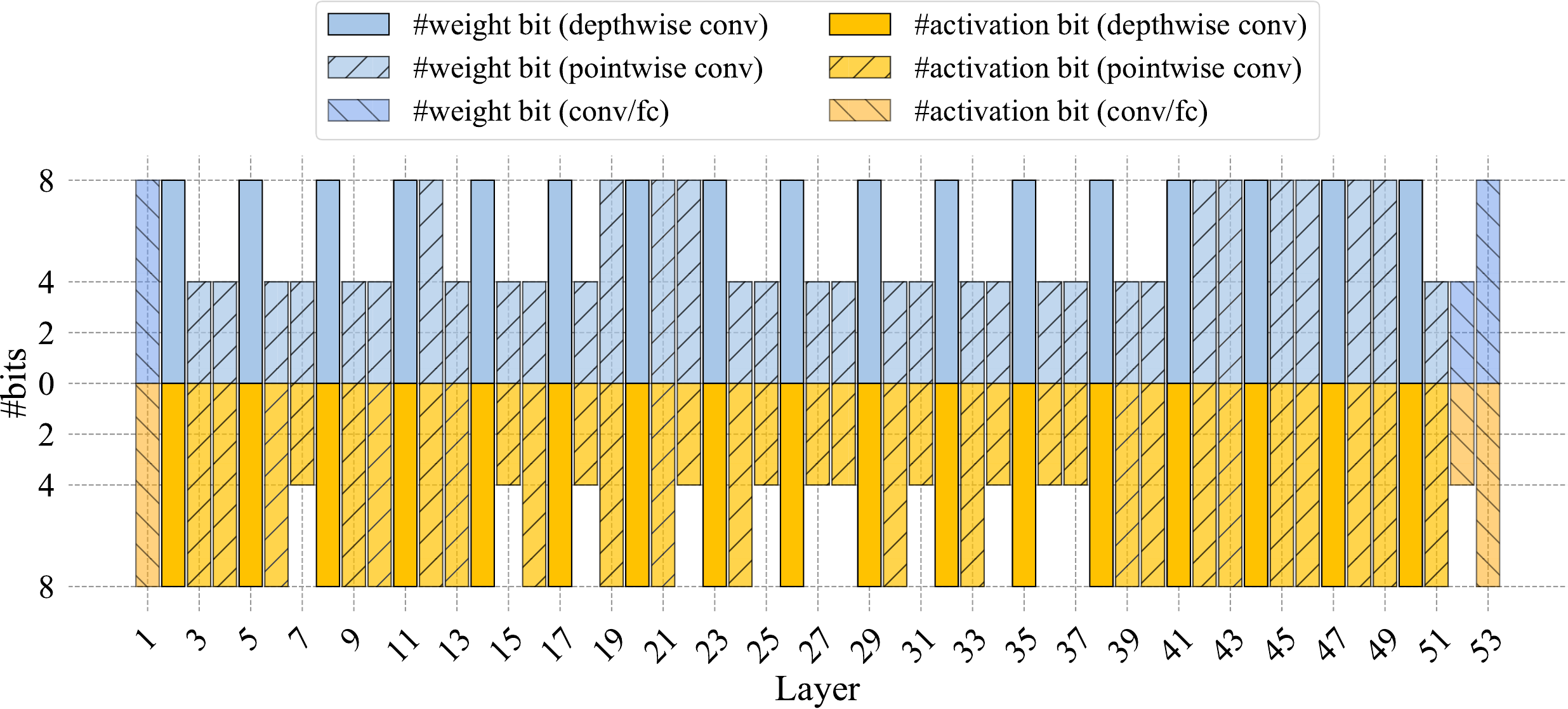}
        \label{fig:bitwidth_mbv2_10835}
    }
    \hspace{3mm}
    \subfigure[
    \methodshortname searched pruning rates of MobileNetV2.
    ]{
        \includegraphics[width=1.0\linewidth]{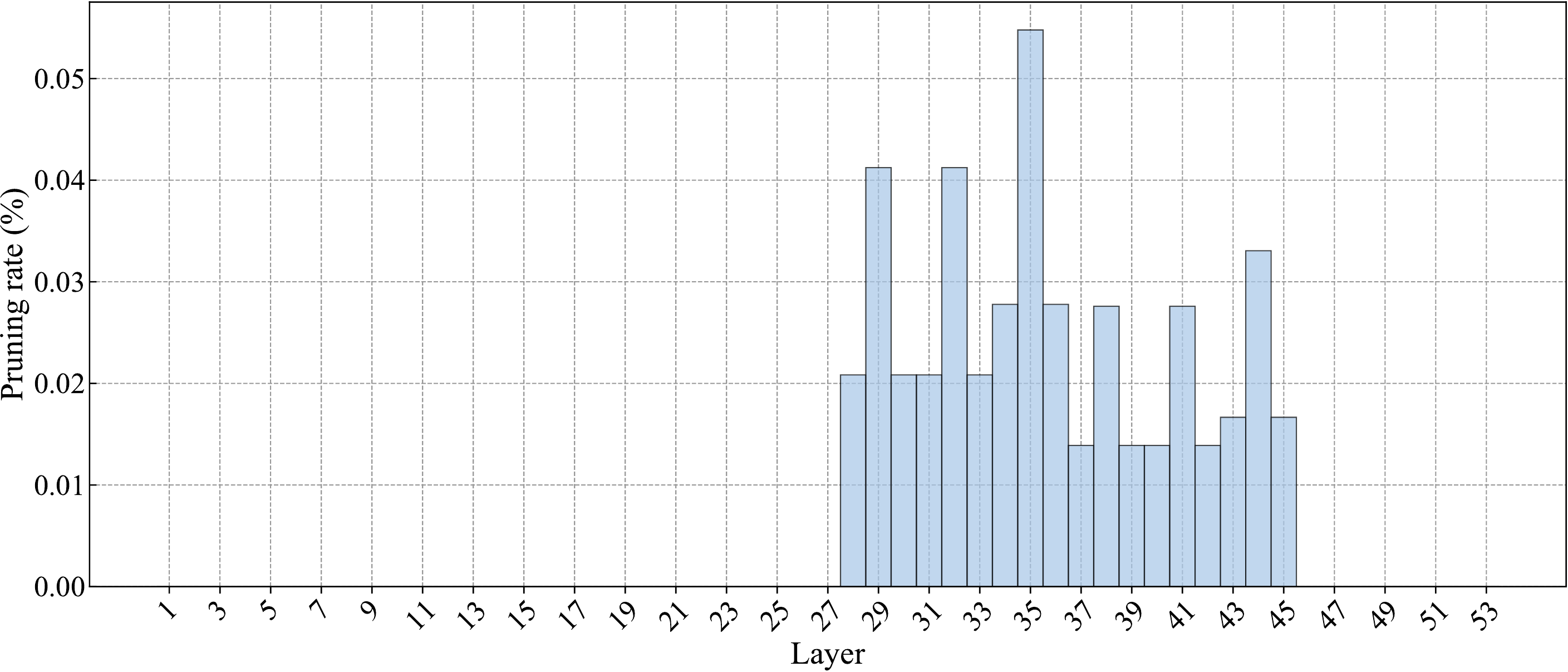}
        \label{fig:pruning_rate_mbv2_10835}
    }
    \caption{
    \methodshortname searched configurations of MobileNetV2 on ImageNet.
    The Top-1 accuracy, Top-5 accuracy and BOPs of the compressed MobileNetV2 are 71.4\%, 90.1\% and 10.8G, respectively.
    }
    \label{fig:config_mbv2_compact}
\end{figure*}

\end{document}